\newtheorem{convention}[theorem]{Convention}
\newtheorem{remark}[theorem]{Remark}
\newtheorem{mydef}[theorem]{Definition}
\newcommand{\CCOT}{\mathsf{CCOT}}
\newcommand{\COT}{\mathsf{COT}}
\newcommand{\UCOT}{\mathsf{UCOT}}
\newcommand{\W}{\mathsf{W}}
\newcommand{\CGW}{\mathsf{CGW}}
\newcommand{\GW}{\mathsf{GW}}
\newcommand{\UOT}{\mathsf{UOT}}
\newcommand{\Mm}{\mathbb{M}}
\newcommand{\Co}{\mathcal{C}}
\newcommand{\R}{\mathbb{R}}
\newcommand{\bsum}[2]{\sum\limits_{#1}^{#2}}
\newcommand{\define}[1]{\textbf{#1}}
\title{Conic Formulations of Transport Metrics for Unbalanced Measure Networks and Hypernetworks}
\author{
  Mary Chriselda Antony Oliver\thanks{Department of Applied Mathematics and Theoretical Physics, The University of Cambridge, Wilberforce Rd, CB3 OWA, Cambridge, UK
  (\email{mca52@cam.ac.uk}).}
  \and
  Emmanuel Hartman\thanks{ Department of Mathematics, University of Houston, Houston, USA
  (\email{ehartma2@cougarnet.uh.edu}).}
  \and
  Tom Needham\thanks{Mathematics Department, Florida State University, Academic Way, Tallahassee, FL, 32306, USA
  (\email{tneedham@fsu.edu}).}
}
\begin{document}
\maketitle
\begin{abstract}
The Gromov-Wasserstein (GW) variant of optimal transport, designed to compare probability densities defined over distinct metric spaces, has emerged as an important tool for the analysis of data with complex structures, such as ensembles of point clouds or networks. To overcome certain limitations, such as the restriction to comparisons of measures of equal mass and sensitivity to outliers, several unbalanced or partial transport relaxations of the GW distance have been introduced in recent literature. This paper focuses on the conic Gromov-Wasserstein (CGW) distance introduced by S\'{e}journ\'{e}, Vialard, and Peyr\'{e} \cite{NEURIPS2021}. We provide a novel formulation in terms of semi-couplings, and extend the framework beyond the metric measure space setting, to compare more general network and hypernetwork structures. With this new formulation, we establish several fundamental properties of the CGW metric, including its scaling behavior under dilation, variational convergence in the limit of vanishing mass variation, and comparison bounds with established optimal transport metrics. We further derive quantitative bounds that characterize the robustness of the CGW metric to perturbations in the underlying measures. The hypernetwork formulation of CGW admits a simple and provably convergent block coordinate ascent algorithm for its estimation. We demonstrate the computational tractability and scalability of our approach through experiments on synthetic and real-world high-dimensional and structured datasets.
\end{abstract}

\begin{keywords}
conic Gromov-Wasserstein, conic Co-Optimal Transport, Gromov-Wasserstein, Measure Networks, Measure Hypernetworks     
\end{keywords}
\begin{MSCcodes}
 49Q22, 49Q10, 60B05
\end{MSCcodes}

\medskip

\section{Introduction}

Optimal transport (OT) techniques have emerged as a fundamental tool in applied mathematics~\cite{Villani_2003,santambrogio2015optimal,
peyre2019computational}. Broadly speaking, OT theory encompasses methods for comparing probability distributions by determining the best way to allocate mass from one distribution to the other, where optimality is measured by a chosen objective function. Classical OT is defined in the setting of probability distributions over a fixed metric space, where the objective involves an integral of the metric function. The present paper is primarily concerned with the \emph{Gromov-Wasserstein} (GW) ~\cite{Memoli_2007,F_Memoli_2011} variant of OT, which allows comparison of distributions defined on distinct metric spaces, by replacing the linear objective of classical OT with a quadratic objective defined in terms of both of the relevant metrics. In particular, we focus on a recently introduced extension of the GW distance, namely, the \emph{conic Gromov-Wasserstein distance} (CGW) which allows comparisons between general positive measures (i.e., not necessarily probability distributions) on distinct metric spaces~\cite{NEURIPS2021}. 

In our study of the CGW distance, we provide a new formulation, leading to extensions which enable comparison in more general classes of objects. Our formulation allows us to derive  theoretical properties of the metric, including a precise characterization of how the classical GW distance arise as a variational limit of CGW distance, and a result which shows that the CGW distance is robust to noise. The novel formulation and extension also suggests an efficient numerical scheme for approximating CGW distance. 

With a view toward precisely describing the contributions of this paper, we now briefly discuss some details of the OT framework and its extensions. Let $(X,\mathsf{d}_X)$ be a Polish space. For $p \in [1,\infty]$, the \emph{$p$-Wasserstein distance} between Borel probability measures $\mu$ and $\nu$ is 
\begin{equation}\label{eqn:wasserstein_distance}
\W_p(\mu,\nu) \coloneqq \inf_{\pi \in \Pi(\mu,\nu)} \|\mathsf{d}_X\|_{\mathrm{L}^p(\pi)},
\end{equation}
where $\Pi(\mu,\nu)$ is the space of \emph{measure couplings} of $\mu$ and $\nu$ (see Section \ref{subsection:notation} for details). This family of metrics is canonical, due to its well-understood theoretical~\cite{Villani_2003} and computational~\cite{peyre2019computational} properties. However, it has some obvious shortcomings. First, it is unable to compare distributions with different total masses, which may be unnatural from the perspective of modeling in real applications. Second, due to its requirement of exactly coupling masses between $\mu$ and $\nu$, it is sensitive to noise in the form of outlier contamination of the measures. There has been significant interest in extensions of this framework which overcome these deficiencies, see~\cite{guittet2002extended,
figalli2010optimal,
caffarelli2010free,Laetitia_2020,raghvendra2024new} for more details. Of particular interest to this paper is the \emph{Wasserstein-Fisher-Rao} framework~\cite{chizat2018interpolating,chizat2018unbalanced,Liero_2015}, which defines a family of metrics interpolating between the Wasserstein distance and the Fisher-Rao metric from information theory, via a certain construction involving the metric cone over $X$.

Now, consider the GW setting of distinct metric spaces. Let $\mathcal{M}_X = (X,\mu_X,\mathsf{d}_X)$ and $\mathcal{M}_Y = (Y,\mu_Y,\mathsf{d}_Y)$ be \emph{metric measure spaces}; i.e., compact metric spaces endowed with fully supported Borel probability measures. For $p \in [1,\infty]$, the \emph{$p$-Gromov-Wasserstein distance} between $\mathcal{M}_X$ and $\mathcal{M}_Y$ is 
\begin{align}\label{eqn:gromov_wasserstein_distance}
\GW_p(\mathcal{M}_X,\mathcal{M}_Y) \coloneqq \frac{1}{2} \inf_{\pi \in \Pi(\mu_X,\mu_Y)} \|\mathsf{d}_X - \mathsf{d}_Y\|_{\mathrm{L}^p(\pi \otimes \pi)}.
\end{align}
Explicitly, for $p<\infty$ we have
\begin{align}
&\GW_p(\mathcal{M}_X,\mathcal{M}_Y) \\
& \quad \quad =\frac{1}{2} \inf_{\pi \in \Pi(\mu_X,\mu_Y)} 
\Biggl(
\int_{(X \times Y)^2} 
\bigl| \mathsf{d}_X(x,x') - \mathsf{d}_Y(y,y') \bigr|^p \; 
\mathrm{d}\pi(x,y) \, \mathrm{d}\pi(x',y') 
\Biggr)^{1/p}.
\end{align}
where $\mathsf{d}_X - \mathsf{d}_Y$ is considered as a function $(X \times Y) \times (X \times Y) \to \R$. Similar to the situation in classical OT, the GW distance is limited by its restriction to comparison of probability measures, and by its sensitivity to noise; several approaches have been introduced to mitigate these issues, e.g.,~\cite{Laetitia_2020,Fatras_2021,raghvendra2024new,Robust_GW,chhoa2024metric}. We focus on the \emph{conic Gromov-Wasserstein  distance} ($\CGW$)~\cite{NEURIPS2021}, which is the appropriate extension of the Wasserstein-Fisher-Rao construction to the GW setting, once again established via conic geometry. 

The structure of the paper and our main contributions are described below. 

\smallskip
\noindent {\bf Section \ref{sec:background}} introduces notational conventions that will be used throughout the rest of the paper, as well as some overarching background concepts from optimal transport.

\smallskip
\noindent {\bf Section \ref{sec:CGWD}} describes, in detail, the main construction to be studied in the paper, the \emph{conic Gromov-Wasserstein distance} ($\CGW$). In particular:
\begin{itemize}[leftmargin = *]
\item {\bf New Formulation.} We give an alternative formulation of the CGW distance in terms of \emph{semi-couplings}; this is Definition \ref{CGW_ours}, which is shown in Proposition \ref{prop:CGW_semi_coupling} to be equivalent to the original definition of~\cite{NEURIPS2021} (recalled in Definition \ref{def:CGW_Sejourne}). 
\item {\bf Generalized Setting.} In defining our version of CGW distance, we extend its purview beyond the space of metric measure spaces to the space of \emph{measure networks}, or Polish spaces endowed with arbitrary measurable kernels~\cite{Chowdhury_2019}. We show that the CGW distance defines a pseudometric on the space of these objects, and characterize the distance-zero equivalence relation, in Theorem \ref{thm:psuedometric}.
\end{itemize}

\smallskip
\noindent {\bf Section \ref{section:properties_CGW}} is concerned with theoretical properties of the CGW distance. We consider:
\begin{itemize}[leftmargin = *]
\item {\bf Scaling Properties.} Since the CGW distance is no longer constrained to comparisons of distributions with the same total mass, it is important to understand how the scaling of mass affects the distances. We collect several results which characterize this behavior in Section \ref{subsec:scaling}.
\item {\bf Variational Convergence.} Our definition of CGW distance includes a parameter $\delta$ which essentially controls the objective cost of rescaling masses when performing transport between distributions. We show in Theorem \ref{thm:Gamma_convergence} and Corollary \ref{cor:Gamma_convergence_to_GW} that as $\delta \to \infty$, the CGW distance converges to the classical GW distance, in a precise sense (via the language of $\Gamma$-convergence). This is analogous to \cite[Theorem 5.10]{chizat2018unbalanced}, which shows a similar convergence in the Wasserstein-Fisher-Rao setting.
\item {\bf Robustness Properties.} We prove in Theorem \ref{thm:robustness} and Corollary \ref{cor:CGW_robustness} that, unlike the classical GW distance (see Proposition \ref{prop:GW_not_robust}), the CGW distance is robust to noise in the input distributions. 
\end{itemize}

\smallskip
\noindent {\bf Section \ref{section:CCOOT}} further extends the CGW framework to a Co-Optimal transport formulation~\cite{vayer2020co, Chowdhury_2024}. While this extension is interesting in its own right, a major motivation is that it allows us to develop a new computational framework for approximating CGW distance.
\begin{itemize}[leftmargin = *]
\item {\bf Extension and Metric Properties.} Intuitively, the CGW distance is designed to compare data structures consisting of a kernel $\omega:X \times X \to \R$ defined over a measure space $(X,\mu)$. This section broadens the scope to compare kernels of the form $\omega: X \times Y \to \R$, defined over a pair of measure spaces $(X,\mu)$ and $(Y,\nu)$. Such a structure is referred to as a \emph{measure hypernetwork}, following the terminology of~\cite{Chowdhury_2024}. Theorem \ref{thm:psuedometric2} shows that the extended distance, referred to as the \emph{conic Co-Optimal Transport distance} ($\CCOT$), is a pseudometric on the space of measure hypernetworks.
\item {\bf Connection to CGW.} There is a natural embedding of the space of measure networks into the space of measure hypernetworks; we show in Theorem \ref{thm:CGW_CCOOT_equivalence} that (under certain conditions), the CCOT and CGW distances agree on the image of this embedding.
\item {\bf Numerical Algorithm.} The semi-coupling formulation leads to a new algorithm for numerically approximating the CGW distance between hypernetworks, which is inspired by the one developed in~\cite{bauer2022SRNF} for the Wasserstein-Fisher-Rao distance. This leads to the main narrative of the paper. We first develop an algorithm based on our new semi-coupling distance formulation. Combining this algorithm with the equivalence between CCOT and CGW presented above yields a novel method for approximating CGW distances. This combination of ideas provides the first computationally viable method for applying CGW distances to real-world datasets.
\end{itemize}

\smallskip
\noindent {\bf Section \ref{section:Numerical_Experiments}} illustrates the computational framework described above on numerical experiments on real and synthetic data. These show, in particular, that our new algorithm for computing CGW distance is substantially more efficient than the naive one presented in \cite{NEURIPS2021}. 

\section{Background on Optimal Transport}\label{sec:background}
This section collects background material which will be useful throughout the paper.

\subsection{Notation}\label{subsection:notation}
We begin by recalling standard terminology and fixing general conventions and notation.

\begin{itemize}[leftmargin=*]
\item Let $X$ and $Y$ be Polish spaces, i.e., separable, completely metrizable topological spaces. 
\item Metric spaces $(X,\mathsf{d}_X)$ will be assumed to be separable and complete.
\item Let $\mathbb{M}(X)$ denote the set of positive Radon measures on $X$. 
 \item Absolute continuity of a measure $\mu$ with respect to a measure $\nu$ is denoted $\mu \ll \nu$.
\item We use $\mathbb{P}(X)$ to denote the set of Borel probability measures on $X$.
 \item Let $\mu_X \in \mathbb{P}(X)$ be a probability measure and $T: X \to Y$ be a Borel-measurable function. The \define{pushforward} of $\mu_X$ by $T$ is the measure $T_\#\mu_X \in \mathbb{P}(Y)$ defined by $T_\#\mu_X(B)=\mu_X(T^{-1}(B))$, for all measurable sets $B \subset Y$. 
 \item We generically use $\operatorname{Pr}_0$ and $\operatorname{Pr}_1$ to denote projections from a product of two sets to its left and right factors, respectively. That is, given $X$ and $Y$, $\operatorname{Pr}_0:X \times Y \to X$ and $\operatorname{Pr}_1:X \times Y \to Y$ are the obvious maps.
 \item Given two probability measures $\mu_X \in \mathbb{P}(X)$ and $\mu_Y \in \mathbb{P}(Y)$, we denote the set of \define{couplings} as $\Pi(\mu_X,\mu_Y)$. That is, $\pi \in \Pi(\mu_X,\mu_Y)$ if $\pi \in \mathbb{P}(X \times Y)$ and $\operatorname{Pr}_0\#\pi=\mu_X$, $\operatorname{Pr}_1\#\pi=\mu_Y$.
\end{itemize}

\subsection{Conic Formulations for Unbalanced Optimal Transport}
As we described in the introduction, the classical Wasserstein distance \eqref{eqn:wasserstein_distance} between probability  distributions over a fixed metric space has been extended in several ways to allow comparison of measures which may have unequal total mass. This type of extension is generally referred to as \emph{unbalanced optimal transport}. We now describe the details of the particular approach taken in, e.g.,~\cite{chizat2018interpolating,chizat2018unbalanced,NEURIPS2021}.

In unbalanced optimal transport, \emph{cone spaces} offer a natural geometric setting to define and compute transport distances. Topologically, the \define{cone} over a metric space $(X,\mathsf{d}_X)$ is simply $(X\times\R^{\geq0})/(X\times\{0\})$~\cite{Burago_2001}. However, in the context of unbalanced mass transport, it is important to consider how the cone inherits a distance from the base space $X$. We will consider a general class of distances defined on the cone defined as follows.

\begin{mydef}[Cone Space]\label{def:cone_dist}
    Given a metric space $(X,\mathsf{d}_X)$ and $\delta\in \R^{>0}$ the \define{cone over $X$} with angle $\delta$, denoted $\Co_\delta[X]$, is defined to be the set $(X\times\R^{\geq0})/(X\times\{0\})$, equipped with a distance inherited from the metric on $X$ given by
    \begin{align}
        \mathsf{d}_{\Co_\delta[X]}:\Co_\delta[X]\times \Co_\delta[X]&\to \R^{\geq0}\\
        \mathsf{d}_{\Co_\delta[X]}([x,r],[y,s])^2&= 4\delta^2\left(r^2 + s^2 - 2rs\Omega\left(\frac{\mathsf{d}_X(x,y)}{2\delta}\right)\right) 
    \end{align}    
    where $\Omega:\R^{\geq0}\to [0,1]$ is a function satisfying the axioms   
    \begin{itemize}[leftmargin=*]
        \item $\Omega(0)=1$;
        \item for all $\varepsilon>0$, $\Omega(\varepsilon)<1$;
        \item $\Omega(\varepsilon + \varepsilon')\leq \Omega(\varepsilon)\Omega(\varepsilon')$.
    \end{itemize}
\end{mydef}

\begin{convention}
     For simplicity in our notation, if $\delta=1/2$ (a typical choice) we denote the cone and the associated distance by $(\Co[X],\mathsf{d}_{\Co[X]})$.  
     
     Also note that we suppress the dependence on the function $\Omega$ from the notation $\mathsf{d}_{\Co_\delta[X]}$. In general, various quantities defined below which depend on a choice of $\Omega$ will have this dependency suppressed from notation. On the other hand, the choice of $\delta$ is typically represented in the notation, as the value of $\delta$ will play a direct role in the analysis at various points.
\end{convention}

We note that $\mathsf{d}_{\Co[X]}$, in general, does not define a true metric since it may not satisfy the triangle inequality; however, it is always a positive, symmetric and definite. Another ingredient for comparing measures of different total mass is to relax the classical notion of couplings between measures. Instead of requiring both marginals of a single joint distribution to match the input measures, we fix only one marginal of two separate joint distributions. This is made precise in the following definition.

\begin{mydef}[Semi-couplings \cite{chizat2018unbalanced}]\label{def:semi_coupling}
Let $(X,\mu_X)$, $(Y,\mu_Y)$ be two measure spaces. The set of all \define{semi-couplings} from $\mu_X$ to $\mu_Y$ is defined as,
\begin{equation*}\label{eqn:semi_couplings_def}
    \Gamma(\mu_X,\mu_Y)=\left\{ (\gamma_0,\gamma_1)\in \Mm(X\times Y)^2| (\operatorname{Pr}_0)_\#\gamma_0=\mu_X,(\operatorname{Pr}_1)_\#\gamma_1=\mu_Y\right\}.
\end{equation*}
where $\operatorname{Pr}_0$ and $\operatorname{Pr}_1$ are the projections onto the first and second factor respectively.
\end{mydef}  
Using cone geometry and semi-couplings, we can now formulate a generalized transport cost that accounts for both spatial displacement and mass variation defining a distance between measures with unequal mass.
\begin{mydef}[conic Formulation of Unbalanced Optimal Transport \cite{Liero_2015}]\label{def:WFR}
Given a metric space $(X,\mathsf{d}_X)$, two Radon measures $\mu, \nu \in \Mm(X)$, $\delta\in\R^{>0}$, and choice of cone metric $\mathsf{d}_{\Co_\delta[X]}$,  the \define{unbalanced optimal transport distance} between $\mu$ and $\nu$ is defined as, 
\begin{equation*}
    \UOT_\delta(\mu,\nu)^2 = \inf\limits_{(\gamma_0,\gamma_1)\in \Gamma(\mu,\nu)} \mathbf{J}_\delta (\gamma_0,\gamma_1),
\end{equation*}
where $\gamma_0,\gamma_1 \ll \gamma$ such that,
\begin{equation}\label{eqn:UOT_loss}
\mathbf{J}_\delta(\gamma_0,\gamma_1) = \int_{X\times X}\mathsf{d}_{\Co_\delta[X]}\left(\left[x,\sqrt{\frac{\textnormal{d}\gamma_0}{\textnormal{d}\gamma}(x,y)}\right],\left[y,\sqrt{\frac{\textnormal{d}\gamma_1}{\textnormal{d}\gamma}(x,y)}\right]\right)^2 \textnormal{d}\gamma.
\end{equation}
\end{mydef} 

This is well defined, as the value of \eqref{eqn:UOT_loss} is 1-homogeneous with respect to the mass variable. We note that, if $\mathsf{d}_{\Co_\delta[X]}$ satisfies the triangle inequality, then $\UOT_\delta$ defines a metric for Radon measures on $X$. Different choices of the function $\Omega$ yield different notions of unbalanced optimal transport, some of which correspond to well-known metrics. Below we provide two examples of such UOT metrics.
\begin{itemize}[leftmargin=*]
\item \textbf{Wasserstein-Fisher-Rao (or commonly known as Hellinger-Kantorovich) Distance.}
Let $\Omega(z)= \overline{\cos}\left(z\right):=\cos(\min(z,\pi/2))$. Then the corresponding cone distance is given by
\begin{equation}\label{eqn:HK_kernel}
        \mathsf{d}_{\Co_\delta[X]}([x,r],[y,s])^2 = 4\delta^2\left(r^2 + s^2 - 2rs \overline{\cos}\left(  \frac{\mathsf{d}_X(x,y)}{2\delta} \right) \right).
\end{equation}
This defines a metric on $\Co_\delta[X]$ \cite[Definition~3.6.16]{Burago_2001}. Moreover, the associated unbalanced optimal transport distance is the Wasserstein-Fisher-Rao (or Hellinger Kantorovich) distance which was introduced independently by \cite{Liero_2015} and \cite{chizat2018unbalanced}.
\item \textbf{Gaussian Hellinger Distance.} 
Let $\Omega(z) = \exp(-z^2)$. Then the cone distance takes the following form
\begin{equation}\label{eqn:GH_kernel}
        \mathsf{d}_{\Co_\delta[X]}([x,r],[y,s])^2 = 4\delta^2 \left( r^2 + s^2 - 2rs \exp\left( -\frac{\mathsf{d}_X(x,y)^2}{4\delta^2} \right) \right).
\end{equation}
Similarly, this gives a metric on the cone and the associated UOT distance was proposed in \cite{Liero_2015}.
\end{itemize}

\subsection{Gromov-Wasserstein Distances} The Wasserstein distance was adapted by M\'{e}moli~\cite{Memoli_2007,F_Memoli_2011} to allow for comparisons between probability measures defined over distinct metric spaces, via the formula \eqref{eqn:gromov_wasserstein_distance}. In fact, this formula makes sense even when the functions $\mathsf{d}_X$ and $\mathsf{d}_Y$ are not assumed to be metrics. This led to a generalized notion of Gromov-Wasserstein distance, introduced in~\cite{Chowdhury_2019}, whose details we now recall. 

\begin{mydef}[Measure Network~\cite{Chowdhury_2019}]\label{def:measure_network} A \define{measure network} is of the form $\mathcal{N}_X=(X,\mu_X,\omega_X)$, where $X$ is a Polish space, $\mu_X \in \Mm(X)$ with $\mu_X(X) < \infty$, and $\omega_X:X\times X \to \R^{\geq 0}$ is a measurable, bounded function. If $\mu_X(X) = 1$, we refer to $\mathcal{N}_X$ as a \define{probability measure network}. 
\end{mydef}

In the case that $\omega_X$ is a metric on $X$, the above recovers the notion of a \emph{metric measure space}. On the other hand, this more general setting allows for kernels which appear naturally in applications; e.g., $X$ is the node set of a graph and $\omega_X$ is a (weighted) adjacency function.

The definition of Gromov-Wasserstein distance extends without change to compare probability measure networks. That is, if $\mathcal{N}_X$ and $\mathcal{N}_Y$ are probability measure networks, the \define{$p$-Gromov-Wasserstein (GW) distance} between them is given by 
\begin{align*}
&\GW_p(\mathcal{N}_X,\mathcal{N}_Y) \coloneqq \frac{1}{2} \inf_{\pi \in \Pi(\mu_X,\mu_Y)} \|\omega_X - \omega_Y\|_{\mathrm{L}^p(\pi \otimes \pi)}\\
& \qquad \qquad = \frac{1}{2} \inf_{\pi \in \Pi(\mu_X,\mu_Y)} \left(\int_{(X \times Y)^2} |\omega_X(x,x') - \omega_Y(y,y')|^p \mathrm{d}\pi(x,y) \mathrm{d}\pi(x',y') \right)^{1/p},
\end{align*}
where the integral expression is valid in the $p < \infty$ case. It is shown in \cite{Chowdhury_2019} that $\GW_p$ still defines a pseudometric on the space of probability measure networks, and the distance-zero equivalence relation is completely characterized (cf.~Theorem \ref{thm:psuedometric} below). 

\subsection{Conic Gromov-Wasserstein Distance}

For the rest of the paper, a \define{metric measure space (mm-space)} $\mathcal{M}_X = (X,\mu_X,\mathsf{d}_X)$ is allowed to have arbitrary (non-negative, finite) total mass $\mu_X(X)$. Analogous to the conic unbalanced optimal transport constructions described above, a similar trick can be applied to the GW distance to extend it to a distance between mm-spaces with distinct total masses. Such a construction was introduced in~\cite{NEURIPS2021}, and we recall the definition here.

\begin{mydef}[conic Gromov-Wasserstein \cite{NEURIPS2021}]\label{def:CGW_Sejourne}
Let $\mathcal{M}_X=(X,\mu_X,\mathsf{d}_X)$ and $\mathcal{M}_Y=(Y,\mu_Y,\mathsf{d}_Y)$ be mm-spaces and let $\mathsf{d}_{\Co[\R]}$ be a metric on the ($\delta = \frac{1}{2}$) cone over $\R$. The  \define{S\'ejourn\'e-Vialard-Peyr\'e conic Gromov-Wasserstein distance} is defined as
\begin{equation*}
\CGW_{\mathrm{SVP}}(\mathcal{M}_X,\mathcal{M}_Y) \coloneqq \inf\limits_{\alpha\in\mathrm{A}(\mu_X,\mu_Y)}\mathbf{K}(\alpha)^{1/2}, 
\end{equation*} 
where:
\begin{itemize}[leftmargin=*]
    \item $\mathrm{A}(\mu_X,\mu_Y)$ is the space of measures 
    \begin{equation}\label{eq-const-conic}
    \begin{split}
\mathrm{A}(\mu_X,\mu_Y) \coloneqq &\left\{
\alpha\in\Mm(\Co[X]\times\Co[Y]) \mid 
 \int_{\R_+} r^2 \,\textnormal{d}(\operatorname{Pr}_0)_\# \alpha (\cdot,r)=\mu_X,\right.\\
&\hspace{2in} \left.\int_{\R_+} s^2 \,\textnormal{d}(\operatorname{Pr}_1)_\# \alpha(\cdot,s)=\mu_Y\right\};
\end{split}
\end{equation}
\item the \define{SVP loss} $\mathbf{K}:\mathrm{A}(\mu_X,\mu_Y) \to \R$ is defined by 
\begin{align}
 \mathbf{K}(\alpha)\coloneqq &\int_{(\Co[X]\times \Co[Y])^2}  \mathsf{d}_{\Co[\R]}\left([\mathsf{d}_X(x,x'), rr'],[\mathsf{d}_Y(y,y'), ss']\right)^2 \label{eqn:def_cgw} \\
 &\hspace{2in} \textnormal{d}\alpha([x,r], [y,s])\textnormal{d}\alpha([x',r'], [y',s']). \nonumber
\end{align}
\end{itemize}
\end{mydef}

\begin{remark}[Intuition for the cone over $\mathbb{R}$]\label{remark:intuition_cone_over_R}
The conic Gromov--Wasserstein (CGW) distance can be interpreted as lifting measures to a cone over \(\mathbb{R}\). Each point \(x \in X\) with mass \(\mu(x)\) is assigned a radial coordinate \(r = \sqrt{\mu(x)} \ge 0\), so that larger masses lie farther from the cone apex. A pair \((x,x')\) is then represented by \((\mathsf d_X(x,x'), rr')\), where \(\mathsf d_X(x,x')\) captures geometry and \(rr'\) encodes mass interaction; similarly for pairs in \(Y\). The CGW functional therefore compares these lifted pairwise representations. Unlike classical GW, the interaction term \(rr'\) alters the comparison cost, so CGW generally differs from GW. The lifting is performed over \(\mathbb{R}\), rather than over \(X\) or \(Y\), because the functional depends only on pairwise distances and mass interactions. From this viewpoint, the weighted projection operators in~\eqref{eq-const-conic} arise naturally from the quadratic radial representation of measures; see, e.g.,~\cite{Liero_2015, friesecke2021barycenters}. This perspective clarifies the form of~\eqref{eqn:def_cgw} and motivates Definition~\ref{CGW_ours}, where the comparison is carried out on lifted pairwise representations while preserving the underlying relational structure.
\end{remark}

Observe that, although it also relies on cone metrics, this definition has a rather different flavor than the UOT distance construction in Definition \ref{def:WFR}. The first goal of this paper is to reformulate CGW distance in terms of semi-couplings. In doing so, we also extend it to the measure network setting.

\section{Conic Gromov-Wasserstein Distance for Unbalanced Measure Networks}\label{sec:CGWD}
In this section, we explore a semi-coupling formulation for an unbalanced variant of the Gromov-Wasserstein distance within the general framework of measure networks, analogous to the formulation of the Wasserstein-Fisher-Rao distance described in  \cite{chizat2018unbalanced,chizat2018interpolating} and stated in Definition \ref{def:semi_coupling}. We, in turn, show that a special case of our formulation is equivalent to the conic Gromov-Wasserstein distance introduced in~\cite{NEURIPS2021}.

\subsection{Semi-Coupling Formulation of Conic GW Distance}\label{subsection:conic_formulation_setting}

The main construction that we study in this paper is defined as follows. 

\begin{mydef}[Semi-Coupling Formulation of Conic Gromov-Wasserstein Distance]\label{CGW_ours}
Let $\mathcal{N}_X= (X,\mu_X,\omega_X)$ and $\mathcal{N}_Y=(Y,\mu_Y,\omega_Y)$ be measure networks, $\delta$ a positive real number, and $\mathsf{d}_{\Co_\delta[\R]}$ a metric on the cone over $\R$, as in Definition \ref{def:cone_dist}. We define the associated \define{conic Gromov-Wasserstein distance} ($\CGW$) as
\begin{equation*}
\CGW_\delta(\mathcal{N}_X,\mathcal{N}_Y) \coloneqq \inf\limits_{(\gamma_0,\gamma_1)\in \Gamma(\mu_X,\mu_Y)} \mathbf{H}_\delta (\gamma_0,\gamma_1)^{1/2},
\end{equation*}
where:
\begin{itemize}[leftmargin=*]
\item $\Gamma(\mu_X,\mu_Y)$ is the space of semi-couplings, as in Definition \ref{def:semi_coupling};
    \item the \define{$\CGW$ loss} $\mathbf{H}_\delta: \Gamma(\mu_X,\mu_Y) \to \R$ is defined by 
\begin{equation}\label{eqn:H_delta}
\mathbf{H}_\delta (\gamma_0,\gamma_1) =\int_{(X\times Y)^2}\mathsf{d}_{\Co_\delta[\R]}\left(p^{\gamma_0,\gamma}_X(x,y,x',y'),p^{\gamma_1,\gamma}_Y(x,y,x',y')\right)^2\textnormal{d}\gamma(x,y)\textnormal{d}\gamma(x',y'),
\end{equation}
with $p^{\gamma_0,\gamma}_X, p^{\gamma_1,\gamma}_Y: X \times Y \times X \times Y \to \Co_\delta[\R]$ the functions defined by
\[
    p^{\gamma_0,\gamma}_X(x,y,x',y') = \left[\omega_X(x,x'),\sqrt{\frac{\textnormal{d}\gamma_0}{\textnormal{d}\gamma}(x,y)\frac{\textnormal{d}\gamma_0}{\textnormal{d}\gamma}(x',y')}\right]
\]
and
\[
    p^{\gamma_1,\gamma}_Y(x,y,x',y') = \left[\omega_Y(y,y'),\sqrt{\frac{\textnormal{d}\gamma_1}{\textnormal{d}\gamma}(x,y)\frac{\textnormal{d}\gamma_1}{\textnormal{d}\gamma}(x',y')}\right]
\]
where the square-root of the mass is motivated by Remark \ref{remark:intuition_cone_over_R};
    \item $\gamma$ is any reference measure such that $\gamma_0,\gamma_1 \ll\gamma$. The value of $\mathbf{H}_\delta(\gamma_0,\gamma_1)$ does not depend on the choice of $\gamma$, due to 1-homogeneity (cf. \cite[Definition 3.3]{chizat2018unbalanced}). 
\end{itemize}
\end{mydef} 

\begin{figure}
    \centering
    \includegraphics[width=0.95\linewidth]{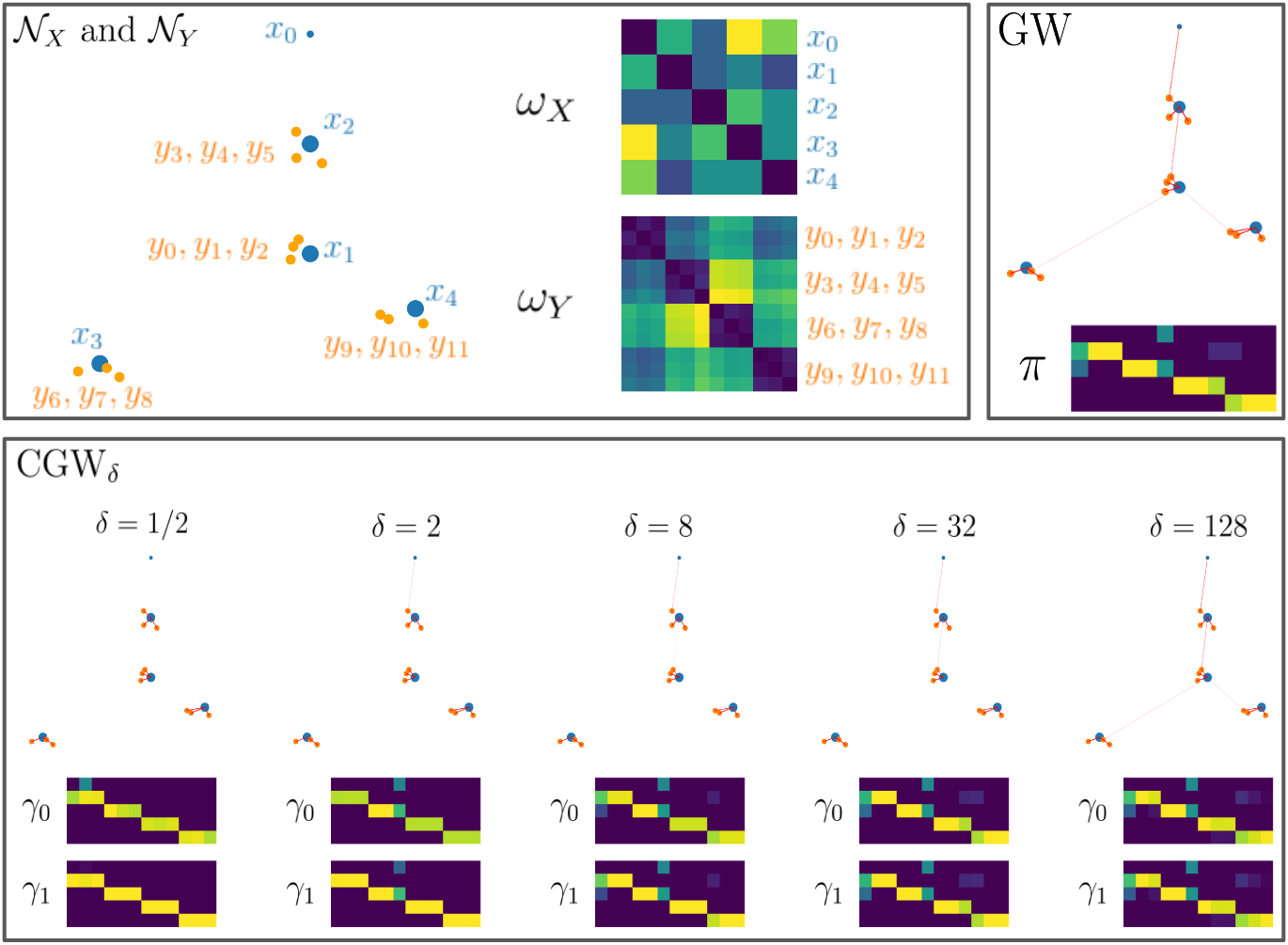}
    \caption{Example of matching probability networks with the $\GW_2$ distance and the $\CGW_\delta$ distance for increasing values of $\delta$. We visualize the matching with lines between the support points of $\mathcal{N}_X$ (blue) and $\mathcal{N}_Y$ (orange) with the opacity of the lines determined by $\pi$ and $\frac{\gamma_0}{\gamma} \cdot \frac{\gamma_1}{\gamma}$ respectively. The $i$th row of each semi-coupling corresponds to the $x_i$ and the $j$th column corresponds to $y_j$. For small values of $\delta$, note that $\gamma_0$ and $\gamma_1$ do not agree. In particular, $\gamma_0$ transports mass from $x_0$ to an arbitrary target point where the corresponding entry of $\gamma_1$ is negligible. This effectively destroys the mass of $x_0$ which is expensive to transport. In contrast, for large values of $\delta$, the outlier is matched to the same orange point as in the GW distance. This illustrates the $\Gamma$-convergence of CGW to GW distance as $\delta \to \infty$ (see Section ~\ref{sec:Variational_Convergence} for more details).}
    \label{fig:matching}
\end{figure}

\begin{convention}
    For the remainder of the paper, unless stated otherwise, we consider a fixed but arbitrary cone metric $\mathsf{d}_{\Co_\delta[\R]}$, so that we can avoid reference to the dependence on this choice when discussing the CGW distance.
\end{convention}

\begin{convention}\label{conv:simplified_integrand}
    Below, for brevity, we denote by $\mathsf{d}_{\Co_\delta[\R]}\left(\omega_X,\frac{\textnormal{d}\gamma_0}{\textnormal{d}\gamma},\omega_Y,\frac{\textnormal{d}\gamma_1}{\textnormal{d}\gamma}\right)$ the quantity
\begin{equation*}
\mathsf{d}_{\Co_\delta[\R]}\left(\left[\omega_X(x,x'),\sqrt{\frac{\textnormal{d}\gamma_0}{\textnormal{d}\gamma}(x,y)\frac{\textnormal{d}\gamma_0}{\textnormal{d}\gamma}(x',y')}\right],\left[\omega_Y(y,y'),\sqrt{\frac{\textnormal{d}\gamma_1}{\textnormal{d}\gamma}(x,y)\frac{\textnormal{d}\gamma_1}{\textnormal{d}\gamma}(x',y')}\right]\right).   
\end{equation*}
In particular, this notation suppresses function arguments, and will be used when doing so doesn't cause confusion.  With this convention, the CGW loss can be expressed as 
\[
\mathbf{H}_\delta(\gamma_0,\gamma_1) = \int_{(X \times Y)^2} \mathsf{d}_{\Co_\delta[\R]}\left(\omega_X,\frac{\textnormal{d}\gamma_0}{\textnormal{d}\gamma},\omega_Y,\frac{\textnormal{d}\gamma_1}{\textnormal{d}\gamma}\right)^2  {\textnormal{d}\gamma}{\textnormal{d}\gamma}.
\]
\end{convention}

\subsection{Metric Properties}
A certain equivalence relation for measure networks, called \emph{weak isomorphism}, was introduced by Chowdhury and M\'{e}moli in \cite{Chowdhury_2019}, and an appropriate generalization will be useful in the conic setting. We give the definition below in the context of measure networks without any mass constraint, whereas \cite{Chowdhury_2019} specifically considered \emph{probability} measure networks; this generalization is only superficial.

\begin{mydef}[Weak Isomorphism]\label{def:wisomorphism}
 Measure networks $\mathcal{N}_X=(X,\mu_X,\omega_X)$ and $\mathcal{N}_Y=(Y,\mu_Y,\omega_Y)$ are called \define{weakly isomorphic} if there exists a Polish space $Z$ equipped with a Borel measure $\mu_Z$ and measurable maps $f:Z\to X$ and $g:Z\to Y$ such that the following holds:
 \begin{itemize}[leftmargin=*]
        \item $f_\#\mu_Z=\mu_X$,
        \item $g_\#\mu_Z=\mu_Y$,
        \item and $\|f^\#\omega_X-g^\#\omega_Y\|_{\textnormal{L}^\infty(\mu_Z \otimes \mu_Z)} = 0$.
    \end{itemize}
where $f^\#\omega_X: Z \times Z \to \R$ is the pullback weight function  $(z,z') \mapsto \omega_X(f(z),f(z'))$, and  $g^\#\omega_Y$ is defined analogously.
\end{mydef}

A key result of Chowdhury and M\'{e}moli is that the Gromov-Wasserstein distance $\GW_p$ defines a metric on the space of probability measure networks, considered up to weak isomorphism~\cite[Theorems 2.3 and 2.4]{Chowdhury_2019}. The following result for conic Gromov-Wasserstein is analogous.

\begin{theorem}\label{thm:psuedometric}
    The conic Gromov-Wasserstein distance $\CGW_{\delta}$ is a metric on the space of measure networks, considered up to weak isomorphism.
\end{theorem}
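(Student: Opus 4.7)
The plan is to establish the three axioms of a metric on weak-isomorphism classes: symmetry, the triangle inequality, and the identity of indiscernibles. Non-negativity of $\CGW_\delta$ is immediate from the definition, and finiteness follows from the boundedness of $\omega_X$, $\omega_Y$ together with the cone distance being bounded on bounded inputs. Symmetry is also direct: given $(\gamma_0,\gamma_1)\in\Gamma(\mu_X,\mu_Y)$, let $\iota:X\times Y\to Y\times X$ denote the swap; then $(\iota_\#\gamma_1,\iota_\#\gamma_0)\in\Gamma(\mu_Y,\mu_X)$, and the integrand in $\mathbf{H}_\delta$ is invariant under the simultaneous swap of the two cone coordinates thanks to the symmetry of $\mathsf{d}_{\Co_\delta[\R]}$.

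For the triangle inequality, the plan is to develop a semi-coupling gluing lemma in the spirit of the one used for unbalanced optimal transport in Chizat--Vialard--Schmitzer--Peyr\'e. Given $\mathcal{N}_X,\mathcal{N}_Y,\mathcal{N}_Z$ and near-optimal semi-couplings $(\gamma_0^{XY},\gamma_1^{XY})\in\Gamma(\mu_X,\mu_Y)$ and $(\gamma_0^{YZ},\gamma_1^{YZ})\in\Gamma(\mu_Y,\mu_Z)$, I would choose reference measures $\gamma^{XY}\ll\gamma_0^{XY},\gamma_1^{XY}$ on $X\times Y$ and $\gamma^{YZ}\ll\gamma_0^{YZ},\gamma_1^{YZ}$ on $Y\times Z$, disintegrate them with respect to a common measure on $Y$ (using that both $\gamma_1^{XY}$ and $\gamma_0^{YZ}$ have $Y$-marginal equal to $\mu_Y$), and then build a joint reference measure on $X\times Y\times Z$ via a conditional product. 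Projecting onto $X\times Z$ yields candidate semi-couplings $(\gamma_0^{XZ},\gamma_1^{XZ})\in\Gamma(\mu_X,\mu_Z)$ whose density quotients can be arranged to factor compatibly through the $Y$ variable. Applying the triangle inequality for $\mathsf{d}_{\Co_\delta[\R]}$ pointwise and then Minkowski's inequality for the $L^2$ norm on the tensor-product reference measure gives $\mathbf{H}_\delta(\gamma_0^{XZ},\gamma_1^{XZ})^{1/2}\le\mathbf{H}_\delta(\gamma_0^{XY},\gamma_1^{XY})^{1/2}+\mathbf{H}_\delta(\gamma_0^{YZ},\gamma_1^{YZ})^{1/2}$, from which the inequality follows by taking the infimum.

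For the identity of indiscernibles, the two implications are handled separately. If $\mathcal{N}_X$ and $\mathcal{N}_Y$ are weakly isomorphic via $(Z,\mu_Z,f,g)$, then the choice $\gamma_0=\gamma_1=(f,g)_\#\mu_Z$ lies in $\Gamma(\mu_X,\mu_Y)$ with $\mathrm{d}\gamma_i/\mathrm{d}\gamma\equiv1$, and the integrand reduces to $\mathsf{d}_{\Co_\delta[\R]}([\omega_X(x,x'),1],[\omega_Y(y,y'),1])$, which vanishes $\mu_Z\otimes\mu_Z$-a.e.\ by the pullback condition $\|f^\#\omega_X-g^\#\omega_Y\|_{L^\infty(\mu_Z\otimes\mu_Z)}=0$, so $\CGW_\delta(\mathcal{N}_X,\mathcal{N}_Y)=0$. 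Conversely, assuming $\CGW_\delta(\mathcal{N}_X,\mathcal{N}_Y)=0$, I would first establish existence of an optimal semi-coupling via weak-$*$ compactness of $\Gamma(\mu_X,\mu_Y)$ and lower semicontinuity of $\mathbf{H}_\delta$ (which rests on the continuity properties of $\mathsf{d}_{\Co_\delta[\R]}$ and its $1$-homogeneity), then use that $\mathsf{d}_{\Co_\delta[\R]}([a,r],[b,s])=0$ with $r,s>0$ forces $r=s$ and $a=b$, so $\omega_X(x,x')=\omega_Y(y,y')$ must hold on the support of $\gamma\otimes\gamma$. From there the common space $Z:=X\times Y$ with measure $\gamma_0$ (or $\gamma_1$, after identifying densities) together with the projections serves as a weak-isomorphism witness, along the lines of \cite{Chowdhury_2019}.

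The main obstacle will be the gluing step for the triangle inequality. Unlike the balanced GW case, $\gamma_0^{XY}$ and $\gamma_0^{YZ}$ need not have any prescribed relation on $Y$, and the matching marginals occur between $\gamma_1^{XY}$ and $\gamma_0^{YZ}$; coordinating the disintegration with the density-quotient structure $\mathrm{d}\gamma_i/\mathrm{d}\gamma$ that enters the cone coordinates, while keeping the Minkowski estimate sharp enough to yield the clean additive bound, is the most delicate bookkeeping. A secondary technical point is verifying lower semicontinuity of $\mathbf{H}_\delta$ under a convergence notion strong enough to accommodate the nonlinear dependence on the Radon--Nikodym derivatives, for which the perspective-function reformulation of the cone kernel will be useful.
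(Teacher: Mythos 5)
Your proposal follows essentially the same route as the paper: symmetry by swapping the semi-coupling components, the triangle inequality via the semi-coupling gluing lemma of Chizat et al.\ (which the paper simply cites rather than re-derives), existence of optimizers via weak-$*$ compactness and lower semicontinuity of $\mathbf{H}_\delta$, and the vanishing characterization by forcing the radial cone coordinates to agree, which the paper phrases as the equality case of AM--GM to conclude $\gamma_0=\gamma_1$ --- the step you compress into ``after identifying densities.'' There is no substantive difference in approach.
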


The proof uses the following lemmas.

\begin{lemma}\label{lemma:lsc_functional}
    Given two compact measure networks $\mathcal{N}_X= (X,\mu_X,\omega_X)$ and $\mathcal{N}_Y=(Y,\mu_Y,\omega_Y)$ there exists an optimal semi-coupling $(\gamma_0^*,\gamma_1^*) \in \Gamma(\mu_X,\mu_Y)$ that attains $\CGW_\delta(\mathcal{N}_X,\mathcal{N}_Y)$.    
\end{lemma}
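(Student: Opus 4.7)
The strategy is the direct method of the calculus of variations: verify that $\Gamma(\mu_X,\mu_Y)$ is weak-$*$ compact and that $\mathbf{H}_\delta$ is weak-$*$ lower semi-continuous on it, so that any minimizing sequence admits a convergent subsequence whose limit attains the infimum.

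For compactness, since $X$ and $Y$ are compact Polish, so is $X\times Y$, and every $(\gamma_0,\gamma_1)\in\Gamma(\mu_X,\mu_Y)$ has prescribed total masses $\gamma_0(X\times Y)=\mu_X(X)$ and $\gamma_1(X\times Y)=\mu_Y(Y)$. By Banach-Alaoglu, closed balls in $\Mm(X\times Y)$ are sequentially weak-$*$ compact, and the marginal conditions $(\operatorname{Pr}_0)_\#\gamma_0=\mu_X$ and $(\operatorname{Pr}_1)_\#\gamma_1=\mu_Y$ pass to weak-$*$ limits by continuity of $\operatorname{Pr}_0,\operatorname{Pr}_1$. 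A minimizing sequence therefore extracts to some $(\gamma_0^*,\gamma_1^*)\in\Gamma(\mu_X,\mu_Y)$.

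The crux of the argument is weak-$*$ lower semi-continuity of $\mathbf{H}_\delta$. Expanding the cone metric of Definition~\ref{def:cone_dist} gives
\[
\mathsf{d}_{\Co_\delta[\R]}([a,r],[b,s])^2=4\delta^2\bigl(r^2+s^2-2rs\,\Omega(|a-b|/2\delta)\bigr),
\]
and using $\int \tfrac{\mathrm{d}\gamma_0}{\mathrm{d}\gamma}\,\mathrm{d}\gamma=\gamma_0(X\times Y)=\mu_X(X)$ (and the analogous identity for $\gamma_1$), a direct computation rewrites the CGW loss as
\[
\mathbf{H}_\delta(\gamma_0,\gamma_1)=4\delta^2\bigl(\mu_X(X)^2+\mu_Y(Y)^2-2F(\gamma_0,\gamma_1)\bigr),
\]
where $F$ is the Hellinger-type bilinear form
\[
F(\gamma_0,\gamma_1)=\int_{(X\times Y)^2}\Omega\!\left(\tfrac{|\omega_X(x,x')-\omega_Y(y,y')|}{2\delta}\right)\sqrt{\tfrac{\mathrm{d}\gamma_0}{\mathrm{d}\gamma}(x,y)\tfrac{\mathrm{d}\gamma_1}{\mathrm{d}\gamma}(x,y)\tfrac{\mathrm{d}\gamma_0}{\mathrm{d}\gamma}(x',y')\tfrac{\mathrm{d}\gamma_1}{\mathrm{d}\gamma}(x',y')}\,\mathrm{d}\gamma\,\mathrm{d}\gamma'.
\]
Since the first two terms are constants determined by the fixed marginals, lower semi-continuity of $\mathbf{H}_\delta$ reduces to \emph{upper} semi-continuity of $F$ under weak-$*$ convergence.

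To prove this, I use a perspective representation coming from Young's inequality $\sqrt{ab}=\inf_{\phi>0}\bigl(\tfrac{a}{2\phi}+\tfrac{\phi b}{2}\bigr)$ (with equality at $\phi=\sqrt{a/b}$). Applied pointwise with $a=\tfrac{\mathrm{d}\gamma_0}{\mathrm{d}\gamma}(x,y)\tfrac{\mathrm{d}\gamma_0}{\mathrm{d}\gamma}(x',y')$ and $b=\tfrac{\mathrm{d}\gamma_1}{\mathrm{d}\gamma}(x,y)\tfrac{\mathrm{d}\gamma_1}{\mathrm{d}\gamma}(x',y')$, this yields
\[
F(\gamma_0,\gamma_1)=\inf_{\phi>0}\left[\tfrac{1}{2}\int \tfrac{\Omega}{\phi}\,\mathrm{d}(\gamma_0\otimes\gamma_0)+\tfrac{1}{2}\int \Omega\,\phi\,\mathrm{d}(\gamma_1\otimes\gamma_1)\right].
\]
For each fixed admissible $\phi$, the bracket is a continuous linear functional of $(\gamma_0\otimes\gamma_0,\gamma_1\otimes\gamma_1)$, and weak-$*$ convergence of $\gamma_i^n$ implies weak-$*$ convergence of the tensor products $\gamma_i^n\otimes\gamma_i^n$; hence $F$ is a pointwise infimum of weak-$*$ continuous functionals and so is weak-$*$ upper semi-continuous. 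The main technical obstacle is that the weight functions $\omega_X,\omega_Y$ are only assumed measurable and bounded, so $\Omega(|\omega_X-\omega_Y|/2\delta)$ need not be continuous and the test functionals above need not be weak-$*$ continuous for continuous $\phi$. This is overcome by a Lusin approximation of $\Omega(|\omega_X-\omega_Y|/2\delta)$ by continuous functions relative to the tight family $\{\gamma_0^n\otimes\gamma_0^n+\gamma_1^n\otimes\gamma_1^n\}_n$, followed by a diagonal extraction.
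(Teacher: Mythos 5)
Your overall strategy is the same as the paper's: the paper proves this lemma by the direct method, citing \cite[Lemma 2.9]{chizat2018unbalanced} for weak-$*$ lower semi-continuity of $\mathbf{H}_\delta$ and \cite[Proposition 3.4]{chizat2018unbalanced} for weak-$*$ compactness of $\Gamma(\mu_X,\mu_Y)$. Your compactness argument and your reduction of lower semi-continuity of $\mathbf{H}_\delta$ to upper semi-continuity of the Hellinger-type term $F$ (via the formula of Lemma \ref{lem:CGW_loss_formula}) are correct, and your perspective representation $\sqrt{ab}=\inf_{\phi>0}\bigl(\tfrac{a}{2\phi}+\tfrac{\phi b}{2}\bigr)$ is exactly the convex-duality mechanism behind the cited lemma, so up to that point you are unpacking the paper's citation rather than taking a different route. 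One small point of care: for the representation of $F$ as an infimum you must let $\phi$ range over a class of \emph{functions} (truncations of $\sqrt{a/b}$), not constants, and then separately argue that restricting to continuous $\phi$ bounded away from $0$ and $\infty$ does not change the infimum for the limit measures; this is a routine density argument with respect to the two fixed limit measures, but it should be stated.

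The genuine gap is your final step. When $\Omega(|\omega_X-\omega_Y|/2\delta)$ is not continuous, the functionals $L_\phi(\gamma_0,\gamma_1)=\tfrac12\int\tfrac{\Omega}{\phi}\,\mathrm{d}(\gamma_0\otimes\gamma_0)+\tfrac12\int\Omega\phi\,\mathrm{d}(\gamma_1\otimes\gamma_1)$ are not weak-$*$ continuous, and your proposed repair by ``Lusin approximation relative to the tight family $\{\gamma_0^n\otimes\gamma_0^n+\gamma_1^n\otimes\gamma_1^n\}_n$'' does not go through as stated: Lusin's theorem applies to a single finite Borel measure, and if you apply it to a dominating measure such as $\sum_n 2^{-n}(\gamma_0^n\otimes\gamma_0^n+\gamma_1^n\otimes\gamma_1^n)$, the bound on the exceptional set for the $n$-th measure degrades like $2^n$, so you do not obtain a single continuous approximant whose exceptional set is uniformly small along the whole sequence; a diagonal extraction does not repair this, because for each fixed approximation level the error in $F(\gamma_0^n,\gamma_1^n)$ must be controlled uniformly in $n$. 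This is not a removable technicality: weak-$*$ upper semi-continuity of $F$ can genuinely fail when the spatial integrand is merely Borel (mass can concentrate in the limit on a null set where $\Omega(|\omega_X-\omega_Y|/2\delta)$ takes smaller values), which is precisely why results of the type of \cite[Lemma 2.9]{chizat2018unbalanced} assume (lower semi-)continuity of the integrand in the spatial variables. To close the argument you should either assume $\omega_X,\omega_Y$ continuous (so that the composed integrand is continuous on the compact space $(X\times Y)^2$, after which your argument is complete), or impose a semicontinuity hypothesis compatible with the sign of the term being estimated.
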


\begin{proof}
    This proof follows exactly as in \cite{chizat2018unbalanced}. In particular, \cite[Lemma 2.9]{chizat2018unbalanced} gives general conditions for weak* lower semi-continuity for functionals on spaces of measures, and it applies in our case to give weak* lower semi-continuity of $\mathbf{H}_\delta$. Furthermore, since $X,Y$ are compact and $\mu_X,\mu_Y$ have finite measure, we conclude $\Gamma(\mu_X,\mu_Y)$ is weak* compact (see \cite[Proposition 3.4]{chizat2018unbalanced} for more details). Thus, we can conclude the existence of minimizers.
\end{proof}

\begin{lemma}\label{lem:CGW_loss_formula}
    Let $\mathcal{N}_X$ and $\mathcal{N}_Y$ be measure networks, let $(\gamma_0,\gamma_1) \in \Gamma(\mu_X,\mu_Y)$ be a semi-coupling, and let $\gamma \in \mathbb{M}(X \times Y)$ be a reference measure such that $\gamma_0,\gamma_1 \ll \gamma$. The $\CGW$ loss $\mathbf{H}_\delta(\gamma_0,\gamma_1)$ is equal to 
    \begin{align*}
        &4\delta^2\big(\mu_X(X)^2 + \mu_Y(Y)^2\big)\\
        &\qquad -8\delta^2\int_{(X\times Y)^2} \Omega\left(\frac{|\omega_X(x,x') -\omega_Y(y,y') |}{2\delta}\right) \\
        &\hspace{1in} \cdot \sqrt{\frac{\textnormal{d}\gamma_0}{\textnormal{d}\gamma}(x,y)\frac{\textnormal{d}\gamma_1}{\textnormal{d}\gamma}(x,y)\frac{\textnormal{d}\gamma_0}{\textnormal{d}\gamma}(x',y')\frac{\textnormal{d}\gamma_1}{\textnormal{d}\gamma}(x',y')} \, \textnormal{d}\gamma(x,y)  \textnormal{d}\gamma(x',y'),
    \end{align*}
    where $\Omega$ is the function defining $\mathsf{d}_{\Co_\delta[\R]}$ (Definition \ref{def:cone_dist}).
\end{lemma}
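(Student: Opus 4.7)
The plan is a direct expansion of the squared cone distance followed by Fubini and the Radon--Nikodym theorem. First I would apply Definition \ref{def:cone_dist} to the integrand of $\mathbf{H}_\delta$ with the substitutions
\[
r^2 = \frac{\textnormal{d}\gamma_0}{\textnormal{d}\gamma}(x,y)\frac{\textnormal{d}\gamma_0}{\textnormal{d}\gamma}(x',y'), \quad s^2 = \frac{\textnormal{d}\gamma_1}{\textnormal{d}\gamma}(x,y)\frac{\textnormal{d}\gamma_1}{\textnormal{d}\gamma}(x',y'),
\]
and with the base-space distance $|a - b| = |\omega_X(x,x') - \omega_Y(y,y')|$. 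This produces three integrals: an $r^2$-term, an $s^2$-term, and a cross term carrying the factor $-8\delta^2 \, \Omega(\cdot)$.

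Next I would handle the $r^2$-term. By Fubini (valid since the integrand is non-negative) it factors as
\[
4\delta^2 \left(\int_{X \times Y} \frac{\textnormal{d}\gamma_0}{\textnormal{d}\gamma}(x,y) \, \textnormal{d}\gamma(x,y)\right)^2.
\]
Since $\gamma \ll \gamma_0$ by hypothesis and $\gamma_0 \ll \gamma$ because $\gamma_0$ has a density w.r.t. $\gamma$ (implicit in our use of $\mathrm{d}\gamma_0/\mathrm{d}\gamma$; equivalently, one picks $\gamma$ as any common dominating measure such as $\gamma_0 + \gamma_1$), the inner integral equals $\gamma_0(X \times Y)$, which by the semi-coupling constraint $(\operatorname{Pr}_0)_\# \gamma_0 = \mu_X$ equals $\mu_X(X)$. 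Thus the $r^2$-term contributes $4\delta^2 \mu_X(X)^2$. The same argument for the $s^2$-term, using $(\operatorname{Pr}_1)_\# \gamma_1 = \mu_Y$, yields $4\delta^2 \mu_Y(Y)^2$.

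For the cross term, the product structure $\sqrt{rs \cdot r's'}$ does not separate under Fubini because of the $\Omega\bigl(|\omega_X - \omega_Y|/(2\delta)\bigr)$ coupling between $(x,y)$ and $(x',y')$, so it stays in the integrated form as written in the statement. Putting the three pieces together gives the desired identity.

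The only subtle point, which I would flag explicitly, is the well-posedness of the Radon--Nikodym densities $\mathrm{d}\gamma_i/\mathrm{d}\gamma$: this requires $\gamma_0,\gamma_1 \ll \gamma$, which is ensured by the standard choice $\gamma = \tfrac{1}{2}(\gamma_0 + \gamma_1)$ (any other $\gamma$ with $\gamma_0, \gamma_1 \ll \gamma \ll \gamma_0, \gamma_1$ gives the same value by the $1$-homogeneity remark in Definition \ref{CGW_ours}). Everything else is a routine substitution, so I do not expect a serious obstacle.
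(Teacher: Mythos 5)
Your proposal is correct and follows essentially the same route as the paper's proof: expand $\mathsf{d}_{\Co_\delta[\R]}^2$ via Definition \ref{def:cone_dist} into the $r^2$, $s^2$, and cross terms, reduce the first two to $4\delta^2\mu_X(X)^2$ and $4\delta^2\mu_Y(Y)^2$ via the semi-coupling marginal conditions, and leave the $\Omega$-weighted cross term intact. Your extra remark on choosing $\gamma$ as a common dominating measure (e.g.\ $\tfrac12(\gamma_0+\gamma_1)$) so that the densities $\mathrm{d}\gamma_i/\mathrm{d}\gamma$ integrate back to $\gamma_i(X\times Y)$ is a reasonable clarification of a point the paper leaves implicit.
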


\begin{proof}
    Using the structure of $\mathsf{d}_{\Co_\delta[\R]}$ (see Definition \ref{def:cone_dist}), and recalling the definitions of the functions $p_X^{\gamma_0,\gamma}$ and $p_Y^{\gamma_1,\gamma}$ (see Definition \ref{CGW_ours}), we have that the integrand in \eqref{eqn:H_delta} can be expressed as 
    \begin{align}
        &\mathsf{d}_{\Co_\delta[\R]}(p_X^{\gamma_0,\gamma}(x,y,x',y'),p_Y^{\gamma_1,\gamma}(x,y,x',y'))^2 \nonumber \\
        &\quad = 4\delta^2 \left(\frac{\textnormal{d}\gamma_0}{\textnormal{d}\gamma}(x,y)\frac{\textnormal{d}\gamma_0}{\textnormal{d}\gamma}(x',y') +\frac{\textnormal{d}\gamma_1}{\textnormal{d}\gamma}(x,y)\frac{\textnormal{d}\gamma_1}{\textnormal{d}\gamma}(x',y') \right) \label{eqn:CGW_loss_formula1} \\
        &\qquad -8\delta^2 \Omega\left(\frac{|\omega_X(x,x') -\omega_Y(y,y') |}{2\delta}\right)\sqrt{\frac{\textnormal{d}\gamma_0}{\textnormal{d}\gamma}(x,y)\frac{\textnormal{d}\gamma_1}{\textnormal{d}\gamma}(x,y)\frac{\textnormal{d}\gamma_0}{\textnormal{d}\gamma}(x',y')\frac{\textnormal{d}\gamma_1}{\textnormal{d}\gamma}(x',y')} \label{eqn:CGW_loss_formula2}.
    \end{align}
    Integrating the term \eqref{eqn:CGW_loss_formula2} against $\gamma \otimes \gamma$ obviously yields the second term in the proposed formula, so consider the integral of the term \eqref{eqn:CGW_loss_formula1}:
    \begin{align*}
        &4\delta^2 \int_{(X \times Y)^2} \frac{\textnormal{d}\gamma_0}{\textnormal{d}\gamma}(x,y)\frac{\textnormal{d}\gamma_0}{\textnormal{d}\gamma}(x',y') +\frac{\textnormal{d}\gamma_1}{\textnormal{d}\gamma}(x,y)\frac{\textnormal{d}\gamma_1}{\textnormal{d}\gamma}(x',y') \; \textnormal{d}\gamma(x,y)  \textnormal{d}\gamma(x',y') \\
        &\qquad \qquad = 4 \delta^2 \int_{(X \times Y)^2} \textnormal{d}\gamma_0(x,y)  \textnormal{d}\gamma_0(x',y') + 4 \delta^2 \int_{(X \times Y)^2} \textnormal{d}\gamma_1(x,y)  \textnormal{d}\gamma_1(x',y') \\
        &\qquad \qquad = 4\delta^2 \mu_X(X)^2 + 4 \delta^2 \mu_Y(Y)^2,
    \end{align*}
    where we have applied the marginal conditions of the semi-coupling $(\gamma_0,\gamma_1)$. This gives the first term of the claimed formula, so the proof is complete.
\end{proof}

Next, we prove lemmas which straightforwardly imply the triangle inequality for the conic Gromov-Wasserstein distance. We begin by stating a version of the gluing lemma for semicouplings. This is similar to the construction found in the proof of the triangle inequality for Wasserstein-Fisher-Rao distance~\cite[Theorem 3.7]{chizat2018unbalanced}, but requires some extra conditions to make the argument go through in our setting.

\begin{lemma}[Gluing Lemma for Semi-Couplings]\label{lem:gluing_lemma}
    Fix $\mu_X \in \mathbb{M}(X)$, $\mu_Y \in \mathbb{M}(Y)$, and $\mu_Z \in \mathbb{M}(Z)$. 
Let $(\gamma^{XY}_0,\gamma^{XY}_1) \in \Gamma(\mu_X,\mu_Y)$ and $(\gamma^{YZ}_0,\gamma^{YZ}_1) \in \Gamma(\mu_Y,\mu_Z)$ be semicouplings with reference measures $\gamma^{XY} \in \mathbb{M}(X \times Y)$ and $\gamma^{YZ} \in \mathbb{M}(Y \times Z)$ such that $\gamma^{XY}_0,\gamma^{XY}_1 \ll \gamma^{XY}$ and $\gamma^{YZ}_0,\gamma^{YZ}_1 \ll \gamma^{YZ}$. Denote the associated densities as $g_i^{XY} = \mathrm{d}\gamma_i^{XY}/\mathrm{d}\gamma^{XY}$ and $g_i^{YZ} = \mathrm{d}\gamma_i^{YZ}/\mathrm{d}\gamma^{YZ}$, for $i \in \{0,1\}$. There exists
\begin{enumerate}
    \item a Polish space $Q$,
    \item a measure $\lambda \in \mathbb{M}(Q)$,
    \item measurable maps $p^X:Q \to X$, $p^Y:Q \to Y$ and $p^Z:Q \to Z$, and
    \item densities $a,b,c:Q \to [0,\infty)$
\end{enumerate}
satisfying the following properties:
\begin{enumerate}
    \item $(p^X,p^Y)_\#(a^2 \lambda) = \gamma_0^{XY}$,
    \item $(p^X,p^Y)_\# (b^2 \lambda) = \gamma_1^{XY}$,
    \item $(p^X,p^Y)_\# (ab \cdot \lambda) = \sqrt{g_0^{XY} g_1^{XY}} \gamma^{XY}$,
    \item $(p^Y,p^Z)_\# (b^2 \lambda) = \gamma_0^{YZ}$,
    \item $(p^Y,p^Z)_\# (c^2 \lambda) = \gamma_1^{YZ}$, and
    \item $(p^Y,p^Z)_\# (bc \cdot \lambda) = \sqrt{g_0^{YZ} g_1^{YZ}}\gamma^{YZ}$.
\end{enumerate}
\end{lemma}

We note that, if we only required ``lifted'' measures satisfying Properties 1,2, 4, and 5 above, then this would read as a standard extension of the gluing lemma. The additional points 3 and 6 are used in the proof of the triangle inequality for CGW distance; as the objective function for CGW explicitly involves densities,  the requirement that the gluing respect some properties of densities should not be entirely unexpected.

\begin{proof}
    The proof is fairly intuitive if we assume that the densities $g_i^{XY}$ and $g_i^{YZ}$ are strictly positive. We begin by working under this assumption, then explain how to adapt the proof once this is dropped. 

    Working under the positive density assumption, take $Q = X \times Y \times Z$ and let $p^X,p^Y,p^Z$ be coordinate projections. We set $\lambda \in \mathbb{M}(Q)$ to be the usual gluing of the measures $\gamma_1^{XY}$ and $\gamma_0^{YZ}$ (valid, since the $Y$-marginals of both measures are equal to $\mu_Y$). For a given density $a:X \times Y \times Z \to [0,\infty)$, Property $1$ reads as 
    \[
    \int_{Z} a(x,y,z)^2 \mathrm{d} \lambda(x,y,z) = \mathrm{d}\gamma_0^{XY}(x,y) = g_0^{XY}(x,y) \mathrm{d}\gamma^{XY}(x,y).
    \]
    If we take $a$ to be independent of $z \in Z$, writing $a(x,y)$, we have 
    \begin{align*}
    \int_{Z} a(x,y)^2 \mathrm{d} \lambda(x,y,z) &= a(x,y)^2 \int_{Z} \mathrm{d} \lambda(x,y,z) \\
    &= a(x,y)^2 \mathrm{d}\gamma_1^{XY} = a(x,y)^2 g_1^{XY}(x,y) \mathrm{d}\gamma^{XY}(x,y),
    \end{align*}
    where we have used the fact that $\lambda$ is a gluing. Solving for $a(x,y)$, we have 
    \begin{equation}\label{eqn:gluing_A}
    a(x,y) = \sqrt{\frac{g_0^{XY}(x,y)}{g_1^{XY}(x,y)}}.
    \end{equation}
    Similarly, we take $c$ to be independent of $x$ and set 
    \begin{equation}\label{eqn:gluing_C}
    c(y,z) = \sqrt{\frac{g_1^{YZ}(y,z)}{g_0^{YZ}(y,z)}}.
    \end{equation}
    Finally, take $b(x,y,z) = 1$. By the definitions of $\lambda$, $a$, $b$ and $c$, the desired Properties 1, 2, 4, and 5 are satisfied. Now consider Property 3: 
    \begin{align*}
        \mathrm{d}(p^X,p^Y)_\#(ab \cdot \lambda)(x,y) &= \int_Z a(x,y) \mathrm{d}\lambda(x,y,z) = a(x,y) \mathrm{d}\gamma_1^{XY}(x,y) \\
        &= \sqrt{\frac{g_0^{XY}(x,y)}{g_1^{XY}(x,y)}} g_1^{XY}(x,y) \mathrm{d}\gamma^{XY}(x,y) \\
        &= \sqrt{g_0^{XY}(x,y)g_1^{XY}(x,y)} \mathrm{d}\gamma^{XY}(x,y),
    \end{align*}
    so that the desired condition holds. Similarly, Property 6 is satisfied. 

    Now we extend the proof to the setting where the strict positivity condition on the densities is dropped. In this case, the expressions \eqref{eqn:gluing_A} and \eqref{eqn:gluing_C} are not well-defined, and the construction becomes more involved, as we have to deal with singular parts of the measures. To this end, let $\gamma_{0 \perp 1}^{XY}$ denote the singular part of $\gamma_0^{XY}$ with respect to $\gamma_1^{XY}$; that is,
    \[
    \gamma_{0 \perp 1}^{XY} \coloneqq \mathbf{1}_{\{g^{XY}_1 = 0\}} \gamma_0^{XY},
    \]
    where we use $\mathbf{1}_\bullet$ to denote an indicator function. Similarly, define
    \[
    \gamma_{1 \perp 0}^{YZ} \coloneqq \mathbf{1}_{\{g^{YZ}_0 = 0\}} \gamma_1^{YZ}.
    \]
    For the rest of the proof, we let $\lambda, p^X,p^Y,p^Z, a ,b ,c$ and $Q$ be defined as above and use overlines to indicate the various pieces of the construction in this setting, e.g., $\overline{\lambda}$. 
    Define the measure space $\overline{Q}$ as a disjoint union of three copies of $Q$:
    \[
    \overline{Q} = Q_1 \sqcup Q_2 \sqcup Q_3 = (X \times Y \times Z) \sqcup (X \times Y \times Z) \sqcup (X \times Y \times Z).
    \] 
    The measure $\overline{\lambda}$ on $\overline{Q}$ is given by 
    \[
    \overline{\lambda}|_{Q_1} = \lambda, \quad   \overline{\lambda}|_{Q_2} = \gamma_{0 \perp 1}^{XY} \otimes \hat{\mu}_Z, \quad \overline{\lambda}|_{Q_3} = \hat{\mu}_X \otimes \gamma_{1 \perp 0}^{YZ}.
    \]
    where $\hat{\mu}_X = \frac{1}{\mu_X(X)} \mu_X$ and $\hat{\mu}_Z = \frac{1}{\mu_Z(Z)} \mu_Z$. The maps $\overline{p}^X:Q \to X$, $\overline{p}^Y:Q \to Y$, and $\overline{p}^Z:Q \to Z$ are defined to be coordinate projections on each block $Q_i$. 
    
    We now define the densities, beginning by defining $\overline{a}$ by
    \[
    \overline{a}|_{Q_1}(x,y,z) = \mathbf{1}_{\{g_1^{XY}(x,y) > 0\}} a(x,y), \quad \overline{a}|_{Q_2}(x,y,z) = 1, \quad \overline{a}|_{Q_3}(x,y,z) = 0.
    \]
    Let us immediately check that the desired Property 1 holds:
    \begin{align*}
        \mathrm{d}(\overline{p}^X,\overline{p}^Y)_\#(\overline{a}^2 \overline{\lambda})(x,y) &= \int_Z \mathbf{1}_{\{g_1^{XY}(x,y) > 0\}} a(x,y)^2 \mathrm{d}\lambda(x,y,z) \\
        & \quad + \int_Z 1 \cdot \mathrm{d}(\gamma_{0 \perp 1}^{XY} \otimes \hat{\mu}_Z)(x,y,z) + \int_Z 0 \cdot \mathrm{d}(\hat{\mu}_X \otimes \gamma_{1 \perp 0}^{YZ})(x,y,z) \\
        &= \mathbf{1}_{\{g_1^{XY}(x,y) > 0\}} a(x,y)^2 \mathrm{d}\gamma_1^{XY}(x,y) + \mathrm{d}\gamma_{0 \perp 1}^{XY}(x,y) \\
        &= \mathbf{1}_{\{g_1^{XY}(x,y) > 0\}} \mathrm{d}\gamma_0^{XY}(x,y) + \mathbf{1}_{\{g_1^{XY}(x,y) = 0\}} \mathrm{d}\gamma_0^{XY}(x,y) \\
        &= \mathrm{d}\gamma_0^{XY}(x,y).
    \end{align*}
    Next, define 
    \[
    \overline{b}|_{Q_1}(x,y,z) = 1, \quad \overline{b}|_{Q_2}(x,y,z) = 0, \quad \overline{b}|_{Q_3}(x,y,z) = 0. 
    \]
    It is then straightforward to see that Properties 2 and 4 hold. Finally, we define 
    \[
    \overline{c}|_{Q_1}(x,y,z) = \mathbf{1}_{\{g_0^{YZ}(y,z) > 0\}} c(y,z), \quad \overline{c}|_{Q_2}(x,y,z) = 0, \quad \overline{c}|_{Q_3}(x,y,z) = 1.
    \]
    By a calculation similar to the above, one can check that Property 5 is satisfied. It then only remains to show that Properties 3 and 6 hold under these definitions. For Property 3, we have 
    \begin{align*}
        \mathrm{d}(\overline{p}^X,\overline{p}^Y)_\#(\overline{a} \overline{b} \cdot \overline{\lambda})(x,y) &= \int_Z \mathbf{1}_{\{g_1^{XY}(x,y) > 0\}} a(x,y) \cdot 1 \cdot \mathrm{d}\lambda(x,y,z) \\
        & \quad + \int_Z 1 \cdot 0 \cdot \mathrm{d}(\gamma_{0 \perp 1}^{XY} \otimes \hat{\mu}_Z)(x,y,z) \\
        &\quad \quad + \int_Z 0 \cdot 0   \cdot \mathrm{d}(\hat{\mu}_X \otimes \gamma_{1 \perp 0}^{YZ})(x,y,z) \\ 
        &= \mathbf{1}_{\{g_1^{XY}(x,y) > 0\}} \sqrt{g_0^{XY}(x,y)g_1^{XY}(x,y)} \mathrm{d}\gamma^{XY}(x,y) \\
        &= \sqrt{g_0^{XY}(x,y)g_1^{XY}(x,y)} \mathrm{d}\gamma^{XY}(x,y),
    \end{align*}
    and Property 6 is verified via a similar calculation.
\end{proof}

The next lemma uses the Gluing Lemma to further prepare for the triangle inequality for CGW distance.

\begin{lemma}\label{lem:gluing_lemma_estimates}
Let $\mathcal{N}_X$, $\mathcal{N}_Y$, and $\mathcal{N}_Z$ be measure networks, and choose semicouplings as in Lemma \ref{lem:gluing_lemma}. Using the notation of Lemma \ref{lem:gluing_lemma}, define  $(\gamma_0^{XZ},\gamma_1^{XZ}) \in\Gamma(\mu_X,\mu_Z)$ by 
\[
\gamma_0^{XZ} = (p^X,p^Z)_\# (a^2 \lambda), \quad \gamma_1^{XZ} = (p^X,p^Z)_\# (c^2 \lambda).
\]
Then we have:
\begin{enumerate}
    \item $\mathbf{H}_\delta(\gamma_0^{XY},\gamma_1^{XY}) = \int_{Q^2} \mathsf{d}_{\mathcal{C}_\delta[\R]}\left(\left[\omega_X(p^X,p^X),a^2\right],\left[\omega_Y(p^Y,p^Y),b^2\right]\right)^2 \mathrm{d}\lambda \mathrm{d}\lambda$,
    \item $\mathbf{H}_\delta(\gamma_0^{YZ},\gamma_1^{YZ}) = \int_{Q^2} \mathsf{d}_{\mathcal{C}_\delta[\R]}\left(\left[\omega_Y(p^Y,p^Y),b^2\right],\left[\omega_Z(p^Z,p^Z),c^2\right]\right)^2 \mathrm{d}\lambda \mathrm{d}\lambda$, and 
    \item $\mathbf{H}_\delta(\gamma_0^{XZ},\gamma_1^{XZ}) \leq \int_{Q^2} \mathsf{d}_{\mathcal{C}_\delta[\R]}\left(\left[\omega_X(p^X,p^X),a^2\right],\left[\omega_Z(p^Z,p^Z),c^2\right]\right)^2 \mathrm{d}\lambda \mathrm{d}\lambda$,
\end{enumerate}
 where all integrands suppress function arguments for the sake of conciseness (e.g., $a^2$ is a stand-in for $a(q)a(q')$, with $q,q' \in Q$). 
\end{lemma}

\begin{proof}
    We first consider the integral term in the formula for $\mathbf{H}_\delta(\gamma_0^{XY},\gamma_1^{XY})$ from Lemma \ref{lem:CGW_loss_formula}. Using notation from Lemma \ref{lem:gluing_lemma}, and slightly rearranging terms, this is given by 
    \begin{align*}
        &\int_{(X \times Y)^2} \Omega\left(\frac{|\omega_X(x,x')-\omega_Y(y,y')}{2\delta} \right) \\
        &\qquad \qquad \cdot \sqrt{g_0^{XY}(x,y) g_1^{XY}(x,y)} \mathrm{d}\gamma^{XY}(x,y) \sqrt{g_0^{XY}(x',y') g_1^{XY}(x',y')} \mathrm{d}\gamma^{XY}(x',y').
    \end{align*}
    By the properties of the gluing construction, a change of variables shows that this is equal to 
\begin{align*}
        &\int_{Q^2} \Omega\left(\frac{|\omega_X(p^X(q),p^X(q'))-\omega_Y(p^Y(q), p^Y(q'))}{2\delta} \right)a(q)b(q) \mathrm{d}\lambda(q) a(q')b(q')\mathrm{d}\lambda(q').
    \end{align*}
    Running the derivation in the proof of Lemma \ref{lem:CGW_loss_formula} in reverse proves that the expression in Claim 1 is valid. The proof for Claim 2 is similar. 

    It remains to prove Claim 3. Let $\gamma^{XZ} = (p^X,p^Z)_\# \lambda$, which we consider as a reference measure for $\gamma_0^{XZ}$ and $\gamma_1^{XZ}$. Let $g_0^{XZ},g_1^{XZ}$ denote their respective densities (these exist due to the basic fact that pushforwards preserve absolute continuity). Following a similar strategy as the above, once again invoking Lemma \ref{lem:CGW_loss_formula}, the proof of Claim 3 follows if we can show that 
    \begin{equation*}
    (p^X,p^Z)_\# (ac \cdot \lambda) \leq \sqrt{g_0^{XZ}g_1^{XZ}} \cdot \lambda^{XZ}.
    \end{equation*}
    Note, in particular, that equality of these quantities is not guaranteed by the Gluing Lemma. Further, it is sufficient to show that the densities satisfy
    \begin{equation}\label{eqn:triangle_inequality_2}
    \frac{\mathrm{d}(p^X,p^Z)_\# (ac \cdot \lambda)}{\mathrm{d}\gamma^{XZ}}(x,z) \leq \sqrt{g_0^{XZ}(x,z)g_1^{XZ}(x,z)} 
    \end{equation}
    almost everywhere. To do so, we will apply the general measure-theoretic  result~\cite[Proposition A.13]{leonard2014some}: for a measurable map of Polish spaces $\phi:U \to V$, and any $\mu \in \mathbb{M}(U)$ and density $h:U \to [0,\infty)$, we have 
    \begin{align}\label{eqn:disintegration_measures}
    \frac{\mathrm{d}\phi_\#(h \cdot \mu)}{\mathrm{d}\phi_\# \mu}(v) = \int_{\phi^{-1}(v)} h(u) \mathrm{d}\mu_v(u),
    \end{align}
    where $\mu_v$ is the disintegration of $\mu$ with respect to $\phi$ at $v$. Applying this formula to the densities in question gives
    \begin{align*}
        &\frac{\mathrm{d}(p^X,p^Z)_\# (ac \cdot \lambda)}{\mathrm{d}\gamma^{XZ}}(x,z) \\
        &\qquad = \int_{(p^X,p^Z)^{-1}(x,z)} a(q) c(q) \mathrm{d}\lambda_{(x,z)}(q) \\
        &\qquad \leq \left(\int_{(p^X,p^Z)^{-1}(x,z)} a(q)^2 \mathrm{d}\lambda_{(x,z)}(q) \right)^{1/2} \left(\int_{(p^X,p^Z)^{-1}(x,z)} c(q)^2 \mathrm{d}\lambda_{(x,z)}(q) \right)^{1/2} \\
        &\qquad = \sqrt{g_0^{XZ}(x,z)g_1^{XZ}(x,z)},
    \end{align*}
    where the inequality is Cauchy-Schwarz. This proves that \eqref{eqn:triangle_inequality_2} holds, and Claim 3 follows.
\end{proof}

We now proceed with the proof of Theorem \ref{thm:psuedometric}.

\begin{proof}[Proof of Theorem \ref{thm:psuedometric}]
Let $\mathcal{N}_X$ and $\mathcal{N}_Y$ be measure networks. Non-negativity of $\CGW_\delta(\mathcal{N}_X,\mathcal{N}_Y)$ is trivial. 
Symmetry of $\CGW_\delta$ follows from the fact that, for any $(\gamma_0,\gamma_1) \in \Gamma(\mu_X,\mu_Y)$, the pair 
\[
(\gamma_0^\top, \gamma_1^\top) := ((x,y) \mapsto (y,x))_\# \gamma_0, \, ((x,y) \mapsto (y,x))_\# \gamma_1
\] 
belongs to $\Gamma(\mu_Y,\mu_X)$, and that $\mathbf{H}_\delta$ is symmetric.

 To prove the triangle inequality, let $\mathcal{N}_X$, $\mathcal{N}_Y$ and $\mathcal{N}_Z$ be measure networks, and choose semicouplings $(\gamma_0^{XY},\gamma_1^{XY}) \in \Gamma(\mu_X,\mu_Y)$ and $(\gamma_0^{YZ},\gamma_1^{YZ}) \in \Gamma(\mu_Y,\mu_Z)$. Let $(\gamma_0^{XZ},\gamma_1^{XZ}) \in \Gamma(\mu_X,\mu_Z)$, be the semicoupling defined as in Lemma \ref{lem:gluing_lemma_estimates}. Using notation as in the Lemmas \ref{lem:gluing_lemma} and \ref{lem:gluing_lemma_estimates} (including some suppressed integrand arguments), we have 
\begin{align}
    \mathbf{H}_\delta(\gamma_0^{XZ},\gamma_1^{XZ})^{1/2} &\leq \left(\int_{Q^2} \mathsf{d}_{\mathcal{C}_\delta[\R]}\left(\left[\omega_X(p^X,p^X),a^2\right],\left[\omega_Z(p^Z,p^Z),c^2\right]\right)^2 \mathrm{d}\lambda \mathrm{d}\lambda\right)^{1/2} \label{eqn:tri_ineq_1} \\
    &\leq \Bigg(\int_{Q^2} \Big(\mathsf{d}_{\mathcal{C}_\delta[\R]}\left(\left[\omega_X(p^X,p^X),a^2\right],\left[\omega_Y(p^Y,p^Y),b^2\right]\right)\label{eqn:tri_ineq_2}\\
        &\qquad \qquad + \mathsf{d}_{\mathcal{C}_\delta[\R]}\left(\left[\omega_Y(p^Y,p^Y),b^2\right],\left[\omega_Z(p^Z,p^Z),c^2\right]\right)\Big)^2 \mathrm{d}\lambda \mathrm{d}\lambda\Bigg)^{1/2} \nonumber \\
    &\leq \Bigg(\int_{Q^2} \mathsf{d}_{\mathcal{C}_\delta[\R]}\left(\left[\omega_X(p^X,p^X),a^2\right],\left[\omega_Y(p^Y,p^Y),b^2\right]\right)^2 \mathrm{d}\lambda \mathrm{d}\lambda\Bigg)^{1/2} \label{eqn:tri_ineq_3}\\
    &\qquad  + \Bigg(\int_{Q^2} \mathsf{d}_{\mathcal{C}_\delta[\R]}\left(\left[\omega_Y(p^Y,p^Y),b^2\right],\left[\omega_Z(p^Z,p^Z),c^2\right]\right)^2 \mathrm{d}\lambda \mathrm{d}\lambda\Bigg)^{1/2} \nonumber\\
    &= \mathbf{H}_\delta(\gamma_0^{XY},\gamma_1^{XY})^{1/2} + \mathbf{H}_\delta(\gamma_0^{YZ},\gamma_1^{YZ})^{1/2} \label{eqn:tri_ineq_4},
\end{align}
where \eqref{eqn:tri_ineq_1} is Claim 3 of Lemma \ref{lem:gluing_lemma_estimates}, \eqref{eqn:tri_ineq_2} is triangle inequality for $d_{\mathcal{C}_\delta[\R]}$, \eqref{eqn:tri_ineq_3} is Minkowski's inequality, and \eqref{eqn:tri_ineq_4} is another application of Lemma \ref{lem:gluing_lemma_estimates}. Since the initial semicouplings were arbitrary, this inequality passes through to the infima in the definition of CGW distance, and triangle inequality thus follows.

It only remains to establish the vanishing condition for $\CGW_\delta$. Suppose that $\mathcal{N}_X$ and $\mathcal{N}_Y$ are weakly isomorphic and let $(Z,\mu_Z)$ be a metric space with $f:Z\to X$ and $g:Z\to Y$ satisfying the conditions of Definition \ref{def:wisomorphism}. Let $(f,g):Z \to X \times Y$ denote the map $z \mapsto (f(z),g(z))$ and set $\gamma\coloneqq(f,g)_\#\mu_Z$. Then $(\gamma,\gamma) \in \Gamma(\mu_X,\mu_Y)$, whence we obtain (using notational Convention \ref{conv:simplified_integrand})
\begin{align*}
\CGW^2_\delta(\mathcal{N}_X,\mathcal{N}_Y)&\leq\int_{(X\times Y)^2}\mathsf{d}_{\Co_\delta[\R]}([\omega_X,1],[\omega_Y,1])^2\textnormal{d}\gamma\textnormal{d}\gamma\\
&=\int_{Z^2}\mathsf{d}_{\Co_\delta[\R]}([f^\#\omega_X,1],[g^\#\omega_Y,1])^2\textnormal{d}\mu_Z\textnormal{d}\mu_Z = 0.
\end{align*}
Next, assume $\CGW_\delta(\mathcal{N}_X,\mathcal{N}_Y)=0$. Let $(\gamma_0,\gamma_1)$ be an optimal semi-coupling (guaranteed to exist, by Lemma \ref{lemma:lsc_functional}), so that, by Lemma \ref{lem:CGW_loss_formula} and the calculations in its proof, we have
    \begin{align*}
        0&=4 \delta^2 \int_{X\times Y}\int_{X\times Y} \frac{\textnormal{d}\gamma_0}{\textnormal{d}\gamma}(x,y)\frac{\textnormal{d}\gamma_0}{\textnormal{d}\gamma}(x',y') +\frac{\textnormal{d}\gamma_1}{\textnormal{d}\gamma}(x,y)\frac{\textnormal{d}\gamma_1}{\textnormal{d}\gamma}(x',y')\\
        &\hspace{1in} -2\Omega\left(\frac{|\omega_X(x,x')-\omega_Y(y,y')|}{2\delta}\right) \\
        &\hspace{1.25in} \cdot \sqrt{\frac{\textnormal{d}\gamma_0}{\textnormal{d}\gamma}(x,y)\frac{\textnormal{d}\gamma_1}{\textnormal{d}\gamma}(x,y)\frac{\textnormal{d}\gamma_0}{\textnormal{d}\gamma}(x',y')\frac{\textnormal{d}\gamma_1}{\textnormal{d}\gamma}(x',y')} \, \textnormal{d}\gamma(x,y)  \textnormal{d}\gamma(x',y').
    \end{align*}
    
    Since the integrand is always non-negative, this implies that, $\gamma\otimes\gamma$-almost everywhere,
    \begin{multline*}
        \frac{\textnormal{d}\gamma_0}{\textnormal{d}\gamma}(x,y)\frac{\textnormal{d}\gamma_0}{\textnormal{d}\gamma}(x',y') +\frac{\textnormal{d}\gamma_1}{\textnormal{d}\gamma}(x,y)\frac{\textnormal{d}\gamma_1}{\textnormal{d}\gamma}(x',y')=\\ 2\Omega\left(\frac{|\omega_X(x,x')-\omega_Y(y,y')|}{2\delta}\right)\sqrt{\frac{\textnormal{d}\gamma_0}{\textnormal{d}\gamma}\frac{\textnormal{d}\gamma_1}{\textnormal{d}\gamma}}(x,y)\sqrt{\frac{\textnormal{d}\gamma_0}{\textnormal{d}\gamma}\frac{\textnormal{d}\gamma_1}{\textnormal{d}\gamma}}(x',y').
    \end{multline*}
    Since $\Omega\leq1$, then $\gamma\otimes\gamma$-almost everywhere,
    \begin{equation*}
        \frac{\textnormal{d}\gamma_0}{\textnormal{d}\gamma}(x,y)\frac{\textnormal{d}\gamma_0}{\textnormal{d}\gamma}(x',y') +\frac{\textnormal{d}\gamma_1}{\textnormal{d}\gamma}(x,y)\frac{\textnormal{d}\gamma_1}{\textnormal{d}\gamma}(x',y')\leq2\sqrt{\frac{\textnormal{d}\gamma_0}{\textnormal{d}\gamma}\frac{\textnormal{d}\gamma_1}{\textnormal{d}\gamma}}(x,y)\sqrt{\frac{\textnormal{d}\gamma_0}{\textnormal{d}\gamma}\frac{\textnormal{d}\gamma_1}{\textnormal{d}\gamma}}(x',y').
    \end{equation*}    
    
    By the AM-GM inequality, $\gamma\otimes\gamma$-almost everywhere,
    \begin{equation}\label{eqn:AM_GM_conclusion}
        \frac{\textnormal{d}\gamma_0}{\textnormal{d}\gamma}(x,y)\frac{\textnormal{d}\gamma_0}{\textnormal{d}\gamma}(x',y') = \frac{\textnormal{d}\gamma_1}{\textnormal{d}\gamma}(x,y)\frac{\textnormal{d}\gamma_1}{\textnormal{d}\gamma}(x',y').
    \end{equation}
   So, $\gamma$-almost everywhere, we have 
    \begin{equation*}
         \frac{\textnormal{d}\gamma_0}{\textnormal{d}\gamma}(x,y)= \frac{\textnormal{d}\gamma_1}{\textnormal{d}\gamma}(x,y).
    \end{equation*}
    Indeed, integrating both sides of \eqref{eqn:AM_GM_conclusion} over $(x',y')$ yields
    \begin{align*}
    \frac{\textnormal{d}\gamma_0}{\textnormal{d}\gamma}(x,y)\mu_X(X) &= \int_{X \times Y} \frac{\textnormal{d}\gamma_0}{\textnormal{d}\gamma}(x,y)\frac{\textnormal{d}\gamma_0}{\textnormal{d}\gamma}(x',y') \mathrm{d}\gamma(x',y') \\
    &= \int_{X \times Y} \frac{\textnormal{d}\gamma_1}{\textnormal{d}\gamma}(x,y)\frac{\textnormal{d}\gamma_1}{\textnormal{d}\gamma}(x',y') \mathrm{d}\gamma(x',y') = \frac{\textnormal{d}\gamma_1}{\textnormal{d}\gamma}(x,y)\mu_Y(Y).
    \end{align*}
    As $\mathsf{CGW}_\delta(\mathcal{N}_X,\mathcal{N}_Y)=0$ implies $\mu_X(X) = \mu_Y(Y)$, the claim follows.
    Therefore, $\gamma_0=\gamma_1$. Let $Z=X\times Y$, $\mu_Z=\gamma_0=\gamma_1$ and $f:X \times Y \to X$ and $g:X \times Y \to Y$ the obvious projection maps. Thus, $f_\#\mu_Z=\mu_X$, $g_\#\mu_Z=\mu_Y$. Moreover, Lemma \ref{lem:CGW_loss_formula} gives
    \begin{align*}
        0 &= \CGW_\delta(\mathcal{N}_X,\mathcal{N}_Y)  = 4\delta^2 \left(2\mu_Z(Z)^2 - 2 \int_{Z^2} \Omega\left(\frac{|f^\#\omega_X-g^\#\omega_Y|}{2\delta}\right) \, \mathrm{d}\mu_Z \mathrm{d}\mu_Z \right).
    \end{align*}
    Since $\Omega \leq 1$, this is only possible if the integrand is equal to one $\mu_Z \otimes \mu_Z$-almost everywhere. This implies that the argument of $\Omega$ is equal to zero $\mu_Z \otimes \mu_Z$-almost everywhere; that is, $\|f^\#\omega_X-g^\#\omega_Y\|_{\mathrm{L}^\infty(\mu_Z \otimes \mu_Z)} = 0$. It follows that $\mathcal{N}_X$ and $\mathcal{N}_Y$ are weakly isomorphic. 
\end{proof}

\subsection{Equivalence to the Formulation of S\'ejourn\'e, Vialard and Peyr\'e}

A short argument shows that the optimization problems defining $\CGW_{1/2}$ (as in Definition \ref{CGW_ours}) and the original formulation of conic Gromov-Wasserstein distance $\CGW_{\mathrm{SVP}}$ (as in Definition \ref{def:CGW_Sejourne})  are, in fact, equivalent. This mirrors an equivalence of cone and semicoupling formulations of the Wasserstein-Fisher-Rao distance~\cite[Remark 3.10]{friesecke2021barycenters}.

\begin{proposition}[Equivalence to the Formulation of \cite{NEURIPS2021}]\label{prop:CGW_semi_coupling}
For mm-spaces $\mathcal{M}_X$ and $\mathcal{M}_Y$,
\begin{equation}
    \CGW_{1/2}(\mathcal{M}_X, \mathcal{M}_Y) = \CGW_{\mathrm{SVP}}(\mathcal{M}_X, \mathcal{M}_Y).
\end{equation}
\end{proposition}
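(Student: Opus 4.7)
The plan is to reduce both objective functionals $\mathbf{K}$ and $\mathbf{H}_{1/2}$ to a common closed form, and then relate the two optimization domains $\mathrm{A}(\mu_X,\mu_Y)$ and $\Gamma(\mu_X,\mu_Y)$ by explicit constructions. Using Definition \ref{def:cone_dist} with $\delta=1/2$ together with the marginal constraints \eqref{eq-const-conic}, an expansion analogous to Lemma \ref{lem:CGW_loss_formula} yields
\begin{equation*}
\mathbf{K}(\alpha) = \mu_X(X)^2 + \mu_Y(Y)^2 - 2\int_{(\Co[X]\times\Co[Y])^2} rr'ss'\, \Omega\bigl(|\mathsf{d}_X(x,x') - \mathsf{d}_Y(y,y')|\bigr)\, \textnormal{d}\alpha\, \textnormal{d}\alpha.
\end{equation*}
Since the formula of Lemma \ref{lem:CGW_loss_formula} (at $\delta = 1/2$) has the same mass term, proving the equivalence reduces to matching the two $\Omega$-weighted integrals, where $\alpha$ contributes $rr'ss'$ and $(\gamma_0,\gamma_1)$ contributes $\sqrt{\tfrac{\textnormal{d}\gamma_0}{\textnormal{d}\gamma}\tfrac{\textnormal{d}\gamma_1}{\textnormal{d}\gamma}}(x,y)\sqrt{\tfrac{\textnormal{d}\gamma_0}{\textnormal{d}\gamma}\tfrac{\textnormal{d}\gamma_1}{\textnormal{d}\gamma}}(x',y')$.

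For the inequality $\CGW_{1/2}\leq\CGW_{\mathrm{SVP}}$, I would, given $\alpha\in\mathrm{A}(\mu_X,\mu_Y)$, disintegrate $\alpha$ along the projection $F:\Co[X]\times\Co[Y]\to X\times Y$, $([x,r],[y,s])\mapsto(x,y)$. Set $\gamma_\ast=F_\#\alpha$ and write the disintegration as $\{\alpha_{(x,y)}\}$, a family of probability measures on $\R^{\geq 0}\times\R^{\geq 0}$. Define $\gamma_0 = F_\#(r^2\alpha)$ and $\gamma_1 = F_\#(s^2\alpha)$; the marginal conditions \eqref{eq-const-conic} translate directly into $(\operatorname{Pr}_0)_\#\gamma_0=\mu_X$ and $(\operatorname{Pr}_1)_\#\gamma_1=\mu_Y$, so $(\gamma_0,\gamma_1)\in\Gamma(\mu_X,\mu_Y)$. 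With reference $\gamma_\ast$, the densities are $\textnormal{d}\gamma_0/\textnormal{d}\gamma_\ast(x,y)=\int r^2\,\textnormal{d}\alpha_{(x,y)}$ and $\textnormal{d}\gamma_1/\textnormal{d}\gamma_\ast(x,y)=\int s^2\,\textnormal{d}\alpha_{(x,y)}$. A pointwise Cauchy--Schwarz inequality
\begin{equation*}
\int rs\,\textnormal{d}\alpha_{(x,y)} \leq \sqrt{\tfrac{\textnormal{d}\gamma_0}{\textnormal{d}\gamma_\ast}(x,y)\,\tfrac{\textnormal{d}\gamma_1}{\textnormal{d}\gamma_\ast}(x,y)},
\end{equation*}
combined with non-negativity of $\Omega$, implies $\mathbf{H}_{1/2}(\gamma_0,\gamma_1)\leq\mathbf{K}(\alpha)$, and passing to the infimum in $\alpha$ gives the desired bound.

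For the reverse inequality $\CGW_{\mathrm{SVP}}\leq\CGW_{1/2}$, given $(\gamma_0,\gamma_1)\in\Gamma(\mu_X,\mu_Y)$ and any common dominating $\gamma\ll\gamma_0,\gamma_1$, I would define
\begin{equation*}
T:X\times Y \to \Co[X]\times\Co[Y], \qquad T(x,y) = \bigl([x,\sqrt{\tfrac{\textnormal{d}\gamma_0}{\textnormal{d}\gamma}(x,y)}], [y,\sqrt{\tfrac{\textnormal{d}\gamma_1}{\textnormal{d}\gamma}(x,y)}]\bigr),
\end{equation*}
and set $\alpha = T_\#\gamma$. A change-of-variables computation verifies the marginal constraints of $\mathrm{A}(\mu_X,\mu_Y)$ (e.g., $\int_A\int r^2\,\textnormal{d}(\operatorname{Pr}_0)_\#\alpha = \int_{X\times Y}\mathbf{1}_A(x)\tfrac{\textnormal{d}\gamma_0}{\textnormal{d}\gamma}(x,y)\,\textnormal{d}\gamma = \gamma_0(A\times Y) = \mu_X(A)$) and shows $\mathbf{K}(\alpha) = \mathbf{H}_{1/2}(\gamma_0,\gamma_1)$. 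This $\alpha$ corresponds to disintegration measures $\alpha_{(x,y)}$ that are Dirac masses, which is exactly the equality case of the Cauchy--Schwarz bound used in the previous step.

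The main technical obstacle is the disintegration argument in the first direction: one needs measurability of $(x,y)\mapsto\alpha_{(x,y)}$ and a Fubini-type manipulation to rewrite $\int rr'ss'\,\Omega\,\textnormal{d}\alpha\,\textnormal{d}\alpha$ as an integral against $\gamma_\ast\otimes\gamma_\ast$ of the product $\bigl(\int rs\,\textnormal{d}\alpha_{(x,y)}\bigr)\bigl(\int r's'\,\textnormal{d}\alpha_{(x',y')}\bigr)$. A secondary nuisance is that the densities $\textnormal{d}\gamma_i/\textnormal{d}\gamma$ are only defined almost everywhere, so the map $T$ must be chosen on a full-measure set, but this is handled by the $\gamma$-independence of $\mathbf{H}_\delta$ noted in Definition \ref{CGW_ours}.
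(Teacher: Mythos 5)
Your proposal is correct and rests on the same two constructions as the paper's proof: the map $T(x,y)=\bigl([x,\sqrt{\tfrac{\textnormal{d}\gamma_0}{\textnormal{d}\gamma}(x,y)}],[y,\sqrt{\tfrac{\textnormal{d}\gamma_1}{\textnormal{d}\gamma}(x,y)}]\bigr)$ pushing $\gamma$ forward to an admissible $\alpha$ in one direction, and the weighted pushforwards $\gamma_0=\int r^2\,\textnormal{d}\alpha$, $\gamma_1=\int s^2\,\textnormal{d}\alpha$ in the other. The one place you genuinely diverge is in comparing costs for the direction $\mathrm{A}(\mu_X,\mu_Y)\to\Gamma(\mu_X,\mu_Y)$: the paper asserts that ``a similar argument'' yields the \emph{equality} $\mathbf{H}_{1/2}(\gamma_0,\gamma_1)=\mathbf{K}(\alpha)$, whereas you disintegrate $\alpha$ over $X\times Y$ and apply Cauchy--Schwarz on each fiber to obtain only the inequality $\mathbf{H}_{1/2}(\gamma_0,\gamma_1)\le\mathbf{K}(\alpha)$. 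Your version is the more defensible one: for a general $\alpha$ whose fiber measures $\alpha_{(x,y)}$ are not such that $r$ and $s$ are a.e.\ proportional, the fiberwise Cauchy--Schwarz is strict and the claimed equality fails; but since the $\Omega$-weighted term enters the expanded cost with a negative sign, the inequality is exactly what is needed to conclude $\CGW_{1/2}\le\CGW_{\mathrm{SVP}}$, and your observation that $T_\#\gamma$ realizes the Dirac (equality) case of this bound ties the two directions together cleanly. The price is the measurable-disintegration machinery, which is available here because $\Co[X]\times\Co[Y]$ is Polish and $\alpha$ is a finite Radon measure, so this is not a real obstacle. In short: same skeleton, but your treatment of the second direction is more careful than the paper's and quietly repairs a small imprecision in it.
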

\begin{proof}
Let $(\gamma_0, \gamma_1) \in \Gamma(\mu_X, \mu_Y)$ and let $\gamma\in\Mm(X\times Y)$ be such that $\gamma_0, \gamma_1 \ll \gamma$. 
Consider the map 
\begin{align*}
    T:X\times Y &\to \Co[X]\times \Co[Y] \\
    (x,y)&\mapsto\left(\left[x,\sqrt{\frac{\textnormal{d}\gamma_0}{\textnormal{d}\gamma}(x,y)}\right], \left[y, \sqrt{\frac{\textnormal{d}\gamma_1}{\textnormal{d}\gamma}(x,y)}\right]\right),    
\end{align*}
then define $\alpha := T_\# \gamma \in \Mm(\Co[X]\times\Co[Y])$. Note that
\[
\int_{\R^+}r^2 \textnormal{d}(\operatorname{Pr}_0)_\#\alpha(\cdot,r)= \int_{X\times Y} \frac{\textnormal{d}\gamma_0}{\textnormal{d}\gamma}(\cdot,y) \textnormal{d}\gamma(\cdot,y) =\mu_X
\]
and similarly,
\[
\int_{\R^+}s^2 \textnormal{d}(\operatorname{Pr}_1)_\#\alpha(\cdot,s)=\mu_Y.
\]
So, $\alpha\in \mathrm{A}(\mu_X,\mu_Y).$
By definition,
\begin{align*}
 \mathbf{H}_{1/2}(\gamma_0, \gamma_1)
&= \int_{(X \times Y)^2} \bigg[ 
    \frac{\mathrm{d}\gamma_0}{\mathrm{d}\gamma}(x, y) \cdot \frac{\mathrm{d}\gamma_0}{\mathrm{d}\gamma}(x', y') 
    + \frac{\mathrm{d}\gamma_1}{\mathrm{d}\gamma}(x, y) \cdot \frac{\mathrm{d}\gamma_1}{\mathrm{d}\gamma}(x', y') \\
&\hspace{1in}  - 2 \, \Omega\big( |\mathsf{d}_X(x, x') - \mathsf{d}_Y(y, y')| \big) \\
&\hspace{1.15in} \cdot \left. \sqrt{ 
    \frac{\mathrm{d}\gamma_0}{\mathrm{d}\gamma}(x, y) \cdot \frac{\mathrm{d}\gamma_1}{\mathrm{d}\gamma}(x, y) } 
    \cdot \sqrt{ 
    \frac{\mathrm{d}\gamma_0}{\mathrm{d}\gamma}(x', y') \cdot \frac{\mathrm{d}\gamma_1}{\mathrm{d}\gamma}(x', y') 
} \right] \\
&\hspace{3in} \mathrm{d}\gamma(x, y) \, \mathrm{d}\gamma(x', y') \\
&= \int_{(\Co(X) \times \Co(Y))^2} \bigg[ (rr')^2 + (ss')^2  - 2 \, rr'ss' \, \Omega\big( |\mathsf{d}_X(x, x') - \mathsf{d}_Y(y, y')| \big) \bigg] \\
&\hspace{2.15in} \mathrm{d}\alpha([x, r], [y, s]) \, \mathrm{d}\alpha([x', r'], [y', s']) \\
& =\mathbf{K}(\alpha),
\end{align*}
where the second line follows by the change of variables, hence
\begin{equation}
    \CGW_{1/2}(\mathcal{M}_X, \mathcal{M}_Y) \geq \CGW_{\mathrm{SVP}}(\mathcal{M}_X, \mathcal{M}_Y).
\end{equation}

To prove the other inequality, let $\alpha\in\mathrm{A}(\mu_X,\mu_Y)$. 
Define 
\begin{equation*}
\gamma_0 = \int_{\R^+\times\R^+} r^2 \textrm{d}\alpha([\cdot, r],[\cdot, s])\in \Mm(X\times Y) 
\end{equation*}
and
\begin{equation*} 
\gamma_1 = \int_{\R^+\times\R^+} s^2 \textrm{d}\alpha([\cdot, r],[\cdot, s])\in \Mm(X\times Y).
\end{equation*}
Note that 
\begin{equation*}    (\operatorname{Pr}_0)_\#\gamma_0=\int_Y\textnormal{d}\gamma_0(\cdot,y)=\int_{\R^+\times Y\times \R^+} r^2 \textnormal{d}\alpha([\cdot,r],[y,s]) = \int_{\R^+} r^2\textnormal{d}(\operatorname{Pr}_0)_\#\alpha(\cdot,r)=\mu_X
\end{equation*}
and 
\begin{equation*}    (\operatorname{Pr}_1)_\#\gamma_1=\int_X\textnormal{d}\gamma_1(x,\cdot)=\int_{X\times \R^+\times \R^+} s^2 \textnormal{d}\alpha([x,r],[\cdot,s]) = \int_{\R^+}s^2\textnormal{d}(\operatorname{Pr}_1)_\#\alpha(\cdot,s)=\mu_Y.
\end{equation*}
Therefore, $(\gamma_0,\gamma_1)\in\Gamma(\mu_X,\mu_Y)$.
A similar argument to the one above shows that $\mathbf{H}_{1/2}(\gamma_0,\gamma_1)= \mathbf{K}(\alpha)$, so that
\begin{equation}
    \CGW_{1/2}(\mathcal{M}_X, \mathcal{M}_Y) \leq \CGW_{\mathrm{SVP}}(\mathcal{M}_X, \mathcal{M}_Y).
\end{equation}
\end{proof}

\section{Properties of the conic Gromov-Wasserstein Distance}\label{section:properties_CGW}

This section establishes various theoretical properties of the conic Gromov-Wasserstein distance.

\subsection{Scaling Properties}\label{subsec:scaling}

When generalizing transport metrics to the unbalanced case, it is crucial to understand how the metric behaves with respect to scaling in the total mass of the networks. To do so, we leverage the following result from \cite{Lashos_2019}.

\begin{lemma}[{\cite[Lemma 2.1]{Lashos_2019}}]\label{lemma:scaling}
Let $(X,d)$ be a metric space and consider the cone over $X$, $\Co[X]$ equipped with a distance $\mathsf{d}_{\Co_\delta[X]}$ from Definition \ref{def:cone_dist}. Then

\begin{equation*}
    \mathsf{d}_{\Co_\delta[X]}^2([x,tt'],[y,ss'])=t's'\mathsf{d}_{\Co_\delta[X]}^2([x,t],[y,s]) + 4\delta^2\left[(t'^2-t's')t^2+(s'^2-t's')s^2\right].
\end{equation*}
\end{lemma}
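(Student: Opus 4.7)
The plan is to prove the identity by direct algebraic computation, substituting the formula from Definition \ref{def:cone_dist} into both sides and comparing term-by-term. Write $\Omega_{xy} := \Omega\!\left(\frac{\mathsf{d}_X(x,y)}{2\delta}\right)$ for brevity. First, I would expand the left-hand side using the definition to obtain
\[
\mathsf{d}_{\Co_\delta[X]}^2([x,tt'],[y,ss']) = 4\delta^2\bigl((tt')^2 + (ss')^2 - 2(tt')(ss')\,\Omega_{xy}\bigr).
\]
Next, I would expand the first summand on the right-hand side:
\[
t's'\,\mathsf{d}_{\Co_\delta[X]}^2([x,t],[y,s]) = 4\delta^2\bigl(t's' t^2 + t's' s^2 - 2 t's' ts\,\Omega_{xy}\bigr).
\]

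The key observation that drives the proof is that the two $\Omega_{xy}$ contributions match exactly, since $(tt')(ss') = (t's')(ts)$; hence the transcendental part (depending on the choice of $\Omega$) cancels in the difference, and what remains is a purely polynomial identity in $t,t',s,s'$. Specifically, subtracting the two displays above yields
\[
\mathsf{d}_{\Co_\delta[X]}^2([x,tt'],[y,ss']) - t's'\,\mathsf{d}_{\Co_\delta[X]}^2([x,t],[y,s]) = 4\delta^2\bigl(t^2 t'^2 - t's' t^2 + s^2 s'^2 - t's' s^2\bigr),
\]
which factors as $4\delta^2\bigl((t'^2 - t's')t^2 + (s'^2 - t's')s^2\bigr)$, matching the remaining terms on the right-hand side of the claimed identity (with the conventional $4\delta^2$ prefactor implicit in the statement).

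The main obstacle is not conceptual but rather ensuring consistency in the normalization constants: the quoted expression for the cone distance carries the prefactor $4\delta^2$, and one must make sure this factor is carried through the last two summands correctly. Once the calculation is organized so that the $\Omega_{xy}$ terms cancel at the first step, the remainder is an elementary polynomial identity and the proof is complete.
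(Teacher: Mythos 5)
Your computation is correct, and it is the only reasonable argument here: the paper itself gives no proof, deferring entirely to the citation of \cite[Lemma 2.1]{Lashos_2019}, so a direct expansion of Definition \ref{def:cone_dist} on both sides, with the observation that $(tt')(ss') = (t's')(ts)$ forces the $\Omega$ terms to cancel, is exactly what is needed.

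One point deserves to be made explicitly rather than parenthetically. Your own calculation shows that the remainder after subtracting $t's'\,\mathsf{d}_{\Co_\delta[X]}^2([x,t],[y,s])$ is
\[
4\delta^2\bigl((t'^2 - t's')t^2 + (s'^2 - t's')s^2\bigr),
\]
so the identity as printed in the lemma (without the $4\delta^2$ on the last two summands) is literally correct only when $4\delta^2 = 1$, i.e.\ $\delta = 1/2$. This is not a flaw in your proof but a normalization mismatch inherited from the cited source, where the cone metric is defined without the $4\delta^2$ prefactor. Calling the factor ``implicit in the statement'' undersells the issue, because it propagates: the displayed consequence \eqref{eq:scaling1} should read $\mathsf{d}_{\Co_\delta[X]}^2([x,t'],[x,s']) = 4\delta^2(t'-s')^2$ for general $\delta$, and the bound in Proposition \ref{prop:CGWscale} should accordingly carry a factor of $2\delta$ (as the analogous bound in Theorem \ref{thm:robustness} does). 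By contrast, the other consequence \eqref{eq:scaling2} is unaffected, since there $t' = s' = r$ and the polynomial remainder vanishes identically. I would state the corrected identity with the $4\delta^2$ prefactor and note that it reduces to the quoted form for $\delta = 1/2$; everything else in your argument stands.
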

Directly following from this, we have
\begin{equation}\label{eq:scaling2}
\begin{aligned}
\mathsf{d}_{\Co_\delta[X]}^2([x,tr],[y,sr])&=r^2\mathsf{d}_{\Co_\delta[X]}^2([x,t],[y,s]) + 4\delta^2\bigg[ (r^2-rr)s^2+(r^2-rr)t^2 \bigg] \\
&= r^2\mathsf{d}_{\Co_\delta[X]}^2([x,t],[y,s])\\
\end{aligned}
\end{equation}
and
\begin{equation}\label{eq:scaling1}
\begin{aligned}
    \mathsf{d}_{\Co_\delta[X]}^2([x,t'],[x,s'])&=t's'\mathsf{d}_{\Co_\delta[X]}^2([x,1],[x,1]) 
    + 4\delta^2\bigg[ (t'^2-t's')+(s'^2-t's')\bigg]\\
    &= 4\delta^2(t'-s')^2.\\
\end{aligned}
\end{equation}

We now show how the CGW distance behaves under measure scaling.

\begin{proposition}\label{prop:CGWscale} Given a measure network $\mathcal{N}_X=(X,\mu_X,\omega_X)$. Let $\mathcal{N}_X^s=(X,s\cdot\mu_X,\omega_X)$, and $\mathcal{N}_X^r=(X,r\cdot\mu_X,\omega_X)$ where $r,s\in\R_{\geq0}$. Then we have
\begin{equation}
\CGW_\delta(\mathcal{N}_X^s,\mathcal{N}_X^r)\leq2\delta|r-s|\mu_X(X).
\end{equation}
\end{proposition}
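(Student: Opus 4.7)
The plan is to prove this upper bound by exhibiting a single explicit semi-coupling, concentrated on the diagonal of $X\times X$, and then evaluating its CGW loss directly using the scaling identity \eqref{eq:scaling1}.

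First, I would introduce the diagonal embedding $\Delta : X \to X\times X$, $x \mapsto (x,x)$, and set
\[
\gamma_0 := s\, \Delta_\#\mu_X, \qquad \gamma_1 := r\, \Delta_\#\mu_X.
\]
Since $\operatorname{Pr}_0\circ\Delta = \operatorname{Pr}_1\circ\Delta = \mathrm{id}_X$, one checks immediately that $(\operatorname{Pr}_0)_\#\gamma_0 = s\mu_X$ and $(\operatorname{Pr}_1)_\#\gamma_1 = r\mu_X$, so $(\gamma_0,\gamma_1) \in \Gamma(s\mu_X, r\mu_X)$ is an admissible candidate in the infimum defining $\CGW_\delta(\mathcal{N}_X^s, \mathcal{N}_X^r)$.

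Next, I would take $\gamma := \Delta_\#\mu_X$ as the reference measure. Then the Radon--Nikodym densities $\tfrac{\mathrm{d}\gamma_0}{\mathrm{d}\gamma} \equiv s$ and $\tfrac{\mathrm{d}\gamma_1}{\mathrm{d}\gamma} \equiv r$ are constant, and the support of $\gamma\otimes\gamma$ consists of quadruples $((x,x),(x',x'))$. Consequently, on the support $\omega_X(x,x') = \omega_Y(y,y')$, so the integrand in \eqref{eqn:H_delta} reduces $\gamma\otimes\gamma$-almost everywhere to
\[
\mathsf{d}_{\Co_\delta[\R]}\Bigl(\bigl[\omega_X(x,x'), s\bigr], \bigl[\omega_X(x,x'), r\bigr]\Bigr)^2,
\]
a pure mass-disparity cost with matching base points.

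Finally, I apply the special case \eqref{eq:scaling1} of the scaling lemma, which gives $\mathsf{d}_{\Co_\delta[\R]}^2([\omega,s],[\omega,r]) = (s-r)^2$ independently of $\omega$. Integrating the constant integrand against $\gamma\otimes\gamma$ yields
\[
\mathbf{H}_\delta(\gamma_0,\gamma_1) \;=\; (s-r)^2\, \gamma(X\times X)^2 \;=\; (s-r)^2\, \mu_X(X)^2.
\]
Taking the square root and using $|s-r| = |r-s|$ gives the stated bound $\CGW_\delta(\mathcal{N}_X^s,\mathcal{N}_X^r) \leq |r-s|\mu_X(X)$. I do not expect any substantive obstacle: the entire argument is a one-line computation once the diagonal semi-coupling is identified, with the key insight being that the diagonal support forces the $\omega$-difference to vanish and reduces the CGW loss to the pure mass-rescaling term captured by \eqref{eq:scaling1}.
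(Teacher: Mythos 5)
Your proposal is correct and takes essentially the same route as the paper: both exhibit the diagonal semi-coupling $(s\cdot\Delta_\#\mu_X,\, r\cdot\Delta_\#\mu_X)$ with reference measure $\Delta_\#\mu_X$, observe that the constant densities and the diagonal support reduce the integrand to a pure mass-disparity cost with matching base points, and invoke the identity \eqref{eq:scaling1} to evaluate it as $(r-s)^2$. The only difference is the (immaterial) labeling of which of $\gamma_0,\gamma_1$ carries the factor $r$ versus $s$.
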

\begin{proof}
Consider the diagonal map $\bigtriangleup:X\to X\times X$. Define $\gamma_0 = r \cdot\bigtriangleup_{\#}\mu_X$, $\gamma_1 = s \cdot\bigtriangleup_{\#}\mu_X$, and $\gamma = \bigtriangleup_{\#}\mu_X$. 
Thus, $(\gamma_0,\gamma_1)\in\Gamma(r\cdot\mu_X,s\cdot\mu_X)$ and $\gamma_0,\gamma_1\ll\gamma$ with $\frac{\textnormal{d}\gamma_0}{\textnormal{d}\gamma}=r$ and $\frac{\textnormal{d}\gamma_1}{\textnormal{d}\gamma}=s$. 
We then have (using notational Convention \ref{conv:simplified_integrand})
\begin{align*}
     \CGW_\delta(\mathcal{N}_X^s,\mathcal{N}_X^r)^2&\leq\int_{(X\times X)^2}\mathsf{d}_{\Co_\delta[\R]}\left(\omega_X,\frac{\textnormal{d}\gamma_0}{\textnormal{d}\gamma},\omega_Y,\frac{\textnormal{d}\gamma_1}{\textnormal{d}\gamma}\right)^2\textnormal{d}\gamma\textnormal{d}\gamma\\
     &=\int_{(X\times X)^2}\mathsf{d}_{\Co_\delta[\R]}\left(\omega_X,r,\omega_X,s\right)^2\textnormal{d}\gamma\textnormal{d}\gamma\\
     &=\int_{(X\times X)}\mathsf{d}_{\Co_\delta[\R]}\left(\left[\omega_X(x,x'),r\right],\left[\omega_X(x,x'),s\right]\right)^2\textnormal{d}\mu(x)\textnormal{d}\mu(x')\\
     &= 4\delta^2(r-s)^2\mu_X(X)^2. 
\end{align*}
Note that the final equality follows directly from ~\eqref{eq:scaling1}.
\end{proof}

\begin{proposition}\label{prop:CGWscale2}
Given measure networks $\mathcal{N}_X$ and $\mathcal{N}_Y$, and $r \geq 0$, we have
\begin{equation}\label{eqn:scaling_X_and_Y}
\CGW_\delta(\mathcal{N}_X^r,\mathcal{N}_Y^r)=  r\cdot\CGW_\delta(\mathcal{N}_X,\mathcal{N}_Y).
\end{equation}
\end{proposition}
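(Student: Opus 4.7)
The plan is to exhibit a scaling bijection between the relevant semi-coupling sets and track how the $\CGW$ loss transforms under it. Concretely, for $r > 0$, I would consider the map $\Phi_r : \Gamma(\mu_X,\mu_Y) \to \Gamma(r\mu_X,r\mu_Y)$ defined by $\Phi_r(\gamma_0,\gamma_1) = (r\gamma_0, r\gamma_1)$. Verifying the marginal conditions is immediate from linearity of pushforwards, and $\Phi_r$ is clearly a bijection with inverse $\Phi_{1/r}$, so the two infima in the definition of $\CGW_\delta$ are over sets that are in an explicit correspondence.

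The key computation is to show $\mathbf{H}_\delta(r\gamma_0, r\gamma_1) = r^2 \mathbf{H}_\delta(\gamma_0,\gamma_1)$. Given $(\gamma_0,\gamma_1) \in \Gamma(\mu_X,\mu_Y)$ and a reference measure $\gamma \ll \gamma_0,\gamma_1$, the same $\gamma$ serves as a reference for $(r\gamma_0, r\gamma_1)$ (since $\gamma \ll r\gamma_i$ when $r > 0$), and the Radon--Nikodym derivatives satisfy
\begin{equation*}
\frac{\textnormal{d}(r\gamma_0)}{\textnormal{d}\gamma} = r \frac{\textnormal{d}\gamma_0}{\textnormal{d}\gamma}, \qquad \frac{\textnormal{d}(r\gamma_1)}{\textnormal{d}\gamma} = r \frac{\textnormal{d}\gamma_1}{\textnormal{d}\gamma}.
\end{equation*}
Therefore the cone-points appearing in the integrand of $\mathbf{H}_\delta(r\gamma_0, r\gamma_1)$ take the form $[\omega_X(x,x'), r\cdot A]$ and $[\omega_Y(y,y'), r\cdot B]$, where $A,B$ are exactly the square-root expressions appearing in the integrand for $\mathbf{H}_\delta(\gamma_0,\gamma_1)$. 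Applying the cone scaling identity \eqref{eq:scaling2} pointwise yields
\begin{equation*}
\mathsf{d}_{\Co_\delta[\R]}\!\left(\omega_X, r\tfrac{\textnormal{d}\gamma_0}{\textnormal{d}\gamma},\omega_Y, r\tfrac{\textnormal{d}\gamma_1}{\textnormal{d}\gamma}\right)^2 = r^2 \, \mathsf{d}_{\Co_\delta[\R]}\!\left(\omega_X,\tfrac{\textnormal{d}\gamma_0}{\textnormal{d}\gamma},\omega_Y,\tfrac{\textnormal{d}\gamma_1}{\textnormal{d}\gamma}\right)^2,
\end{equation*}
and integrating against $\gamma \otimes \gamma$ (unchanged) produces the claimed identity $\mathbf{H}_\delta(r\gamma_0,r\gamma_1) = r^2\,\mathbf{H}_\delta(\gamma_0,\gamma_1)$.

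Taking square roots and infima over the bijection $\Phi_r$ then gives $\CGW_\delta(\mathcal{N}_X^r,\mathcal{N}_Y^r) = r \cdot \CGW_\delta(\mathcal{N}_X,\mathcal{N}_Y)$ for $r > 0$. The edge case $r = 0$ is handled separately: $\Gamma(0,0)$ contains only the zero pair, on which $\mathbf{H}_\delta$ vanishes, so both sides of \eqref{eqn:scaling_X_and_Y} equal zero. I do not expect any substantive obstacle; the only subtlety worth highlighting is the correct handling of the Radon--Nikodym derivatives under rescaling, which is why keeping the reference measure $\gamma$ fixed on both sides makes the computation transparent and lets the scaling identity from Lemma \ref{lemma:scaling} do all the work.
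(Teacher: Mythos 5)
Your proof is correct, and its core computation is the same as the paper's: scale the semi-couplings by $r$, keep the reference measure $\gamma$ fixed so that the Radon--Nikodym derivatives pick up a factor of $r$, and apply the cone scaling identity \eqref{eq:scaling2} to pull $r^2$ out of the loss. The only genuine difference is how equality (rather than a one-sided inequality) is extracted. The paper proves the single inequality $\CGW_\delta(\mathcal{N}_X^r,\mathcal{N}_Y^r)\le r\,\CGW_\delta(\mathcal{N}_X,\mathcal{N}_Y)$ starting from an \emph{optimal} semi-coupling (so it implicitly leans on Lemma \ref{lemma:lsc_functional} for existence of a minimizer), and then bootstraps to equality by applying the inequality again with $1/r$ to the rescaled networks. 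You instead observe that $\Phi_r$ is a bijection of the feasible sets under which $\mathbf{H}_\delta$ scales exactly by $r^2$, so the two infima are forced to agree in one step; this is slightly cleaner, avoids any appeal to the attainment of the infimum, and handles the $r=0$ case correctly by noting $\Gamma(0,0)=\{(0,0)\}$. Both arguments are valid; yours buys a marginally more self-contained proof, while the paper's version reuses the sub-optimality inequality pattern that appears throughout Section \ref{subsec:scaling}.
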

\begin{proof}
Let $(\gamma_0,\gamma_1)\in\Gamma(\mu_X,\mu_Y)$ be a semi-coupling attaining $\CGW_\delta(\mathcal{N}_X,\mathcal{N}_Y)$, and let $\gamma_0,\gamma_1 \ll \gamma$. Define $\pi_0 = r\cdot\gamma_0$ and $\pi_1 = r\cdot\gamma_1$. Thus, $(\pi_0,\pi_1)\in \Gamma(r\cdot\mu_X,r\cdot\mu_Y)$ and $\pi_0,\pi_1 \ll\gamma$ with $\frac{\textnormal{d}\pi_0}{\textnormal{d}\gamma}=r\cdot\frac{\textnormal{d}\gamma_0}{\textnormal{d}\gamma}$ and $\frac{\textnormal{d}\pi_1}{\textnormal{d}\gamma}=r\cdot\frac{\textnormal{d}\gamma_1}{\textnormal{d}\gamma}$.

Therefore,
\begin{align}\label{eqn:forward_inequality}
\begin{aligned}
\CGW_\delta(\mathcal{N}_X^r,\mathcal{N}_Y^r)^2&\leq\int_{(X\times Y)^2}\mathsf{d}_{\Co_\delta[\R]}\left(\omega_X,\frac{\textnormal{d}\pi_0}{\textnormal{d}\gamma},\omega_Y,\frac{\textnormal{d}\pi_1}{\textnormal{d}\gamma}\right)^2\textnormal{d}\gamma\textnormal{d}\gamma\\
&=\int_{(X\times Y)^2}\mathsf{d}_{\Co_\delta[\R]}\left(\omega_X,r\frac{\textnormal{d}\gamma_0}{\textnormal{d}\gamma},\omega_Y,r\frac{\textnormal{d}\gamma_1}{\textnormal{d}\gamma}\right)^2\textnormal{d}\gamma\textnormal{d}\gamma\\
&= r^2\CGW_\delta(\mathcal{N}_{X},\mathcal{N}_{Y})^2 \\
\end{aligned}
\end{align}

Note that if $r=0$ then the equality \eqref{eqn:scaling_X_and_Y} holds trivially. For $r > 0$, the same argument as above, starting from $\mathcal{N}_X^r$ and $\mathcal{N}_Y^r$, and rescaling by $\frac{1}{r}$, yields the opposite inequality. These inequalities together imply the result. Moreover, this argument shows that the re-scaled semicouplings are optimal for the rescaled problem.
\end{proof}


Putting the results of this subsection together and applying the triangle inequality for CGW distance, the following corollary is immediate.

\begin{corollary}\label{corr:scaling_cgw}
    Let $\mathcal{N}_X,\mathcal{N}_Y$ be measure networks and let $r,s \geq 0$. Then 
    \[
    \CGW_\delta(\mathcal{N}_X^r, \mathcal{N}_Y^s) \leq \CGW_\delta(\mathcal{N}_X,\mathcal{N}_Y) + 2\delta \bigg( |1-r|\mu_X(X) + |1-s|\mu_Y(Y)\bigg).
    \]
\end{corollary}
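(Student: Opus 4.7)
The plan is to insert the unscaled networks $\mathcal{N}_X$ and $\mathcal{N}_Y$ as intermediate vertices in a triangle-inequality chain, and then control the two scaling legs using Proposition \ref{prop:CGWscale}. Concretely, I would apply the triangle inequality for $\CGW_\delta$ (Theorem \ref{thm:psuedometric}) twice to obtain
\[
\CGW_\delta(\mathcal{N}_X^r,\mathcal{N}_Y^s) \leq \CGW_\delta(\mathcal{N}_X^r,\mathcal{N}_X) + \CGW_\delta(\mathcal{N}_X,\mathcal{N}_Y) + \CGW_\delta(\mathcal{N}_Y,\mathcal{N}_Y^s).
\]

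To handle the scaling terms, I would note that $\mathcal{N}_X = \mathcal{N}_X^1$ and $\mathcal{N}_Y = \mathcal{N}_Y^1$ in the notation of Proposition \ref{prop:CGWscale}. That proposition directly gives
\[
\CGW_\delta(\mathcal{N}_X^r,\mathcal{N}_X) \leq |1-r|\,\mu_X(X), \qquad \CGW_\delta(\mathcal{N}_Y,\mathcal{N}_Y^s) \leq |1-s|\,\mu_Y(Y),
\]
(using symmetry of the distance in the second inequality). Substituting these back into the triangle-inequality chain yields the stated bound.

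No major obstacle is anticipated: the claim is a direct corollary in the sense that it is a mechanical combination of Proposition \ref{prop:CGWscale} with the triangle inequality, and Proposition \ref{prop:CGWscale2} is not needed for this particular statement. The only thing worth flagging is that the triangle inequality for $\CGW_\delta$ was established in Theorem \ref{thm:psuedometric} (by appealing to the gluing lemma of \cite[Theorem 3.7]{chizat2018unbalanced}), so its use here is legitimate despite $\CGW_\delta$ being only a pseudometric before quotienting by weak isomorphism.
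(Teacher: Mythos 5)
Your proof is correct and matches the paper's intended argument: the paper states the corollary is immediate from the scaling results of the subsection combined with the triangle inequality, which is exactly your two-step triangle-inequality chain through $\mathcal{N}_X$ and $\mathcal{N}_Y$ with Proposition \ref{prop:CGWscale} (at $s=1$) bounding the two scaling legs. Your observation that Proposition \ref{prop:CGWscale2} is not actually needed is also accurate.
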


Furthermore, we derive an additional scaling property of CGW distance, which is analogous to that of Hellinger-Kantorovich distance~\cite[Theorem 3.3]{Lashos_2019}.
\begin{proposition}\label{prop:CGWscale3}
Given measure networks $\mathcal{N}_X$ and $\mathcal{N}_Y$, and $r,s \geq 0$, we have
\begin{equation}\label{eqn:scaling_rX_and_sY}
\CGW_\delta(\mathcal{N}_X^r,\mathcal{N}_Y^s)^2=  rs\cdot\CGW_\delta(\mathcal{N}_X,\mathcal{N}_Y)^2 + 4\delta^2((r^2-rs)\mu_X(X)^2 + (s^2-rs)\mu_Y(Y)^2).
\end{equation}
\end{proposition}
\begin{proof}
Let $(\gamma_0,\gamma_1)\in\Gamma(\mu_X,\mu_Y)$ be a semi-coupling attaining $\CGW_\delta(\mathcal{N}_X,\mathcal{N}_Y)$, and let $\gamma_0,\gamma_1 \ll \gamma$. Define $\pi_0 = r\cdot\gamma_0$ and $\pi_1 = s\cdot\gamma_1$. Thus, $(\pi_0,\pi_1)\in \Gamma(r\cdot\mu_X,s\cdot\mu_Y)$ and $\pi_0,\pi_1 \ll\gamma$ with $\frac{\textnormal{d}\pi_0}{\textnormal{d}\gamma}=r\cdot\frac{\textnormal{d}\gamma_0}{\textnormal{d}\gamma}$ and $\frac{\textnormal{d}\pi_1}{\textnormal{d}\gamma}=s\cdot\frac{\textnormal{d}\gamma_1}{\textnormal{d}\gamma}$. 
Therefore,
\begin{align}\label{eqn:forward_inequality2}
\begin{aligned}
\CGW_\delta(\mathcal{N}_X^r,\mathcal{N}_Y^s)^2&\leq\int_{(X\times Y)^2}\mathsf{d}_{\Co_\delta[\R]}\left(\omega_X,\frac{\textnormal{d}\pi_0}{\textnormal{d}\gamma},\omega_Y,\frac{\textnormal{d}\pi_1}{\textnormal{d}\gamma}\right)^2\textnormal{d}\gamma\textnormal{d}\gamma\\
&=\int_{(X\times Y)^2}\mathsf{d}_{\Co_\delta[\R]}\left(\omega_X,r\frac{\textnormal{d}\gamma_0}{\textnormal{d}\gamma},\omega_Y,s\frac{\textnormal{d}\gamma_1}{\textnormal{d}\gamma}\right)^2\textnormal{d}\gamma\textnormal{d}\gamma\\
&=\int_{(X\times Y)^2}rs \cdot  \mathsf{d}_{\Co_\delta[\R]}\left(\omega_X,\frac{\textnormal{d}\gamma_0}{\textnormal{d}\gamma},\omega_Y,\frac{\textnormal{d}\gamma_1}{\textnormal{d}\gamma}\right)^2\\
&\qquad+4\delta^2\left((r^2-rs)\frac{\textnormal{d}\gamma_0}{\textnormal{d}\gamma}(x,y)\frac{\textnormal{d}\gamma_0}{\textnormal{d}\gamma}(x',y')\right.\\
&\qquad\qquad\left.+(s^2-rs)\frac{\textnormal{d}\gamma_1}{\textnormal{d}\gamma}(x,y)\frac{\textnormal{d}\gamma_1}{\textnormal{d}\gamma}(x',y')\right) \textnormal{d}\gamma\textnormal{d}\gamma\\
&= rs\CGW_\delta(\mathcal{N}_{X},\mathcal{N}_{Y})^2+4\delta^2\left((r^2-rs)\mu_X(X)^2+(s^2-rs)\mu_Y(Y)^2\right)  \\
\end{aligned}
\end{align}
where the second equality follows from \cite[Lemma 2.1]{Lashos_2019}. Note that if $r=0$ or $s=0$ the equality \eqref{eqn:scaling_rX_and_sY} holds trivially. For $r,s > 0$, we have
\begin{align*}
\begin{aligned}
&\CGW_\delta(\mathcal{N}_X,\mathcal{N}_Y)^2\\
&\quad\leq \frac{1}{rs}\CGW_\delta(\mathcal{N}_X^r,\mathcal{N}_Y^s)^2+4\delta^2\left(\left(\frac{s-r}{sr^2}\right) (r\cdot\mu_X(X))^2+\left(\frac{s-r}{rs^2}\right) (s\cdot\mu_Y(Y))^2\right)\\
&\quad=\frac{1}{rs}\left(\CGW_\delta(\mathcal{N}_X^r,\mathcal{N}_Y^s)^2+4\delta^2\left(\left(rs-r^2\right)\mu_X(X)^2 +\left(rs-s^2\right) \mu_Y(Y)^2\right)\right).
\end{aligned}
\end{align*}
Thus, 
\begin{equation}\label{eqn:reverse_inequality2}    
  rs\cdot\CGW_\delta(\mathcal{N}_X,\mathcal{N}_Y)^2 + 4\delta^2\left((r^2-rs)\mu_X(X)^2 + (s^2-rs)\mu_Y(Y)^2\right)\leq \CGW_\delta(\mathcal{N}_X^r,\mathcal{N}_Y^s)^2.
\end{equation}
Combining \eqref{eqn:forward_inequality2} and \eqref{eqn:reverse_inequality2} proves the equality in \eqref{eqn:scaling_rX_and_sY}. 
\end{proof}

\subsection{Variational Convergence}\label{sec:Variational_Convergence}
We now recall the notion of $\Gamma$-convergence for functions on a metric space $(X,d)$.
 
\begin{mydef}[$\Gamma$-Convergence]\label{def:gamma_convergence}
Let $(X,d)$ be a metric space and let $\{\mathcal{E}_n\}_{n=1}^\infty$ be a sequence of functionals $\mathcal{E}_n:X \to \mathbb{R}\cup\{\pm \infty \}$. We say that $\mathcal{E}_n$ \define{$\Gamma$-converges} to a functional $\mathcal{E}_\infty$ as $n \to \infty$, denoted $\mathcal{E}_n \stackrel{\Gamma}{\to} \mathcal{E}_\infty$, if for all $u \in X$ we have:
\begin{enumerate}
 \item{(\define{Lim-Inf Inequality})} for every sequence $\{u_n\}_{n=1}^\infty$ in $X$ such that $u_n \to u$, it holds that $\mathcal{E}_\infty(u) \leq \liminf\limits_{n \to\infty} \mathcal{E}_n(u_n)$;
\item{(\define{Lim-Sup Inequality})} there exists a sequence $\{u_n\}_{n=1}^\infty$ in $X$ with $u_n \to u$ such that $\mathcal{E}_\infty(u) \geq \limsup\limits_{n \to \infty}\mathcal{E}_n(u_n)$.
\end{enumerate}
\end{mydef}

This concept was introduced by De Giorgi and Franzoni in 1970’s to study limits of variational problems \cite{De_Giorgi}; we refer to \cite{Braides_2002,Dal_Maso} for comprehensive introductions to $\Gamma$-convergence. Rather than requiring pointwise or uniform convergence, $\Gamma$-convergence ensures that the limiting behavior of minimizers is preserved: any sequence of approximate minimizers of the original functionals will converge (under suitable compactness conditions) to a minimizer of the limiting functional. 

The main theorem for this subsection treats variational convergence of the CGW cost functional (see Figure \ref{fig:matching}). Its statement requires some additional definitions and notation. For measure networks $\mathcal{N}_X$ and $\mathcal{N}_Y$, consider the augmented functional
    \begin{equation*}
        \begin{split}
            \widehat{\mathbf{H}}_\delta: \Gamma(\mu_X,\mu_Y) &\to \R \\
            (\gamma_0,\gamma_1) &\mapsto \mathbf{H}_\delta(\gamma_0,\gamma_1)- 4\delta^2 \left(\mu_X(X) - \mu_Y(Y)\right)^2,
        \end{split}
    \end{equation*}
    where $\mathbf{H}_\delta$ is as in \eqref{eqn:H_delta}. We note that, applying Lemma \ref{lem:CGW_loss_formula}, this functional can be conveniently rewritten as 
    \begin{equation}\label{eqn:augmented_CGW}
    \begin{split}
    \widehat{\mathbf{H}}_\delta(\gamma_0,\gamma_1) &= 8\delta^2\Bigg(\mu_X(X)\mu_Y(Y)  \\
    &\qquad \qquad \left. - \int_{(X\times Y)^2} \Omega\left(\frac{|\omega_X - \omega_Y|}{2\delta} \right) \sqrt{\frac{\mathrm{d}\gamma_0}{\mathrm{d}\gamma}\frac{\mathrm{d}\gamma_1}{\mathrm{d}\gamma}\frac{\mathrm{d}\gamma_0}{\mathrm{d}\gamma}\frac{\mathrm{d}\gamma_1}{\mathrm{d}\gamma}} \mathrm{d}\gamma \mathrm{d}\gamma\right),
    \end{split}
    \end{equation}
    where the function arguments in the integrand are suppressed to save space, but should agree with those in Lemma \ref{lem:CGW_loss_formula}.

We assume throughout the rest of this subsection that the function 
$\Omega: [0,\infty) \to \mathbb{R}$ satisfies a global lower bound and a local expansion at the origin. 
Specifically, there exists a constant $C \ge 0$ such that
\begin{equation}\label{eqn:Omega_global_lower}
    \Omega(z) \ge 1 - C z^2 \qquad \text{for all } z \ge 0,
\end{equation}
and there exist constants $C' \ge 0$ and $r > 0$ such that
\begin{equation}\label{eqn:Omega_local_bounds}
    \Omega(z) \le 1 - C z^2 + C' z^4 \qquad \text{for all } z \in [0,r].
\end{equation}
The constant $C$ is uniquely determined by the choice $\Omega$, and its value is crucial for the definition of the limiting functional. In applications, the argument  $z = \frac{|\omega_X(x,x') - \omega_Y(y,y')|}{2\delta} \leq \frac{M}{2\delta}\leq r$
is uniformly bounded for some $M > 0$ such that $\delta \ge \frac{M}{2r}$. 
Hence, for sufficiently large $\delta$, one has $z \in [0,r]$, so that the local bound \eqref{eqn:Omega_local_bounds} applies.
These bounds are satisfied, for instance, by the following choices of \(\Omega(z)\):

\begin{equation}\label{eq:ineq_cos}
    1 - \frac{z^2}{2} \leq \Omega(z) = \overline{\cos}(z) \leq 1 - \frac{z^2}{2} + \frac{z^4}{24},
\end{equation}

\begin{equation}\label{eq:ineq_exp}
    1 - z^2 \leq \Omega(z) = \exp(-z^2) \leq 1 - z^2 + \frac{z^4}{2}.
\end{equation}

    Finally, we introduce the functional 
    \[
    \mathbf{H}_\infty:\Gamma(\mu_X, \mu_Y) \to \R \cup \{\infty\},
    \]
    which is defined according to the following cases:
    \begin{itemize}[leftmargin=*]
    \item If $\gamma_0 = 0$ or $\gamma_1 = 0$ (equivalently, $\mu_X = 0$ or $\mu_Y = 0$), then $\mathbf{H}_\infty(\gamma_0,\gamma_1) \coloneqq 0$. We note that $\mathbf{H}_\infty = 0$ whenever one of the semi-couplings has zero mass. This is a direct consequence of the normalization in $\widehat{\mathbf{H}}_\delta$, where the diverging terms are subtracted to obtain a finite limit. 
    \item If $\gamma_1 = \alpha \gamma_0$ for $\alpha \coloneqq \mu_Y(Y)/\mu_X(X)$ (assuming $\mu_X(X) > 0$), then 
    \[
    \mathbf{H}_\infty(\gamma_0,\gamma_1) \coloneqq 2 C \alpha \int_{(X \times Y)^2}|\omega_X(x,x')-\omega_Y(y,y')|^2 \textnormal{d}\gamma_0(x,y) \textnormal{d}\gamma_0(x',y'),
    \]
    where $C$ is as in \eqref{eqn:Omega_global_lower} and \eqref{eqn:Omega_local_bounds}.
    \item If neither of the previous conditions apply, then $\mathbf{H}_\infty(\gamma_0,\gamma_1) \coloneqq \infty$. 
    \end{itemize} 
    
    With the definitions and assumptions above, the main result of this subsection is stated as follows.

\begin{theorem}[$\Gamma$-Convergence for CGW]\label{thm:Gamma_convergence}
    Let $\mathcal{N}_X$ and $\mathcal{N}_Y$ be measure networks, and let $\widehat{\mathbf{H}}_\delta$ and $\mathbf{H}_\infty$ be the functionals defined above.  Then we have $\widehat{\mathbf{H}}_{\delta}$ $\stackrel{\Gamma}{\to}$ $\mathbf{H}_{\infty}$.
\end{theorem}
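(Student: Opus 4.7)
The plan is to decompose the augmented functional using the Hellinger affinity
\[
A_H(\gamma_0,\gamma_1) \coloneqq \int_{X \times Y} \sqrt{\tfrac{\mathrm{d}\gamma_0}{\mathrm{d}\gamma}\tfrac{\mathrm{d}\gamma_1}{\mathrm{d}\gamma}} \, \mathrm{d}\gamma
\]
and the associated Hellinger geometric mean measure $\sigma \coloneqq \sqrt{\gamma_0\gamma_1}$ (so $A_H(\gamma_0,\gamma_1) = \sigma(X\times Y)$). Observing that the square-root integrand in \eqref{eqn:augmented_CGW} factors as $\rho(x,y)\rho(x',y')$ for $\rho = \mathrm{d}\sigma/\mathrm{d}\gamma$, the functional rewrites as
\[
\widehat{\mathbf{H}}_\delta(\gamma_0,\gamma_1) = 8\delta^2 \bigl(\mu_X(X)\mu_Y(Y) - A_H(\gamma_0,\gamma_1)^2\bigr) + 8\delta^2 \int_{(X\times Y)^2} \bigl(1-\Omega(z)\bigr) \,\mathrm{d}\sigma\otimes\mathrm{d}\sigma,
\]
where $z = |\omega_X(x,x')-\omega_Y(y,y')|/(2\delta)$. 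By Cauchy--Schwarz, $\mu_X(X)\mu_Y(Y) - A_H^2 \geq 0$ with equality if and only if $\gamma_1 = \alpha\gamma_0$ for $\alpha = \mu_Y(Y)/\mu_X(X)$ (or a marginal vanishes). The first term is a mass-compatibility penalty amplified to order $\delta^2$, and the second is the residual transport cost.

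For the Lim-Sup inequality, I would take the constant recovery sequence $(\gamma_0^\delta,\gamma_1^\delta) = (\gamma_0,\gamma_1)$. The inequality is trivial if $\mathbf{H}_\infty = \infty$ or a marginal vanishes. Otherwise $\gamma_1 = \alpha\gamma_0$, killing the first term of the decomposition; choosing $\gamma = \gamma_0$ as reference yields $\mathrm{d}\sigma = \sqrt{\alpha}\,\mathrm{d}\gamma_0$, and the global bound $1-\Omega(z) \leq C z^2 = C|\omega_X-\omega_Y|^2/(4\delta^2)$ from \eqref{eqn:Omega_global_lower} cancels the $\delta^2$ factor to give $\widehat{\mathbf{H}}_\delta(\gamma_0,\gamma_1) \leq 2C\alpha \int |\omega_X-\omega_Y|^2\,\mathrm{d}\gamma_0\mathrm{d}\gamma_0 = \mathbf{H}_\infty(\gamma_0,\gamma_1)$ for all $\delta$.

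For the Lim-Inf inequality, let $(\gamma_0^n,\gamma_1^n) \rightharpoonup^* (\gamma_0,\gamma_1)$ and $\delta_n \to \infty$, and assume by subsequence that the limit is finite (else trivial). The $\delta_n^2$ prefactor on the mass-compatibility term forces $A_H(\gamma_0^n,\gamma_1^n)^2 \to \mu_X(X)\mu_Y(Y)$. Weak-$\ast$ upper semi-continuity of $A_H$ (from lower semi-continuity of the squared Hellinger distance $H^2 = \gamma_0(X\times Y)+\gamma_1(X\times Y) - 2A_H$ and the fixed total masses) combined with the Cauchy--Schwarz upper bound then forces $A_H(\gamma_0,\gamma_1)^2 = \mu_X(X)\mu_Y(Y)$, giving $\gamma_1 = \alpha\gamma_0$ (absent the trivial case). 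For the residual term, the local bound $\Omega(z) \leq 1 - Cz^2 + C'z^4$ of \eqref{eqn:Omega_local_bounds} yields, for $\delta_n > a_1$,
\[
\widehat{\mathbf{H}}_{\delta_n}(\gamma_0^n,\gamma_1^n) \geq 2C \int_{(X\times Y)^2} |\omega_X-\omega_Y|^2 \,\mathrm{d}\sigma^n\otimes\mathrm{d}\sigma^n - \tfrac{C'M^2}{2\delta_n^2}\, A_H(\gamma_0^n,\gamma_1^n)^2,
\]
where $\sigma^n \coloneqq \sqrt{\gamma_0^n\gamma_1^n}$, and the error term vanishes.

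The main technical obstacle is passing to the limit in $\int |\omega_X-\omega_Y|^2\,\mathrm{d}\sigma^n\otimes\mathrm{d}\sigma^n$. My plan is to first establish $\sigma^n \rightharpoonup^* \sqrt{\alpha}\gamma_0$ by exploiting Hellinger control: the convergence $A_H(\gamma_0^n,\gamma_1^n)^2 \to \mu_X(X)\mu_Y(Y)$ translates into $H^2(\gamma_1^n,\alpha\gamma_0^n) = 2\mu_Y(Y) - 2\sqrt{\alpha}\,A_H(\gamma_0^n,\gamma_1^n) \to 0$, and a Cauchy--Schwarz bound on densities yields the total-variation estimate $\|\sigma^n - \sqrt{\alpha}\gamma_0^n\|_{\mathrm{TV}} \leq \sqrt{\mu_X(X)}\,H(\gamma_1^n,\alpha\gamma_0^n) \to 0$; combined with $\sqrt{\alpha}\gamma_0^n \rightharpoonup^* \sqrt{\alpha}\gamma_0$, this gives $\sigma^n \rightharpoonup^* \sqrt{\alpha}\gamma_0$. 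Then, since $|\omega_X-\omega_Y|^2$ is only bounded measurable, a Vitali--Carath\'eodory approximation from below by non-negative lower semi-continuous functions, combined with $\sigma^n\otimes\sigma^n \rightharpoonup^* \sigma\otimes\sigma$ and the Portmanteau theorem, delivers $\liminf \int |\omega_X-\omega_Y|^2\,\mathrm{d}\sigma^n\otimes\mathrm{d}\sigma^n \geq \alpha \int |\omega_X-\omega_Y|^2\,\mathrm{d}\gamma_0\mathrm{d}\gamma_0 = \mathbf{H}_\infty(\gamma_0,\gamma_1)/(2C)$. This Hellinger-based identification of the limit of $\sigma^n$ under the constraint $A_H^2 \to \mu_X(X)\mu_Y(Y)$ is the key new ingredient relative to the Wasserstein--Fisher--Rao analog in \cite[Theorem 5.10]{chizat2018unbalanced}.
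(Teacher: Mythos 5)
Your decomposition is the same one the paper uses: your mass-compatibility term $8\delta^2\bigl(\mu_X(X)\mu_Y(Y)-A_H^2\bigr)$ is exactly the term $8\delta^2 U$ of Lemma \ref{lem:H_hat_decomposition} (the square-root integrand factors as $\rho(x,y)\rho(x',y')$, so $U=\mu_X(X)\mu_Y(Y)-A_H^2$), and your lim-sup argument (constant recovery sequence, case analysis, global bound \eqref{eqn:Omega_global_lower}) coincides with the paper's. The lim-inf is where you genuinely diverge. The paper never identifies the limit of the geometric-mean measures: it proves weak* lower semicontinuity of the fixed-$\delta_1$ functional $\widehat{\mathbf{H}}_{\delta_1}$ and of $U$ via \cite[Lemma 2.9]{chizat2018unbalanced}, applies these along the sequence (Corollary \ref{cor:liminf_bound}), and only then sends $\delta_1\to\infty$ in the decomposition evaluated at the limit point. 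You instead extract the quantitative information $A_H(\gamma_0^n,\gamma_1^n)^2\to\mu_X(X)\mu_Y(Y)$, convert it into $H^2(\gamma_1^n,\alpha\gamma_0^n)\to 0$ and thence the total-variation estimate $\|\sigma^n-\sqrt{\alpha}\,\gamma_0^n\|_{\mathrm{TV}}\to 0$; this Hellinger-to-TV identification of the limit of $\sigma^n$ is correct (the computations check out) and is more informative than the paper's abstract semicontinuity argument, since it says where the geometric-mean mass actually goes.

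However, your final limit passage has a genuine gap. The Vitali--Carath\'eodory theorem approximates an integrable function from \emph{below} by \emph{upper} semicontinuous functions and from \emph{above} by \emph{lower} semicontinuous ones --- not from below by lsc functions, which is what the Portmanteau lim-inf inequality requires. Moreover the statement you actually need, $\liminf_n\int f\,\mathrm{d}\mu_n\geq\int f\,\mathrm{d}\mu$ for $f$ bounded, nonnegative and merely Borel under $\mu_n\weakstarto\mu$, is false in general (take $f=\mathbf{1}_{\{0\}}$, $\mu_n=\delta_{1/n}$, $\mu=\delta_0$). Your TV control does not rescue this: it lets you replace $\sigma^n\otimes\sigma^n$ by $\alpha\,\gamma_0^n\otimes\gamma_0^n$, but the $\gamma_0^n$ still converge only weakly*. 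So your argument requires $(\omega_X-\omega_Y)^2$ to be lower semicontinuous (e.g.\ $\omega_X,\omega_Y$ continuous), which Definition \ref{def:measure_network} does not grant. To be fair, the paper's own appeal to \cite[Lemma 2.9]{chizat2018unbalanced} for the lower semicontinuity of $V$ and $\widehat{\mathbf{H}}_{\delta_1}$ carries the same hidden regularity requirement on the kernels; but as written your step misquotes the approximation theorem, and you should either add a continuity hypothesis on the kernels or route the residual term through the lower semicontinuity of the fixed-$\delta$ functional as the paper does.
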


The strategy of the proof is inspired by that of \cite[Theorem 5.10]{chizat2018unbalanced}, which considers variational convergence in the WFR setting. We will use the following preliminary results, which both assume fixed measure networks $\mathcal{N}_X$ and $\mathcal{N}_Y$ and the polynomial estimates on $\Omega$ described above. 

\begin{lemma}\label{lem:H_hat_decomposition}
    For sufficiently large $\delta$, the adapted $\CGW$ loss $\widehat{\mathbf{H}}_\delta$ in \eqref{eqn:augmented_CGW} can be expressed as 
    \begin{equation}\label{eqn:H_hat_decomposition}
    \widehat{\mathbf{H}}_\delta(\gamma_0,\gamma_1) = 8\delta^2 U(\gamma_0,\gamma_1) + 2 C V(\gamma_0,\gamma_1) -\frac{C'}{2\delta^2} W(\gamma_0,\gamma_1),
    \end{equation}
    where 
    \begin{itemize}[leftmargin=*]
        \item $C$ and $C'$ are as in \eqref{eqn:Omega_global_lower} and \eqref{eqn:Omega_local_bounds}, respectively;
        \item $U$ is a lower semi-continuous function such that $U(\gamma_0,\gamma_1) \geq 0$, with equality if and only if $\gamma_0 = 0$, $\gamma_1 = 0$, or $\gamma_0 = \frac{\mu_Y(Y)}{\mu_X(X)} \gamma_1$;
        \item $V$ is a function defined by 
        \[
        V(\gamma_0,\gamma_1) \coloneqq  \int_{(X \times Y)^2} (\omega_X - \omega_Y)^2 \sqrt{\frac{\mathrm{d}\gamma_0}{\mathrm{d}\gamma}\frac{\mathrm{d}\gamma_1}{\mathrm{d}\gamma}\frac{\mathrm{d}\gamma_0}{\mathrm{d}\gamma}\frac{\mathrm{d}\gamma_1}{\mathrm{d}\gamma}} \mathrm{d}\gamma \mathrm{d}\gamma
        \]
        (where function arguments in the integrands are suppressed for space);
        \item $W$ is a non-negative, bounded function. 
    \end{itemize}
\end{lemma}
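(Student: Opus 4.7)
The plan is to substitute the two-sided polynomial bracket on $\Omega$ directly into formula \eqref{eqn:augmented_CGW} for $\widehat{\mathbf{H}}_\delta$, read off the three resulting terms as $U$, $V$, $W$, and then verify their properties separately. I would first introduce the remainder $R(z) := \Omega(z) - (1 - C z^2)$; the lower bound \eqref{eqn:Omega_global_lower} forces $R \geq 0$ globally, and \eqref{eqn:Omega_local_bounds} gives $R(z) \leq C' z^4$ whenever $z \in [0, M/a_1]$. Since $\omega_X, \omega_Y$ are bounded by Definition \ref{def:measure_network}, I can fix $M$ with $|\omega_X(x,x') - \omega_Y(y,y')| \leq M$ everywhere, so for $\delta > a_1$ the argument $z = |\omega_X - \omega_Y|/(2\delta)$ lies in $[0, M/(2\delta)] \subset [0, M/a_1]$ and both bounds on $R$ hold pointwise. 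This is the quantitative meaning of ``sufficiently large $\delta$.''

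Plugging $\Omega(z) = 1 - C z^2 + R(z)$ into \eqref{eqn:augmented_CGW} and using $z^2 = (\omega_X - \omega_Y)^2 / (4\delta^2)$, one reads off
\[
U(\gamma_0,\gamma_1) := \mu_X(X)\mu_Y(Y) - \int_{(X \times Y)^2} \sqrt{\tfrac{\mathrm{d}\gamma_0}{\mathrm{d}\gamma}\tfrac{\mathrm{d}\gamma_1}{\mathrm{d}\gamma}\tfrac{\mathrm{d}\gamma_0}{\mathrm{d}\gamma}\tfrac{\mathrm{d}\gamma_1}{\mathrm{d}\gamma}} \, \mathrm{d}\gamma \, \mathrm{d}\gamma,
\]
$V$ exactly as stated in the lemma (with the coefficient $2C$ arising from $8\delta^2 \cdot C / (4\delta^2)$), and
\[
W(\gamma_0,\gamma_1) := \frac{16\delta^4}{C'} \int_{(X\times Y)^2} R(z) \sqrt{\tfrac{\mathrm{d}\gamma_0}{\mathrm{d}\gamma}\tfrac{\mathrm{d}\gamma_1}{\mathrm{d}\gamma}\tfrac{\mathrm{d}\gamma_0}{\mathrm{d}\gamma}\tfrac{\mathrm{d}\gamma_1}{\mathrm{d}\gamma}} \, \mathrm{d}\gamma \, \mathrm{d}\gamma,
\]
chosen so that the last summand is exactly $-(C'/(2\delta^2))W$. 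Independence from the reference $\gamma$ is inherited from the $1$-homogeneity of each integrand in the pair $(\mathrm{d}\gamma_0/\mathrm{d}\gamma,\mathrm{d}\gamma_1/\mathrm{d}\gamma)$, precisely as for $\mathbf{H}_\delta$ itself.

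The non-negativity and equality structure of $U$, together with the boundedness of $W$, would reduce to Cauchy-Schwarz. The double integral in $U$ factors as $B(\gamma_0,\gamma_1)^2$, where $B(\gamma_0,\gamma_1) := \int \sqrt{(\mathrm{d}\gamma_0/\mathrm{d}\gamma)(\mathrm{d}\gamma_1/\mathrm{d}\gamma)} \, \mathrm{d}\gamma$ is the Bhattacharyya-type coefficient, and Cauchy-Schwarz gives $B^2 \leq \mu_X(X)\mu_Y(Y)$ with equality exactly when $\sqrt{\mathrm{d}\gamma_0/\mathrm{d}\gamma}$ and $\sqrt{\mathrm{d}\gamma_1/\mathrm{d}\gamma}$ are proportional on $\mathrm{supp}(\gamma)$; in the non-degenerate case the marginal mass constraints then pin down the proportionality constant stated in the lemma. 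For $W$, non-negativity is immediate from $R \geq 0$, and the upper bound $R(z) \leq C'z^4$ yields
\[
W \leq \int_{(X\times Y)^2} (\omega_X - \omega_Y)^4 \sqrt{\cdots} \, \mathrm{d}\gamma \, \mathrm{d}\gamma \leq M^4 B^2 \leq M^4 \mu_X(X)\mu_Y(Y),
\]
a bound uniform over $\Gamma(\mu_X,\mu_Y)$.

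The one step I expect to require care is the weak-$*$ lower semi-continuity of $U$ and $V$. The key observation to exploit is that $(a,b) \mapsto -\sqrt{ab}$ is convex and positively $1$-homogeneous on $\mathbb{R}_{\geq 0}^2$, so Reshetnyak-type theorems---essentially the content of \cite[Lemma 2.9]{chizat2018unbalanced}---give weak-$*$ lsc of $-B$ and hence usc of the non-negative bounded functional $B^2$, which delivers lsc of $U$. For $V$ the integrand lives on the product space against $\gamma \otimes \gamma$; I would recast $V = \iint (\omega_X - \omega_Y)^2 \, \mathrm{d}H(\xi) \, \mathrm{d}H(\xi')$ for the ``Hellinger measure'' $H$ defined by $\mathrm{d}H/\mathrm{d}\gamma = \sqrt{(\mathrm{d}\gamma_0/\mathrm{d}\gamma)(\mathrm{d}\gamma_1/\mathrm{d}\gamma)}$, and then combine the weak-$*$ usc of $(\gamma_0,\gamma_1) \mapsto H$ (again by Reshetnyak) with the continuity and boundedness of $(\omega_X - \omega_Y)^2$ to tensorize and conclude.
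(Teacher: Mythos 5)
Your decomposition is the same one the paper uses: substitute the two-sided polynomial bracket on $\Omega$ into \eqref{eqn:augmented_CGW} and read off $U$, $V$, $W$ from the three resulting brackets. Your version is in fact a bit tighter in places: keeping the remainder $R(z)=\Omega(z)-(1-Cz^2)$ explicit replaces the paper's interpolation factor $I(\gamma_0,\gamma_1)\in[0,1]$ and yields the clean uniform bound $0\le W\le M^4\,\mu_X(X)\mu_Y(Y)$; and factoring the double integral in $U$ as $B(\gamma_0,\gamma_1)^2$ makes the Cauchy--Schwarz equality case (proportionality of the densities, hence $\gamma_0=0$, $\gamma_1=0$, or $\gamma_1=\tfrac{\mu_Y(Y)}{\mu_X(X)}\gamma_0$) immediate, whereas the paper applies Cauchy--Schwarz on the product space and then appeals to a separate argument to extract proportionality. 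Your route to lower semicontinuity of $U$ --- convexity and $1$-homogeneity of $-\sqrt{ab}$ giving weak-$*$ lsc of $-B$, hence usc of the bounded nonnegative $B$, hence usc of $B^2$ and lsc of $U$ --- is also sound and arguably more careful than the paper's bare citation.

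The one step that does not hold up as written is the lower semicontinuity of $V$. Upper semicontinuity of the Hellinger measure $H$ together with a nonnegative weight $(\omega_X-\omega_Y)^2$ gives \emph{upper}, not lower, semicontinuity of $V$; your own sketch points in the wrong direction. Indeed $V$ is generally not weak-$*$ lsc: taking $\gamma_{0,n},\gamma_{1,n}$ mutually singular (so $H_n=0$ and $V_n=0$) but converging to a common limit with $H\ne 0$ and $\omega_X\ne\omega_Y$ on its support produces $\liminf_n V_n=0<V(\gamma_0,\gamma_1)$. To be fair, the paper's own proof has the identical defect --- it cites \cite[Lemma 2.9]{chizat2018unbalanced} for $V$, but that lemma requires a \emph{convex}, nonnegative, $1$-homogeneous integrand, while $(\omega_X-\omega_Y)^2\sqrt{ab}$ is concave in the densities --- and the lsc of $V$ is never actually invoked downstream (Corollary \ref{cor:liminf_bound} uses only the lsc of $\widehat{\mathbf{H}}_{\delta_1}$ and of $U$). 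So this is a shared gap in the statement rather than a flaw unique to your argument, but you should not present the usc of $H$ as delivering the lsc of $V$.
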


\begin{proof}
    Using the bounds \eqref{eqn:Omega_global_lower} and \eqref{eqn:Omega_local_bounds}, and the expression \eqref{eqn:augmented_CGW}, there must be a number $I(\gamma_0,\gamma_1) \in [0,1]$ such that 
    \begin{align*}
        \widehat{\mathbf{H}}_\delta(\gamma_0,\gamma_1) = &8\delta^2 \underbrace{\left[\mu_X(X) \mu_Y(Y) - \int_{(X \times Y)^2} \sqrt{\frac{\mathrm{d}\gamma_0}{\mathrm{d}\gamma}\frac{\mathrm{d}\gamma_1}{\mathrm{d}\gamma}\frac{\mathrm{d}\gamma_0}{\mathrm{d}\gamma}\frac{\mathrm{d}\gamma_1}{\mathrm{d}\gamma}} \mathrm{d}\gamma \mathrm{d}\gamma  \right]}_{:=U} \\
        &+ 2 C \underbrace{\left[\int_{(X \times Y)^2} (\omega_X - \omega_Y)^2 \sqrt{\frac{\mathrm{d}\gamma_0}{\mathrm{d}\gamma}\frac{\mathrm{d}\gamma_1}{\mathrm{d}\gamma}\frac{\mathrm{d}\gamma_0}{\mathrm{d}\gamma}\frac{\mathrm{d}\gamma_1}{\mathrm{d}\gamma}} \mathrm{d}\gamma \mathrm{d}\gamma \right]}_{:=V} \\
        & - \frac{C'}{2\delta^2} \underbrace{\left[ I(\gamma_0,\gamma_1) \int_{(X \times Y)^2} (\omega_X - \omega_Y)^4 \sqrt{\frac{\mathrm{d}\gamma_0}{\mathrm{d}\gamma}\frac{\mathrm{d}\gamma_1}{\mathrm{d}\gamma}\frac{\mathrm{d}\gamma_0}{\mathrm{d}\gamma}\frac{\mathrm{d}\gamma_1}{\mathrm{d}\gamma}} \mathrm{d}\gamma \mathrm{d}\gamma\right]}_{:=W}.
    \end{align*}
    The functions $U$, $V$ and $W$ are then defined accordingly by the bracketed formulas in the respective lines of this expression. It remains to show that these functions satisfy the claimed properties. 

    The function $U$ is lower semi-continuous by \cite[Lemma 5.12]{chizat2018unbalanced}. Moreover, Cauchy-Schwarz applied to the functions 
    \begin{equation}\label{eqn:CS_functions}
        (x,y,x',y') \mapsto \sqrt{\frac{\mathrm{d}\gamma_i}{\mathrm{d}\gamma}(x,y)\frac{\mathrm{d}\gamma_i}{\mathrm{d}\gamma}(x',y')}, \qquad i \in \{0,1\},
    \end{equation}
    implies that 
    \begin{align*}
    &\int_{(X \times Y)^2} \sqrt{\frac{\mathrm{d}\gamma_0}{\mathrm{d}\gamma}\frac{\mathrm{d}\gamma_1}{\mathrm{d}\gamma}\frac{\mathrm{d}\gamma_0}{\mathrm{d}\gamma}\frac{\mathrm{d}\gamma_1}{\mathrm{d}\gamma}} \mathrm{d}\gamma \mathrm{d}\gamma \\
    &\qquad \leq \left(\int_{(X \times Y)^2} \frac{\mathrm{d}\gamma_0}{\mathrm{d}\gamma} \frac{\mathrm{d}\gamma_0}{\mathrm{d} \gamma} \mathrm{d}\gamma \mathrm{d}\gamma \right)^{1/2} \left(\int_{(X \times Y)^2} \frac{\mathrm{d}\gamma_1}{\mathrm{d}\gamma} \frac{\mathrm{d}\gamma_1}{\mathrm{d} \gamma} \mathrm{d}\gamma \mathrm{d}\gamma \right)^{1/2} = \mu_X(X)\mu_Y(Y),
    \end{align*}
    with equality if and only if the $i=0$ and $i=1$ instances of the function \eqref{eqn:CS_functions} are linearly dependent. An argument similar to the one used in the proof of Theorem \ref{thm:psuedometric} shows that this implies that $\gamma_0 = 0$ or $\gamma_1 = 0$ or $\gamma_1 = \alpha \gamma_0$ for some constant $\alpha$, which is then forced to be $\alpha = \mu_Y(Y)/ \mu_X(X)$.
 
    Finally, $W$ is bounded due to the fact that $I(\gamma_0,\gamma_1) \in [0,1]$ and $\omega_X$ and $\omega_Y$ are assumed to be bounded kernels. 
\end{proof}

\begin{corollary}\label{cor:liminf_bound}
    Fix natural numbers $\delta_1 \leq \delta_2 \leq \delta$ with $\delta_1$  sufficiently large, in the sense of Lemma \ref{lem:H_hat_decomposition}. Let $(\gamma_0,\gamma_1) \in \Gamma(\mu_X,\mu_Y)$ and let $\{(\gamma_{0,\delta},\gamma_{1,\delta})\}_{\delta \in \mathbb{N}}$ be a sequence in $\Gamma(\mu_X,\mu_Y)$ which converges weakly to $(\gamma_0,\gamma_1)$. Then there exists a constant $K$ (depending on the measure networks $\mathcal{N}_X$ and $\mathcal{N}_Y$, and the polynomial constants $C$ and $C'$) such that 
    \[
    \liminf_{\delta \to \infty} \widehat{\mathbf{H}}_\delta(\gamma_{0,\delta},\gamma_{1,\delta}) \geq \widehat{\mathbf{H}}_{\delta_1}(\gamma_0,\gamma_1) + 8 (\delta_2^2 - \delta_1^2) U(\gamma_0,\gamma_1) - \frac{K}{\delta_1^2},
    \]
    with $U$ as in Lemma \ref{lem:H_hat_decomposition}.
\end{corollary}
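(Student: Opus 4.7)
The plan is to apply the decomposition from Lemma \ref{lem:H_hat_decomposition} at two different scales — once along the sequence at scale $\delta$, and once at the limit at scale $\delta_1$ — and then exploit the lower semi-continuity of $U$ and $V$ together with the boundedness of $W$ to extract the claimed liminf bound. All three of $U,V,W$ below are evaluated at semicouplings indicated by their arguments.

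\textbf{Step 1: Comparing the $\delta$- and $\delta_1$-decompositions at $(\gamma_{0,\delta},\gamma_{1,\delta})$.} Using Lemma \ref{lem:H_hat_decomposition} at scales $\delta$ and $\delta_1$ (both valid once $\delta \geq \delta_2 \geq \delta_1$), the $V$-terms cancel upon subtraction, and I obtain the identity
\begin{equation*}
\widehat{\mathbf{H}}_\delta(\gamma_{0,\delta},\gamma_{1,\delta}) = \widehat{\mathbf{H}}_{\delta_1}(\gamma_{0,\delta},\gamma_{1,\delta}) + 8(\delta^2-\delta_1^2)\, U(\gamma_{0,\delta},\gamma_{1,\delta}) + \tfrac{C'}{2}\!\left(\tfrac{1}{\delta_1^2}-\tfrac{1}{\delta^2}\right) W(\gamma_{0,\delta},\gamma_{1,\delta}).
\end{equation*}
Since $W \geq 0$ and $\tfrac{1}{\delta_1^2}-\tfrac{1}{\delta^2} \geq 0$, the final term is non-negative and may be dropped, and since $U \geq 0$ and $\delta^2 - \delta_1^2 \geq \delta_2^2 - \delta_1^2$, the middle term can be weakened to give
\begin{equation*}
\widehat{\mathbf{H}}_\delta(\gamma_{0,\delta},\gamma_{1,\delta}) \geq \widehat{\mathbf{H}}_{\delta_1}(\gamma_{0,\delta},\gamma_{1,\delta}) + 8(\delta_2^2-\delta_1^2)\, U(\gamma_{0,\delta},\gamma_{1,\delta}).
\end{equation*}

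\textbf{Step 2: Passing to the liminf.} Applying subadditivity of liminf and lower semi-continuity of $U$ (Lemma \ref{lem:H_hat_decomposition}) yields
\begin{equation*}
\liminf_{\delta \to \infty}\widehat{\mathbf{H}}_\delta(\gamma_{0,\delta},\gamma_{1,\delta}) \geq \liminf_{\delta \to \infty}\widehat{\mathbf{H}}_{\delta_1}(\gamma_{0,\delta},\gamma_{1,\delta}) + 8(\delta_2^2-\delta_1^2)\, U(\gamma_0,\gamma_1).
\end{equation*}
For the remaining liminf, expand $\widehat{\mathbf{H}}_{\delta_1}(\gamma_{0,\delta},\gamma_{1,\delta})$ via Lemma \ref{lem:H_hat_decomposition}. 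Lower semi-continuity of $U$ and $V$ handles the first two summands, while the uniform bound $W \leq M_W$ (which exists because $W$ is bounded by Lemma \ref{lem:H_hat_decomposition}, with $M_W$ depending only on $\mathcal{N}_X,\mathcal{N}_Y$, and $C'$) gives $-\tfrac{C'}{2\delta_1^2}W(\gamma_{0,\delta},\gamma_{1,\delta}) \geq -\tfrac{C' M_W}{2\delta_1^2}$. Thus
\begin{equation*}
\liminf_{\delta \to \infty} \widehat{\mathbf{H}}_{\delta_1}(\gamma_{0,\delta},\gamma_{1,\delta}) \geq 8\delta_1^2 U(\gamma_0,\gamma_1) + 2C V(\gamma_0,\gamma_1) - \tfrac{C' M_W}{2\delta_1^2}.
\end{equation*}

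\textbf{Step 3: Relating the bound to $\widehat{\mathbf{H}}_{\delta_1}(\gamma_0,\gamma_1)$.} Applying Lemma \ref{lem:H_hat_decomposition} once more at the limiting semicoupling and using $W(\gamma_0,\gamma_1) \geq 0$ gives $\widehat{\mathbf{H}}_{\delta_1}(\gamma_0,\gamma_1) \leq 8\delta_1^2 U(\gamma_0,\gamma_1) + 2C V(\gamma_0,\gamma_1)$, so the previous line becomes
\begin{equation*}
\liminf_{\delta \to \infty}\widehat{\mathbf{H}}_{\delta_1}(\gamma_{0,\delta},\gamma_{1,\delta}) \geq \widehat{\mathbf{H}}_{\delta_1}(\gamma_0,\gamma_1) - \tfrac{C' M_W}{2\delta_1^2}.
\end{equation*}
Combining with Step 2 yields the stated bound with $K := C' M_W / 2$.

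The algebra is routine once the decomposition is in hand; the only point requiring care is tracking the directions of the inequalities so that both $W \geq 0$ (used at the limit, to absorb the $-\tfrac{C'}{2\delta_1^2}W(\gamma_0,\gamma_1)$ term into an upper bound on $\widehat{\mathbf{H}}_{\delta_1}(\gamma_0,\gamma_1)$) and $W \leq M_W$ (used along the sequence, to control the same term in a lower bound) can be deployed consistently. No anticipated obstacles beyond this bookkeeping.
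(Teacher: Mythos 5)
Your proposal is correct and follows essentially the same route as the paper: apply the decomposition of Lemma \ref{lem:H_hat_decomposition} at scales $\delta$ and $\delta_1$, compare, pass to the liminf using lower semi-continuity of $U$ (and $V$), and absorb the bounded $W$-term into the $-K/\delta_1^2$ slack. The only (cosmetic) difference is where that slack is generated — you introduce it when relating $\liminf_\delta \widehat{\mathbf{H}}_{\delta_1}(\gamma_{0,\delta},\gamma_{1,\delta})$ to $\widehat{\mathbf{H}}_{\delta_1}(\gamma_0,\gamma_1)$ via the decomposition, while the paper bounds the $W$-difference term along the sequence and then invokes lower semi-continuity of $\widehat{\mathbf{H}}_{\delta_1}$ directly; your sign for that difference term is in fact the correct one.
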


\begin{proof}
    For $\delta_1 \leq \delta_2 \leq \delta$, with $\delta_1$ sufficiently large, the expression from Lemma \ref{lem:H_hat_decomposition} yields
    \begin{align*}
    &\widehat{\mathbf{H}}_{\delta}(\gamma_{0,\delta},\gamma_{1,\delta}) - \widehat{\mathbf{H}}_{\delta_1}(\gamma_{0,\delta},\gamma_{1,\delta}) \\
    &\qquad \qquad = 8(\delta^2 - \delta_1^2) U(\gamma_{0,\delta},\gamma_{1,\delta}) - \frac{C'}{2}\left(\frac{1}{\delta^2}-\frac{1}{\delta^2_1} \right)W(\gamma_{0,\delta},\gamma_{1,\delta}),
    \end{align*}
    for some $I(\gamma_0,\gamma_1)\in [-1,1]$ in the definition of $W(\gamma_{0,\delta},\gamma_{1,\delta})$ accounting for the difference in the sign. We now assume without loss of generality that $\delta \geq \delta_1$, so that, also invoking the boundedness of $W$, we may choose $K$ satisfying
    \begin{equation}\label{eqn:bound_W_C}
    -\frac{C'}{2}\left(\frac{1}{\delta^2}-\frac{1}{\delta^2_1}\right)W(\gamma_{0,\delta},\gamma_{1,\delta}) \geq -\frac{K}{\delta_1^2}.
    \end{equation}
    Using \eqref{eqn:bound_W_C} and the fact $\delta \geq \delta_2$, we can then bound 
    \begin{align*} 
    \widehat{\mathbf{H}}_{\delta}(\gamma_{0,\delta},\gamma_{1,\delta}) \geq \widehat{\mathbf{H}}_{\delta_1}(\gamma_{0,\delta},\gamma_{1,\delta}) + 8(\delta_2^2 - \delta_1^2) U(\gamma_{0,\delta},\gamma_{1,\delta}) - \frac{K}{\delta_1^2}\\
    \end{align*}
    The desired inequality follows by taking the lim-inf and applying lower semi-continuity of the functions $\widehat{\mathbf{H}}_{\delta_1}$ and $U$.
\end{proof}

We proceed with the proof of the main result of this subsection.

\begin{proof}[Proof of Theorem \ref{thm:Gamma_convergence}]
We begin with the \textbf{lim-sup inequality}. Let $(\gamma_0,\gamma_1) \in \Gamma(\mu_X,\mu_Y)$ and consider the constant sequence $\{(\gamma_{0,\delta},\gamma_{1,\delta})\}_{\delta \in \mathbb{N}}$ with $\gamma_{0,\delta} = \gamma_0$, $\gamma_{1,\delta} = \gamma_1$ for all $\delta$. It is obvious that $(\gamma_{0,\delta},\gamma_{1,\delta}) \to (\gamma_0,\gamma_1)$, so we wish to show that $\limsup \widehat{\mathbf{H}}_\delta(\gamma_{0,\delta},\gamma_{1,\delta}) \leq  \mathbf{H}_\infty(\gamma_0,\gamma_1)$. We do so by checking the cases in the definition of $\mathbf{H}_\infty$.  

First, consider the case that $\gamma_0 = 0$ or $\gamma_1 = 0$. Then it follows easily from the expression \eqref{eqn:augmented_CGW} that 
\[
\widehat{\mathbf{H}}_\delta(\gamma_{0,\delta},\gamma_{1,\delta}) = 0 = \mathbf{H}_\infty(\gamma_0,\gamma_1),
\]
and the lim-sup inequality follows. 

Next, consider the case where $\gamma_1 = \alpha \gamma_0$, for $\alpha = \mu_Y(Y)/\mu_X(X)$. Using the expression \eqref{eqn:augmented_CGW} and the bound \eqref{eqn:Omega_global_lower}, we have 
\begin{align*}
    \widehat{\mathbf{H}}_\delta(\gamma_{0,\delta},\gamma_{1,\delta}) &= \widehat{\mathbf{H}}_\delta(\gamma_{0},\alpha \gamma_{0}) \\
    &= 8\delta^2 \left(\mu_X(X)\mu_Y(Y) - \alpha \int_{(X \times Y)^2} \Omega\left(\frac{|\omega_X - \omega_Y|}{2\delta}\right) \mathrm{d}\gamma_0 \mathrm{d}\gamma_0  \right) \\
    &\leq 8\delta^2 \left(\mu_X(X)\mu_Y(Y) - \alpha \int_{(X \times Y)^2} 1- C\left(\frac{|\omega_X - \omega_Y|}{2\delta}\right)^2 \mathrm{d}\gamma_0 \mathrm{d}\gamma_0  \right) \\
    &\leq 8\delta^2 \left(\mu_X(X)\mu_Y(Y) - \alpha \mu_X(X)^2 + \frac{C \alpha}{4 \delta^2}  \int_{(X \times Y)^2} \left(\omega_X - \omega_Y\right)^2 \mathrm{d}\gamma_0 \mathrm{d}\gamma_0  \right) \\
    &= 2C\alpha \int_{(X \times Y)^2} \left(\omega_X - \omega_Y\right)^2 \mathrm{d}\gamma_0 \mathrm{d}\gamma_0 = \mathbf{H}_\infty(\gamma_0,\gamma_1).
\end{align*}
The lim-sup inequality in this case then follows. 

In the last case of the definition of $\mathbf{H}_\infty$ (where $\mathbf{H}_\infty = \infty$), the lim-sup inequality is obvious. The proof of the lim-sup inequality is therefore complete.

Next, we prove the \textbf{lim-inf inequality}. Let $(\gamma_0,\gamma_1) \in \Gamma(\mu_X,\mu_Y)$ be the limit of a sequence of semi-couplings $\{(\gamma_{0,\delta},\gamma_{1,\delta})\}_{\delta \in \mathbb{N}}$. We establish the inequality by checking cases. 

To check the first case of $\mathbf{H}_\infty$, assume without loss of generality that $\gamma_0 = 0$. Then $\mu_X = 0$, so that $\gamma_{0,\delta} = 0$ for all $\delta$. In turn, it follows easily that $\widehat{\mathbf{H}}_\delta(\gamma_{0,\delta},\gamma_{1,\delta}) = 0$ for all $\delta$, whence the lim-inf inequality follows. 

Next, suppose that $\gamma_1 = \alpha \gamma_0$, for $\alpha = \mu_Y(Y)/\mu_X(X)$ ($\mu_X(X) \neq 0$). By Corollary \ref{cor:liminf_bound} and the fact that $U(\gamma_0,\gamma_1) = 0$ (from Lemma \ref{lem:H_hat_decomposition}), we have 
\[
\liminf_{\delta \to \infty} \widehat{\mathbf{H}}_\delta(\gamma_{0,\delta},\gamma_{1,\delta}) \geq \widehat{\mathbf{H}}_{\delta_1}(\gamma_0,\gamma_1) - \frac{K}{\delta_1^2}
\]
for all sufficiently large $\delta_1$. In particular, Lemma \ref{lem:H_hat_decomposition} yields
\begin{align*}
    \liminf_{\delta \to \infty} \widehat{\mathbf{H}}_\delta(\gamma_{0,\delta},\gamma_{1,\delta}) &\geq \lim_{\delta_1 \to \infty} \widehat{\mathbf{H}}_{\delta_1}( \gamma_0,\alpha \gamma_0) \\
    &\geq \lim_{\delta_1 \to \infty} 8\delta_1^2 \cdot 0 + 2 C V(\gamma_0, \gamma_1) - \frac{C'}{2\delta_1^2} W(\gamma_0, 
    \alpha \gamma_0) \\
    &= 2 C V(\gamma_0, \gamma_1) \\
    &= 2 C \alpha \int_{(X \times Y)^2} |\omega_X - \omega_Y|^2 \mathrm{d}\gamma_0 \mathrm{d}\gamma_0 = \mathbf{H}_\infty(\gamma_0,\gamma_1)
\end{align*}

Finally, consider the case where $\mathbf{H}_\infty = \infty$. By Corollary \ref{cor:liminf_bound},
    \[
    \liminf_{\delta \to \infty} \widehat{\mathbf{H}}_\delta(\gamma_{0,\delta},\gamma_{1,\delta}) \geq \widehat{\mathbf{H}}_{\delta_1}(\gamma_0,\gamma_1) + 8 (\delta_2^2 - \delta_1^2) U(\gamma_0,\gamma_1) - \frac{K}{\delta_1^2},
    \]
for all sufficiently large $\delta_1 \leq \delta_2$. Our assumptions on $(\gamma_0,\gamma_1)$ imply, by Lemma \ref{lem:H_hat_decomposition}, that $U(\gamma_0,\gamma_1) > 0$. Since we can take $\delta_2$ arbitrarily larger than $\delta_1$, it follows that 
\[
    \liminf_{\delta \to \infty} \widehat{\mathbf{H}}_\delta(\gamma_{0,\delta},\gamma_{1,\delta}) =\infty = \mathbf{H}_\infty(\gamma_0,\gamma_1),
    \]
and this completes the proof. 
\end{proof}

Next, we state a corollary linking our CGW metric defined in \eqref{eqn:def_cgw} to the $\GW_2$ metric as a consequence of Theorem \ref{thm:Gamma_convergence} (see Figure \ref{fig:matching} for a practical demonstration). This corollary is analogous to the property of WFR distance stated in \cite[Remark 5.11]{chizat2018unbalanced}. The proof is a standard argument in the context of $\Gamma$-convergence.

\begin{corollary}\label{cor:Gamma_convergence_to_GW}
    For probability measure networks $\mathcal{N}_X$ and $\mathcal{N}_Y$, 
    \begin{equation*}
\lim_{\delta \to \infty} \CGW_\delta\left(\mathcal{N}_X, \mathcal{N}_Y\right) = \sqrt{2C} \cdot \GW_2\left(\mathcal{N}_X, \mathcal{N}_Y\right),
\end{equation*}
where $C$ is as in \eqref{eqn:Omega_global_lower} and \eqref{eqn:Omega_local_bounds}. Moreover, the optimal solutions converge in the sense that any sequence of optimal semi-couplings for $\CGW_\delta$ admits a subsequence weakly converging to an optimal semi-coupling for $\GW_2$. 
\end{corollary}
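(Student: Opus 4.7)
The plan is to deduce this corollary from Theorem \ref{thm:Gamma_convergence} via the \emph{fundamental theorem of $\Gamma$-convergence}, which converts $\Gamma$-convergence plus equicoercivity into convergence of infima and subsequential weak-$*$ convergence of minimizers. First I observe that for probability measure networks one has $\mu_X(X) = \mu_Y(Y) = 1$, so the corrective mass term $4\delta^2(\mu_X(X)-\mu_Y(Y))^2$ vanishes; consequently $\widehat{\mathbf{H}}_\delta = \mathbf{H}_\delta$ and $\CGW_\delta^2 = \inf \widehat{\mathbf{H}}_\delta$.

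Next I identify $\inf \mathbf{H}_\infty$ and its minimizers. Here $\alpha = \mu_Y(Y)/\mu_X(X) = 1$, and the degenerate cases $\gamma_0 = 0$ or $\gamma_1 = 0$ are excluded since both marginals have unit mass. So $\mathbf{H}_\infty$ is finite only on semi-couplings with $\gamma_0 = \gamma_1$; the marginal constraints defining $\Gamma(\mu_X,\mu_Y)$ then force $\gamma_0 = \gamma_1 = \pi$ for some classical coupling $\pi \in \Pi(\mu_X,\mu_Y)$. On such pairs
$$ \mathbf{H}_\infty(\pi,\pi) = 2C \int_{(X\times Y)^2} |\omega_X(x,x') - \omega_Y(y,y')|^2 \, d\pi(x,y) \, d\pi(x',y'), $$
so minimization of $\mathbf{H}_\infty$ reduces, up to a multiplicative constant, to the classical $\GW_2^2$ minimization, and the $\mathbf{H}_\infty$-minimizers are exactly optimal $\GW_2$-couplings (lifted via $\pi \mapsto (\pi,\pi)$).

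Equicoercivity is automatic: the ambient space $\Gamma(\mu_X,\mu_Y)$ is itself weak-$*$ compact, as noted in the proof of Lemma \ref{lemma:lsc_functional}, so any sequence of (approximate) minimizers admits a weak-$*$ convergent subsequence. Combining this with Theorem \ref{thm:Gamma_convergence}, the fundamental theorem of $\Gamma$-convergence yields $\inf \widehat{\mathbf{H}}_\delta \to \inf \mathbf{H}_\infty$, and the weak-$*$ limit of any convergent subsequence of $\CGW_\delta$-optimizers minimizes $\mathbf{H}_\infty$; by the previous paragraph, any such minimizer is of the form $(\pi,\pi)$ with $\pi$ a $\GW_2$-optimal coupling. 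Taking square roots converts convergence of infima into the displayed limit for $\CGW_\delta$.

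The substantive technical work is already packaged into Theorem \ref{thm:Gamma_convergence}, so this corollary requires no new estimates. The only mild nuisance is that the $\Gamma$-convergence statement is phrased for integer-indexed sequences, while the corollary treats the continuous limit $\delta \to \infty$; this is handled by a routine subsequence argument, since any hypothetical failure of convergence along a real-valued parameterization would contain an integer subsequence contradicting the already-established convergence of infima.
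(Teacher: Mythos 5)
Your proposal is correct and is precisely the ``standard argument in the context of $\Gamma$-convergence'' that the paper invokes without spelling out: equicoercivity from weak-$*$ compactness of $\Gamma(\mu_X,\mu_Y)$, the fundamental theorem of $\Gamma$-convergence applied to Theorem \ref{thm:Gamma_convergence}, and the identification of the $\mathbf{H}_\infty$-minimizers with pairs $(\pi,\pi)$, $\pi\in\Pi(\mu_X,\mu_Y)$, realizing the $\GW_2$ infimum. The one step you leave unverified is the multiplicative constant: with the paper's normalization $\GW_2=\tfrac12\inf_{\pi}\|\omega_X-\omega_Y\|_{\textrm{L}^2(\pi\otimes\pi)}$ one computes $\inf\mathbf{H}_\infty=2C\cdot(2\GW_2)^2=8C\,\GW_2^2$, so ``taking square roots'' gives $2\sqrt{2C}\,\GW_2$ rather than the stated $\sqrt{2C}\,\GW_2$; you should carry out this computation explicitly and reconcile it with the factor of $\tfrac12$ in the definition of $\GW_2$ before asserting the displayed limit.
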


\subsection{Bounds on \texorpdfstring{$\CGW$}{CGW}}
Next, we establish bounds for the $\CGW$ metric. The first is an upper bound by the $\GW_2$ distance, which is intuitively and formally obvious, as the $\CGW$ distance allows additional flexibility in probabilistic matching between the spaces.

\begin{proposition}[Upper Bound of $\CGW$ metric]\label{prop:bound} Let $\Omega(x)$ such that $\frac{x^2}{C}\geq 1-\Omega(x)$ for $C>0$. For probability measure networks $\mathcal{N}_X$ and $\mathcal{N}_Y$, we have

\begin{equation*}
\frac{2 \sqrt{2}}{\sqrt{C}}\GW_2(\mathcal{N}_X,\mathcal{N}_Y)\geq \CGW_\delta(\mathcal{N}_X,\mathcal{N}_Y).
\end{equation*}
\end{proposition}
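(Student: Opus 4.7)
The plan is to exhibit a single, highly structured semi-coupling whose $\CGW$ loss is controlled by the $\GW_2$ cost, and then take an infimum. The key observation is that any classical coupling $\pi \in \Pi(\mu_X,\mu_Y)$ automatically gives rise to a semi-coupling $(\pi,\pi) \in \Gamma(\mu_X,\mu_Y)$, since the marginal conditions in Definition \ref{def:semi_coupling} are strictly weaker than the marginal conditions for couplings.

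First, I would fix a coupling $\pi \in \Pi(\mu_X,\mu_Y)$ and take $\gamma_0 = \gamma_1 = \pi$. Using $\gamma = \pi$ itself as the dominating measure in Definition \ref{CGW_ours}, both Radon--Nikodym derivatives $\tfrac{\textnormal{d}\gamma_0}{\textnormal{d}\gamma}$ and $\tfrac{\textnormal{d}\gamma_1}{\textnormal{d}\gamma}$ equal $1$ everywhere. Plugging this into the closed-form expression for $\mathbf{H}_\delta$ provided by Lemma \ref{lem:CGW_loss_formula}, and using $\mu_X(X) = \mu_Y(Y) = 1$ (probability measure networks), reduces the loss to
\[
\mathbf{H}_\delta(\pi,\pi) \;=\; 8\delta^2\int_{(X\times Y)^2}\!\left(1 - \Omega\!\left(\tfrac{|\omega_X(x,x') - \omega_Y(y,y')|}{2\delta}\right)\right)\textnormal{d}\pi(x,y)\,\textnormal{d}\pi(x',y').
\]

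Next, I would apply the polynomial upper bounds \eqref{eq:ineq_cos} and \eqref{eq:ineq_exp} on the respective choices of $\Omega$; in both cases these give $1 - \Omega(z) \leq C z^2$ on all of $\R^{\geq 0}$ for an appropriate constant $C$. Substituting $z = |\omega_X - \omega_Y|/(2\delta)$, the $\delta$ dependence cancels and one obtains
\[
\mathbf{H}_\delta(\pi,\pi) \;\leq\; 2C\int_{(X\times Y)^2} |\omega_X(x,x') - \omega_Y(y,y')|^2 \,\textnormal{d}\pi(x,y)\,\textnormal{d}\pi(x',y').
\]
Taking the infimum over $\pi \in \Pi(\mu_X,\mu_Y)$ on both sides, recognizing the right-hand side as $8C \cdot \GW_2(\mathcal{N}_X,\mathcal{N}_Y)^2$ (recall the factor $1/2$ in the definition of $\GW_2$), and taking the square root yields $\CGW_\delta(\mathcal{N}_X,\mathcal{N}_Y) \leq 2\sqrt{2C}\cdot \GW_2(\mathcal{N}_X,\mathcal{N}_Y)$, from which the stated bound follows for the appropriate constant $C$.

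The cancellation of $\delta$ is the crucial structural feature that makes the bound independent of the cone parameter; there is no hard obstruction here, since the argument only requires the quadratic upper bound on $1 - \Omega$ (established for both $\overline{\cos}$ and $\exp(-z^2)$ in the excerpt) together with the fact that constant-density semi-couplings sit inside $\Gamma(\mu_X,\mu_Y)$. This inclusion of couplings into semi-couplings is the conceptual reason why $\CGW$ is bounded above by $\GW_2$: the $\CGW$ optimization has strictly more freedom and always has access to the $\GW_2$ minimizer as a feasible point.
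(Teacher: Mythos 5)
Your approach is the same as the paper's: both proofs take an optimal $\GW_2$ coupling $\pi$, feed the feasible semi-coupling $(\pi,\pi)$ with reference measure $\gamma=\pi$ (so that all densities are $1$) into the $\CGW$ objective, apply the quadratic lower bound $\Omega(z)\geq 1-Cz^2$ from \eqref{eq:ineq_cos}--\eqref{eq:ineq_exp}, and observe that the $\delta^2$ cancels. Your use of Lemma \ref{lem:CGW_loss_formula} versus the paper's direct expansion of the cone distance at $r=s=1$ is a purely cosmetic difference.

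However, your final sentence --- ``from which the stated bound follows for the appropriate constant $C$'' --- hides a real issue, and your own (correct) bookkeeping exposes it. Your chain gives
\[
\CGW_\delta(\mathcal{N}_X,\mathcal{N}_Y)^2 \;\leq\; 2C\int_{(X\times Y)^2}|\omega_X-\omega_Y|^2\,\mathrm{d}\pi\,\mathrm{d}\pi \;=\; 8C\,\GW_2(\mathcal{N}_X,\mathcal{N}_Y)^2,
\]
i.e.\ $\CGW_\delta \leq 2\sqrt{2C}\,\GW_2$. For $\Omega=\overline{\cos}$ one may take $C=1/2$ by \eqref{eq:ineq_cos}, and $2\sqrt{2C}=2$, so the stated constant is recovered. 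But for $\Omega(z)=e^{-z^2}$ the best admissible constant is $C=1$ (indeed $1-e^{-z^2}\sim z^2$ as $z\to 0$, so $1-e^{-z^2}\leq z^2/2$ fails near the origin), which yields only $\CGW_\delta\leq 2\sqrt{2}\,\GW_2$, strictly weaker than the claimed $2\,\GW_2$. So this method does not deliver the stated constant for the Gaussian kernel, and you should not assert that it does without checking. For what it is worth, the paper's own proof lands on the constant $2$ only via the identity $4\GW_2^2 = 2\int|\omega_X-\omega_Y|^2\,\mathrm{d}\pi\,\mathrm{d}\pi$, which overstates the left-hand side by a factor of $2$ (the definition of $\GW_2$ gives $4\GW_2^2=\int|\omega_X-\omega_Y|^2\,\mathrm{d}\pi\,\mathrm{d}\pi$ at the optimum); carried out correctly, the paper's argument gives the same $2\sqrt{2C}$ constant as yours. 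So the gap is in the constant-tracking of the last step, not in the method: either restrict the sharp statement to $\overline{\cos}$, or state the exponential case with the constant $2\sqrt{2}$.
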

\begin{proof}
Since $\frac{x^2}{C}\geq1-\Omega(x)$ holds in general, substituting $x = \frac{z}{2\delta}$ yields $\frac{z^2}{C4\delta^2}\geq 1-\Omega\left(\frac{z}{2\delta}\right)$, or $z^2\geq 2C\delta^2\left(2-2\Omega\left(\frac{z}{2\delta}\right)\right)$. Let $\pi$ be an optimal coupling that attains $\GW_2(\mathcal{N}_X,\mathcal{N}_Y)$. Then, taking $z = |\omega_X(x,x') - \omega_Y(y,y')|$, we have 
\begin{align*}
    &4 \cdot \GW_2(\mathcal{N}_X,\mathcal{N}_Y)^2\\
    &\qquad = \int_{(X\times Y)^2}|\omega_X(x,x')-\omega_Y(y,y')|^2 \textnormal{d}\pi(x,y) \textnormal{d}\pi(x',y')\\
    &\qquad \geq \frac{C}{2} \int_{(X\times Y)^2} 4\delta^2\left(2 - 2\Omega\left(\frac{|\omega_X(x,x')-\omega_Y(y,y')|}{2\delta}\right) \right)\textnormal{d}\pi(x,y) \textnormal{d}\pi(x',y')\\
    &\qquad = \frac{C}{2} \int_{(X\times Y)^2}\mathsf{d}_{\Co_\delta[\R]}([\omega_X(x,x'),1],[\omega_Y(y,y'),1])^2\textnormal{d}\pi(x,y) \textnormal{d}\pi(x',y')\\
    &\qquad \geq \frac{C}{2}\CGW_\delta(\mathcal{N}_X,\mathcal{N}_Y)^2.
\end{align*}
\end{proof}
\begin{remark}

    If  $\Omega(z) = \overline{\cos}(z)$ by \eqref{eq:ineq_cos}, $\frac{x^2}{2}\geq 1-\Omega(x)$. With $C=2$, our estimate becomes
    \begin{equation*}
2\cdot \GW_2(\mathcal{N}_X,\mathcal{N}_Y)\geq \CGW_\delta(\mathcal{N}_X,\mathcal{N}_Y).
\end{equation*}
If $\Omega(z) = \exp{(-z^2)}$ by \eqref{eq:ineq_exp}, $x^2\geq 1-\Omega(x)$. With $C=1$, we have \begin{equation*}
2\sqrt{2}\cdot \GW_2(\mathcal{N}_X,\mathcal{N}_Y)\geq \CGW_\delta(\mathcal{N}_X,\mathcal{N}_Y).
\end{equation*}
\end{remark}

Next we establish a lower bound by the UOT metric. This is analogous to the bounds on $\GW_2$ distance derived in \cite[Section 6]{F_Memoli_2011}. Our result solves the dual roles of establishing the Lipschitz stability of certain distributional invariants of a measure network, and offering a computationally tractable lower estimate of CGW distance.

\begin{proposition}[Lower Bound for $\CGW$] Let $\mathcal{N}_X $ and $\mathcal{N}_Y$ be measure networks. Define $\nu_X$ and $\nu_Y$ as pushforward measures on $\R$ given by 
\[
\nu_X\coloneqq(\omega_X)_\#(\mu_X \otimes \mu_X) \quad \mbox{and} \quad \nu_Y\coloneqq(\omega_Y)_\#(\mu_Y \otimes \mu_Y).
\]
If $\delta>0$, then
\begin{equation*}
\UOT_\delta(\nu_X, \nu_Y) \leq \CGW_\delta(\mathcal{N}_X,\mathcal{N}_Y).
\end{equation*}
\end{proposition}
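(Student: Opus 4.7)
The plan is to exhibit an admissible semicoupling for the $\UOT_\delta(\nu_X,\nu_Y)$ problem by pushing forward an optimal (or arbitrary) semicoupling for the $\CGW_\delta$ problem through the ``kernel-value'' map, and then compare the two objectives via a disintegration/Jensen-type argument. Concretely, given $(\gamma_0,\gamma_1)\in\Gamma(\mu_X,\mu_Y)$ and a reference $\gamma\in\Mm(X\times Y)$ with $\gamma_0,\gamma_1\ll\gamma$, I would define
\[
\Phi:(X\times Y)^2\to\R\times\R,\qquad \Phi((x,y),(x',y'))=(\omega_X(x,x'),\omega_Y(y,y')),
\]
and set $\pi_i\coloneqq\Phi_\#(\gamma_i\otimes\gamma_i)$, $\tilde\gamma\coloneqq\Phi_\#(\gamma\otimes\gamma)$. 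A direct computation with the marginal conditions on $(\gamma_0,\gamma_1)$ gives $(\operatorname{Pr}_0)_\#\pi_0=(\omega_X)_\#(\mu_X\otimes\mu_X)=\nu_X$ and $(\operatorname{Pr}_1)_\#\pi_1=\nu_Y$, so $(\pi_0,\pi_1)\in\Gamma(\nu_X,\nu_Y)$.

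Next I would use Lemma \ref{lem:CGW_loss_formula} (and the analogous expansion of $\mathbf{J}_\delta$ obtained by inserting the cone distance formula from Definition \ref{def:cone_dist} into \eqref{eqn:UOT_loss}) to write
\[
\mathbf{H}_\delta(\gamma_0,\gamma_1)=4\delta^2(\mu_X(X)^2+\mu_Y(Y)^2)-8\delta^2\,\mathcal{I}_{\mathrm{CGW}},\qquad \mathbf{J}_\delta(\pi_0,\pi_1)=4\delta^2(\mu_X(X)^2+\mu_Y(Y)^2)-8\delta^2\,\mathcal{I}_{\mathrm{UOT}},
\]
where $\mathcal{I}_{\mathrm{CGW}}$ is the CGW ``inner-product'' term and $\mathcal{I}_{\mathrm{UOT}}$ its UOT counterpart (noting the mass terms coincide because $\pi_i$ has total mass $(\gamma_i\otimes\gamma_i)((X\times Y)^2)=\mu_X(X)^2$, resp.\ $\mu_Y(Y)^2$). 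After this reduction, the desired inequality $\mathbf{J}_\delta(\pi_0,\pi_1)\le\mathbf{H}_\delta(\gamma_0,\gamma_1)$ is equivalent to $\mathcal{I}_{\mathrm{CGW}}\le\mathcal{I}_{\mathrm{UOT}}$.

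The core analytic step is then to compare these two integrals. The key observation is that on each fibre $\Phi^{-1}(u,v)$, the integrand $\Omega(|\omega_X-\omega_Y|/(2\delta))$ is constant and equal to $\Omega(|u-v|/(2\delta))$. Disintegrating $\gamma\otimes\gamma$ over $\Phi$ relative to $\tilde\gamma$ gives the standard identities $\tfrac{d\pi_i}{d\tilde\gamma}(u,v)=\mathbb{E}_{\gamma\otimes\gamma}\!\left[\tfrac{d(\gamma_i\otimes\gamma_i)}{d(\gamma\otimes\gamma)}\,\big|\,\Phi=(u,v)\right]$, and conditional Cauchy–Schwarz applied fibrewise yields
\[
\mathbb{E}\!\left[\sqrt{\tfrac{d(\gamma_0\otimes\gamma_0)}{d(\gamma\otimes\gamma)}\tfrac{d(\gamma_1\otimes\gamma_1)}{d(\gamma\otimes\gamma)}}\,\bigg|\,\Phi=(u,v)\right]\le\sqrt{\tfrac{d\pi_0}{d\tilde\gamma}(u,v)\tfrac{d\pi_1}{d\tilde\gamma}(u,v)}.
\]
Multiplying by the non-negative function $\Omega(|u-v|/(2\delta))$ and integrating against $\tilde\gamma$ delivers $\mathcal{I}_{\mathrm{CGW}}\le\mathcal{I}_{\mathrm{UOT}}$. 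Taking the infimum over $(\gamma_0,\gamma_1)\in\Gamma(\mu_X,\mu_Y)$ then gives $\UOT_\delta(\nu_X,\nu_Y)^2\le\mathbf{J}_\delta(\pi_0,\pi_1)\le\CGW_\delta(\mathcal{N}_X,\mathcal{N}_Y)^2$.

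The only delicate point is the measure-theoretic justification of the conditional Cauchy–Schwarz step under the pushforward by $\Phi$; everything else is bookkeeping with Lemma \ref{lem:CGW_loss_formula} and the marginal definitions. Provided one is willing to pass to a regular conditional disintegration of $\gamma\otimes\gamma$ with respect to $\Phi$ (which exists since all spaces are Polish and the measures are finite), the argument is completely standard.
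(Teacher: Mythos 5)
Your proposal is correct and follows essentially the same route as the paper: both push the semi-coupling forward through the kernel-value map $(x,y,x',y')\mapsto(\omega_X(x,x'),\omega_Y(y,y'))$, verify the marginal conditions to obtain an admissible pair for $\UOT_\delta(\nu_X,\nu_Y)$, and compare the two objectives after expanding both losses into mass terms minus an $\Omega$-weighted integral. The only difference is that the paper writes the key comparison of integrals as an equality of pushed-forward densities, whereas you correctly identify it as a fibrewise conditional Cauchy--Schwarz (Jensen) inequality; your version is the more careful one, and since the inequality points in the direction needed, both arguments go through.
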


\begin{proof}
 To prepare the proof, we note that, by Lemma \ref{lem:CGW_loss_formula},  $\CGW_\delta(\mathcal{N}_X,\mathcal{N}_Y)^2/(4\delta^2)$ is given by 
    \begin{equation}\label{eqn:lower_bound_CGW}
    \begin{aligned}
    &\mu_X(X)^2 + \mu_Y(Y)^2 \\
    &\qquad   - 2\sup_{(\gamma_0,\gamma_1) \in \Gamma(\mu_X,\mu_Y)} \int_{(X\times Y)^2} \Omega\left(\frac{|\omega_X(x,x')-\omega_Y(y,y')|}{2\delta}\right)\\
    &\qquad \qquad \qquad \cdot \sqrt{\frac{\textnormal{d}\gamma_0}{\textnormal{d}\gamma}(x,y)\frac{\textnormal{d}\gamma_0}{\textnormal{d}\gamma}(x',y')\frac{\textnormal{d}\gamma_1}{\textnormal{d}\gamma}(x,y)\frac{\textnormal{d}\gamma_1}{\textnormal{d}\gamma}(x',y')} \textnormal{d}\gamma(x,y) \textnormal{d}\gamma(x',y').\\
    \end{aligned}
    \end{equation}
A similar proof shows that  $\UOT_\delta(\nu_X,\nu_Y)^2/(4\delta^2)$ is expressed as
    \begin{equation}\label{eqn:lower_bound_UOT}
    \begin{aligned}
     &\mu_X(X)^2 + \mu_Y(Y)^2 \\
     & \qquad \qquad -2\sup_{(\pi_0,\pi_1) \in \Gamma(\nu_X,\nu_Y)}\int_{\R\times\R}\Omega\left(\frac{|u-v|}{2\delta}\right)\sqrt{\frac{\textnormal{d}\pi_0}{\textnormal{d}\pi}(u,v)}\sqrt{\frac{\textnormal{d}\pi_1}{\textnormal{d}\pi}(u,v)} \textnormal{d}\pi.
    \end{aligned}
    \end{equation}
    
Now let \((\gamma_0, \gamma_1) \in \Gamma(\mu_X, \mu_Y)\) be a semicoupling which attains \(\CGW_\delta(\mathcal{N}_X, \mathcal{N}_Y)\), with reference measure \(\gamma_0, \gamma_1 \ll \gamma\). Define $g:X\times Y\times X\times Y \to \R\times\R$ by 
$g(x,y,x',y') = (\omega_X(x,x'),\omega_Y(y,y'))$.  
Since \(g\) may not be injective, we consider the disintegration of \(\gamma \otimes \gamma\) over \(g\):
\[
\gamma \otimes \gamma = \int_{\mathbb{R}^2} (\gamma \otimes \gamma)_{(u,v)} \, \textnormal{d} g_\#(\gamma \otimes \gamma)(u,v),
\]
where \((\gamma \otimes \gamma)_{(u,v)}\) is a probability measure supported on the fiber
\[
g^{-1}(u,v) = \{ (x,y,x',y') : \omega_X(x,x') = u, \ \omega_Y(y,y') = v \}.
\]
Using the formula  \eqref{eqn:disintegration_measures}, together with the Cauchy-Schwarz inequality, we have
\[
\begin{aligned}
&\int_{g^{-1}(u,v)} \sqrt{\frac{\textnormal{d}(\gamma_0 \otimes \gamma_0)}{\textnormal{d}(\gamma \otimes \gamma)} \frac{\textnormal{d}(\gamma_1 \otimes \gamma_1)}{\textnormal{d}(\gamma \otimes \gamma)}} 
\, \textnormal{d}(\gamma \otimes \gamma)_{(u,v)}\\
&\qquad \qquad \le \sqrt{\frac{\textnormal{d} g_\#(\gamma_0 \otimes \gamma_0)}{\textnormal{d} g_\#(\gamma \otimes \gamma)}(u,v) \frac{\textnormal{d} g_\#(\gamma_1 \otimes \gamma_1)}{\textnormal{d} g_\#(\gamma \otimes \gamma)}(u,v)}.\\
\end{aligned}
\]
Applying this inequality, we have 
\[
\begin{aligned}
&\int_{(X \times Y)^2} \Omega\Big(\frac{|\omega_X(x,x') - \omega_Y(y,y')|}{2\delta}\Big) 
\sqrt{\frac{\textnormal{d}\gamma_0}{\textnormal{d}\gamma}(x,y) \frac{\textnormal{d}\gamma_0}{\textnormal{d}\gamma}(x',y') \frac{\textnormal{d}\gamma_1}{\textnormal{d}\gamma}(x,y) \frac{\textnormal{d}\gamma_1}{\textnormal{d}\gamma}(x',y')} \, \textnormal{d}\gamma \textnormal{d}\gamma \\
&\quad = \int_{(X \times Y)^2} \Omega\Big(\frac{|\omega_X(x,x') - \omega_Y(y,y')|}{2\delta}\Big) \\
&\qquad \qquad 
\cdot \sqrt{\frac{\textnormal{d}(\gamma_0 \otimes \gamma_0)}{\textnormal{d}(\gamma \otimes \gamma)}(x,y,x',y')\frac{\textnormal{d}(\gamma_1 \otimes \gamma_1)}{\textnormal{d}(\gamma \otimes \gamma)}(x,y,x',y')} \, \textnormal{d}(\gamma \otimes \gamma) \\
&\quad = \int_{\R^2} \int_{g^{-1}(u,v)} \Omega\Big(\frac{|u - v|}{2\delta}\Big) \\
&\qquad \qquad   
\cdot \sqrt{\frac{\textnormal{d}(\gamma_0 \otimes \gamma_0)}{\textnormal{d}(\gamma \otimes \gamma)}\frac{\textnormal{d}(\gamma_1 \otimes \gamma_1)}{\textnormal{d}(\gamma \otimes \gamma)}} \, \textnormal{d}(\gamma \otimes \gamma)_{(u,v)} \; \mathrm{d}g_\#(\gamma \otimes \gamma) (u,v) \\
&\quad \le \int_{\mathbb{R}^2} \Omega\Big(\frac{|u-v|}{2\delta}\Big) 
\sqrt{\frac{\textnormal{d} g_\#(\gamma_0 \otimes \gamma_0)}{\textnormal{d} g_\#(\gamma \otimes \gamma)}(u,v) \frac{\textnormal{d} g_\#(\gamma_1 \otimes \gamma_1)}{\textnormal{d} g_\#(\gamma \otimes \gamma)}(u,v)} \, \textnormal{d} g_\#(\gamma \otimes \gamma)(u,v) \\
&\quad \le \sup_{(\pi_0, \pi_1) \in \Gamma(\nu_X, \nu_Y)} \int_{\mathbb{R}^2} \Omega\Big(\frac{|u-v|}{2\delta}\Big) \sqrt{\frac{\textnormal{d}\pi_0}{\textnormal{d}\pi} \frac{\textnormal{d}\pi_1}{\textnormal{d}\pi}} \, \textnormal{d}\pi.
\end{aligned}
\]
In light of the expressions \eqref{eqn:lower_bound_CGW} and \eqref{eqn:lower_bound_UOT}, we conclude the desired inequality
\[
\CGW_\delta(\mathcal{N}_X,\mathcal{N}_Y) \ge \UOT_\delta(\nu_X,\nu_Y).
\]

\end{proof}

\subsection{Robustness Guarantees}

We analyze the robustness of the $\CGW$ metric under contamination by outliers. This is motivated by the known sensitivity of several optimal transport-based distances, including the Wasserstein $\W_2$ \cite[Section Discussion]{oliver2025uncertainty}, Gromov-Wasserstein ($\GW_2$) \cite{Robust_GW}, and Co-Optimal Transport ($\COT$) \cite[Proposition 2]{UCOOT} distances to perturbations in the marginal distributions. Even a small fraction of outliers can induce significant distortions, making such distances unstable in practice.

To mitigate this sensitivity, several robust variants have been introduced. For example, the Robust Gromov-Wasserstein distance ($\mathrm{RGW}$) \cite{Robust_GW} was explicitly designed to prevent divergence of the $\GW_2$ distance as the separation between inliers and outliers increases. Related approaches include Unbalanced Optimal Transport ~\cite[Lemma 1.1]{Fatras_2021}, Robust Wasserstein~\cite[Theorem 3.1]{raghvendra2024new}, Unbalanced Gromov-Wasserstein~\cite[Remark 2.4]{Robust_GW}, Partial and Robust Gromov-Wasserstein~\cite[Corollary 4.13]{chhoa2024metric} and ~\cite{baipartial,bailinear}, and Unbalanced Co-Optimal Transport \cite[Theorem 1]{UCOOT} distances, for which lower bounds on robustness have been established.

The main result quantifies the effect on CGW distance of perturbing the measure of a measure network.

\begin{theorem}\label{thm:robustness}
    Let $\mathcal{N}_X = (X,\mu_X,\omega_X)$ and $\mathcal{N}_X' = (X,\mu_X',\omega_X)$ be measure networks over the same underlying set $X$, with the same network kernel $\omega_X$. Assume $\mu'_X\ll\mu_X$, and that there exists $\varepsilon \in [0,1)$ such that
$1 - \varepsilon \le \frac{\textnormal{d}\mu'_X}{\textnormal{d}\mu_X}(x) \le 1 + \varepsilon$ for $\mu_X$-almost every  $x \in X$. Then
    $\CGW_\delta(\mathcal{N}_X,\mathcal{N}_X') \leq  2 \cdot \delta \cdot \mu_X(X) \cdot \sqrt{\varepsilon^2 + 4 \varepsilon}$.
\end{theorem}

\begin{proof}
   We have 
\begin{equation}\label{eqn:total_variation_epsilon_bound}
   (1-\varepsilon) \mu_X \leq \mu_X' \leq (1+\varepsilon)\mu_X.
\end{equation}
   Define a semi-coupling $(\gamma_0,\gamma_1)$ between $\mu_X$ and $\mu_X'$ by 
   \[
   \gamma_0 = (\mathrm{id},\mathrm{id})_\# \mu_X \quad \mbox{and} \quad \gamma_1 = (\mathrm{id},\mathrm{id})_\# \mu_X'.
   \]
   By \eqref{eqn:total_variation_epsilon_bound}, $\gamma_1 \ll \gamma_0$, so that the $\CGW$ distance can be bounded as 
   \begin{align}
       &\CGW_\delta(\mathcal{N}_X,\mathcal{N}_X')^2 \nonumber \\
       &\qquad \leq 4 \delta^2 \big(\mu_X(X)^2 + \mu_X'(X)^2\big) \nonumber \\
       &\qquad \qquad - 8\delta^2 \int_{(X \times X)^2} \Omega\left(\frac{|\omega_X(x,x') - \omega_X(x,x')|}{2\delta} \right) \nonumber \\
       &\qquad \qquad \qquad \cdot \sqrt{\frac{\textnormal{d}\gamma_0}{\textnormal{d}\gamma_0}(x,y)\frac{\textnormal{d}\gamma_1}{\textnormal{d}\gamma_0}(x,y)\frac{\textnormal{d}\gamma_0}{\textnormal{d}\gamma_0}(x',y')\frac{\textnormal{d}\gamma_1}{\textnormal{d}\gamma_0}(x',y')} \textnormal{d}\gamma_0(x,y) \textnormal{d}\gamma_0(x',y') \label{eqn:robustness1} \\
       &\qquad = 4 \delta^2 \big(\mu_X(X)^2 + \mu_X'(X)^2\big) \nonumber \\
       &\qquad \qquad - 8\delta^2 \int_{X \times X} \Omega\left(\frac{|\omega_X(x,x') - \omega_X(x,x')|}{2\delta} \right) \nonumber \\
       &\qquad \qquad \qquad \cdot \sqrt{\frac{\textnormal{d}\gamma_1}{\textnormal{d}\gamma_0}(x,x)\frac{\textnormal{d}\gamma_1}{\textnormal{d}\gamma_0}(x',x')} \textnormal{d}\mu_X(x) \textnormal{d}\mu_X(x') \label{eqn:robustness2} \\
       &\quad \leq 4\delta^2 (1+(1+\varepsilon)^2) \mu_X(X)^2 - 8\delta^2 \int_{X \times X} (1-\varepsilon) \textnormal{d}\mu_X(x) \textnormal{d}\mu_X(x')\label{eqn:robustness3} \\
       &= 4 \delta^2 \mu_X(X)^2 \big(1 + (1+\varepsilon)^2 - 2 (1-\varepsilon)\big) \nonumber \\
       &= 4 \delta^2 \mu_X(X)^2 (\varepsilon^2 + 4 \varepsilon). \nonumber
   \end{align}
   The steps are justified as follows: \eqref{eqn:robustness1} follows by the suboptimality of our choice of semi-coupling and by Lemma \ref{lem:CGW_loss_formula}, where we have used $\gamma = \gamma_0$ as the reference measure in the loss $\mathbf{H}_\delta$; \eqref{eqn:robustness2} follows by the observation that $\gamma_0$ is supported on the diagonal, where $\textnormal{d}\gamma_0(x,x) = \textnormal{d}\mu_X(x)$, as well as the fact that $\frac{\textnormal{d}\gamma_0}{\textnormal{d}\gamma_0}(x,x) = 1$; \eqref{eqn:robustness3} applies \eqref{eqn:total_variation_epsilon_bound}, together with the fact that $\Omega(0) = 1$; the rest of the calculations are basic simplifications. This completes the proof.
\end{proof}

The next corollary shows that the CGW distance is robust to perturbations of the measures of the measure networks being compared. Its proof follows easily from Theorem \ref{thm:robustness} and the triangle inequality for CGW distance.

\begin{corollary}\label{cor:CGW_robustness}
    Let $\mathcal{N}_X$ and $\mathcal{N}_Y$ be measure networks and let 
    \[
    \mathcal{N}_X' = (X,\mu_X',\omega_X) \quad \mbox{and} \quad \mathcal{N}_Y' = (Y,\mu_Y',\omega_Y),
    \]
    where $\mu_X'$ and $\mu_Y'$ be $\varepsilon$-perturbations of $\mu_X$ and $\mu_Y$, respectively. Assume $\mu'_X\ll\mu_X$, $\mu'_Y\ll\mu_Y$ and there exists $\varepsilon \in [0,1)$ such that $\frac{\textnormal{d}\mu_X'}{\textnormal{d}\mu_X}(x), \frac{\textnormal{d}\mu_Y'}{\textnormal{d}\mu_Y}(y) \in [1-\varepsilon,1+\varepsilon]$ for $\mu_X$, $\mu_Y$ almost everywhere. Then 
    \[
    |\CGW_\delta(\mathcal{N}_X,\mathcal{N}_Y) - \CGW_\delta(\mathcal{N}_X',\mathcal{N}_Y')| \leq 2\delta \sqrt{\varepsilon^2 + 4\varepsilon} (\mu_X(X) + \mu_Y(Y)). 
    \]
\end{corollary}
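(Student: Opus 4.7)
The plan is to derive the corollary as a direct combination of the triangle inequality for $\CGW_\delta$ (established in Theorem \ref{thm:psuedometric}) and the quantitative measure-perturbation bound of Theorem \ref{thm:robustness}. The key observation is that the two measure networks $\mathcal{N}_X$ and $\mathcal{N}_X'$ share the same kernel $\omega_X$, and similarly for $\mathcal{N}_Y$ and $\mathcal{N}_Y'$, so Theorem \ref{thm:robustness} applies directly to each pair.

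First, I would invoke the triangle inequality twice: inserting $\mathcal{N}_X'$ and $\mathcal{N}_Y'$ as intermediate points gives
\begin{equation*}
\CGW_\delta(\mathcal{N}_X,\mathcal{N}_Y) \leq \CGW_\delta(\mathcal{N}_X,\mathcal{N}_X') + \CGW_\delta(\mathcal{N}_X',\mathcal{N}_Y') + \CGW_\delta(\mathcal{N}_Y',\mathcal{N}_Y),
\end{equation*}
and a symmetric inequality with the roles of the primed and unprimed networks swapped. Rearranging both yields the standard reverse triangle bound
\begin{equation*}
|\CGW_\delta(\mathcal{N}_X,\mathcal{N}_Y) - \CGW_\delta(\mathcal{N}_X',\mathcal{N}_Y')| \leq \CGW_\delta(\mathcal{N}_X,\mathcal{N}_X') + \CGW_\delta(\mathcal{N}_Y,\mathcal{N}_Y').
\end{equation*}

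Next, I would apply Theorem \ref{thm:robustness} to each of the two terms on the right. Since $\|\mu_X - \mu_X'\|_{\mathrm{TV}} \leq \varepsilon$ and the function $t \mapsto \sqrt{t^2 + 4t}$ is monotone increasing on $[0,\infty)$, we get
\begin{equation*}
\CGW_\delta(\mathcal{N}_X,\mathcal{N}_X') \leq 2\delta \, \mu_X(X) \sqrt{\varepsilon^2 + 4\varepsilon},
\end{equation*}
and analogously $\CGW_\delta(\mathcal{N}_Y,\mathcal{N}_Y') \leq 2\delta \, \mu_Y(Y) \sqrt{\varepsilon^2 + 4\varepsilon}$. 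Summing these two bounds and factoring gives exactly the stated inequality.

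There is no real obstacle here: the triangle inequality is already available from Theorem \ref{thm:psuedometric}, and Theorem \ref{thm:robustness} provides a bound of precisely the form needed. The only minor technical point worth flagging is the monotonicity of $\sqrt{t^2 + 4t}$, which allows us to replace the exact total variation distances $\|\mu_X - \mu_X'\|_{\mathrm{TV}}$ and $\|\mu_Y - \mu_Y'\|_{\mathrm{TV}}$ by the uniform upper bound $\varepsilon$ without losing the inequality. Accordingly, the proof is expected to be a short paragraph citing these two ingredients.
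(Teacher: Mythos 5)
Your proposal is correct and matches the paper's intended argument exactly: the paper states only that the corollary ``follows easily from Theorem \ref{thm:robustness} and the triangle inequality for CGW distance,'' and your two-step combination of the reverse triangle inequality with the perturbation bound is precisely that argument. Your remark on the monotonicity of $t \mapsto \sqrt{t^2+4t}$, needed to pass from the exact total variation distances to the uniform bound $\varepsilon$, is a valid and careful detail the paper leaves implicit.
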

It is natural to ask whether classical GW distance happens to enjoy a similar robustness. However, it is not hard to construct a counterexample, as shown in the following proposition (cf.~\cite[Proposition 4.15]{chhoa2024metric}).
\begin{proposition}\label{prop:GW_not_robust}
Let $f:\R^{\geq 0} \to \R$ be a monotonically increasing function with $f(0) = 0$. For any $\varepsilon > 0$, there exist probability measure networks $\mathcal{N}_X$, $\mathcal{N}_Y$ and $\varepsilon$-perturbations $\mathcal{N}_X'$, $\mathcal{N}_Y'$ such that 
\begin{equation}\label{eqn:GW_not_robust}
|\GW_2(\mathcal{N}_X,\mathcal{N}_Y) - \GW_2(\mathcal{N}_X',\mathcal{N}_Y')| \geq f(\varepsilon).
\end{equation}
\end{proposition}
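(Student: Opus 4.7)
The plan is to construct an explicit counterexample based on two-point probability measure networks, exploiting the fact that the kernels $\omega_X$ are only required to be bounded (with no a priori bound) and that the GW quadratic objective scales with $|\omega_X - \omega_Y|^2$. The intuition is that GW is sensitive to outliers: moving a small amount of mass $\varepsilon$ onto a location where the kernel takes a very large value $R$ inflates the GW cost by something of order $\varepsilon R$, and $R$ is free.

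Concretely, I would fix $\varepsilon > 0$ and pick a parameter $R > 0$ to be chosen at the end. Take $X = Y = \{a,b\}$ with the common kernel $\omega$ defined by $\omega(a,a) = \omega(a,b) = \omega(b,a) = 0$ and $\omega(b,b) = R$, and set $\mu_X = \mu_Y = \delta_a$, so $\mathcal{N}_X = \mathcal{N}_Y$ and $\GW_2(\mathcal{N}_X,\mathcal{N}_Y) = 0$. For the perturbations I would take $\mu_X' = (1-\varepsilon)\delta_a + \varepsilon \delta_b$ and $\mu_Y' = \mu_Y$, so that $\|\mu_X - \mu_X'\|_{\mathrm{TV}} \leq \varepsilon$ and $\|\mu_Y - \mu_Y'\|_{\mathrm{TV}} = 0$ (up to a harmless factor of $2$ depending on the total-variation convention, which can be absorbed into the choice of $R$).

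Next I would compute $\GW_2(\mathcal{N}_X',\mathcal{N}_Y')$. Since $\mu_Y' = \delta_a$, the set $\Pi(\mu_X',\mu_Y')$ contains the single coupling $\pi = (1-\varepsilon)\delta_{(a,a)} + \varepsilon \delta_{(b,a)}$, so no minimization is required. Expanding
\[
4\GW_2(\mathcal{N}_X',\mathcal{N}_Y')^2 = \sum_{(x,y),(x',y')} |\omega(x,x')-\omega(y,y')|^2 \, \pi(x,y)\pi(x',y'),
\]
three of the four terms vanish because $\omega$ is zero except at $(b,b)$; only the $((b,a),(b,a))$ term survives, yielding $\GW_2(\mathcal{N}_X',\mathcal{N}_Y')^2 = \tfrac{1}{4} R^2 \varepsilon^2$. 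Choosing $R \geq 2 f(\varepsilon)/\varepsilon$ then gives $\GW_2(\mathcal{N}_X',\mathcal{N}_Y') \geq f(\varepsilon)$, which combined with $\GW_2(\mathcal{N}_X,\mathcal{N}_Y) = 0$ establishes \eqref{eqn:GW_not_robust} (interpreting the right-hand side of that equation as the difference $|\GW_2(\mathcal{N}_X,\mathcal{N}_Y) - \GW_2(\mathcal{N}_X',\mathcal{N}_Y')|$, which appears to be the intended statement).

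There is no real obstacle: the proof is essentially a one-line computation once the right networks are written down. The conceptual content is the contrast with Theorem \ref{thm:robustness}: the CGW objective uses the cone distance on $\R$, which prevents the freedom to scale $R$ from translating into unbounded transport cost, whereas the GW objective integrates $|\omega_X-\omega_Y|^2$ linearly and hence inherits no such damping. This is exactly why replacing $\varepsilon$-perturbation bounds for GW requires additional hypotheses (such as bounds on $\|\omega_X\|_\infty$), while the CGW bound in Corollary \ref{cor:CGW_robustness} holds without any such restriction.
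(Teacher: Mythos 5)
Your proposal is correct and follows essentially the same route as the paper: both arguments take a pair of networks at GW distance zero, move an $\varepsilon$-fraction of mass onto a second point whose kernel value (in the paper, the distance $\mathsf{d}_X(x_1,x_2)$; in yours, the diagonal entry $\omega(b,b)=R$) is scaled so that the single forced coupling yields $\GW_2 \geq f(\varepsilon)$. The only differences are cosmetic (the paper uses a genuine two-point metric space against a one-point target, while you use a non-metric kernel on a shared two-point set), and you correctly identify the typo in \eqref{eqn:GW_not_robust}, whose second term should be $\GW_2(\mathcal{N}_X',\mathcal{N}_Y')$.
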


\begin{proof}
Let $\mathcal{N}_Y = \mathcal{N}_Y'$ be the one-point probability metric measure space.  
Let $\mathcal{N}_X = (X, \mathsf{d}_X, \mu_X)$ with $X = \{x_1, x_2\}$, and define the distance between the points as
\[
\mathsf{d}_X(x_1, x_2) = \sqrt{\frac{2}{(1-\varepsilon)\varepsilon}} \, f(\varepsilon),
\]
so that the scaled distance exactly compensates for the small \(\varepsilon\)-mass in the perturbed measure.  

Define the original measure as $\mu_X$ supported entirely on $x_1$, and the perturbed measure $\mu_X'$ as 
\[
\mu_X'(x_1) = 1-\varepsilon, \quad \mu_X'(x_2) = \varepsilon.
\]

By construction, the GW distance between the original network and the one-point network is zero:
\[
\GW_2(\mathcal{N}_X, \mathcal{N}_Y) = 0,
\]
because all mass in $\mu_X$ is concentrated at a single point, which can be perfectly matched with the single point in $\mathcal{N}_Y$.  

For the perturbed measure, using the two-point formula for GW distance (cf.~\cite[Theorem 5.1(f)]{F_Memoli_2011}), we have
\[
\GW_2(\mathcal{N}_X', \mathcal{N}_Y') 
= \sqrt{\frac{(1-\varepsilon)\varepsilon}{2}} \, \mathsf{d}_X(x_1,x_2)
= f(\varepsilon).
\]

Hence, by the reverse triangle inequality,
\[
|\GW_2(\mathcal{N}_X, \mathcal{N}_Y) - \GW_2(\mathcal{N}_X', \mathcal{N}_Y')| 
\geq |0 - f(\varepsilon)| = f(\varepsilon),
\]
which establishes \eqref{eqn:GW_not_robust}.
\end{proof}

\section{Conic Co-Optimal Transport Distance for Unbalanced Hypernetworks}\label{section:CCOOT} 
In this section, we define a notion of conic GW distance---dubbed \emph{conic Co-Optimal Transport}---which operates on generalized structures called \emph{measure hypernetworks}. The section begins with an explanation of the hypernetwork and co-optimal transport terminology.

\subsection{Hypernetworks and Co-Optimal Transport} 
The classical Gromov--Wasserstein framework provides a principled way to compare kernels of the form
$\omega : X \times X \to \mathbb{R}$, which encode general pairwise distances or similarities on some underlying set. Certain situations call for the need to compare more general maps of the form $\omega:X \times X' \to \R$, where the underlying sets $X$ and $X'$ may be distinct. Common examples of such maps include:
\begin{enumerate}
    \item \textbf{Data Matrices.} Data is generally represented as a rectangular matrix with, say, rows representing samples in the data set and columns representing features. Letting $X$ denote the index set for samples and $X'$ the index set for features, a data matrix can be considered as a map $\omega: X \times X' \to \R$, with $\omega(i,j)$ returning the $(i,j)$-entry of the matrix. The ability to compare maps of this form then leads to a methodology for comparing datasets, as in~\cite{Redko_2020}.
    \item \textbf{Cost Functions.} A commonly used, general formulation of the optimal transport problem is  $\inf_\pi \int_{X \times X'} \omega(x,x') \textnormal{d}\pi(x,x')$, where $\pi$ is a coupling of the underlying measures and $\omega:X \times X' \to \R$ is some fixed cost function. One may wish to understand properties of the space of cost functions, in which case it is useful to metrize this space (see~\cite{needham2024stability}).
    \item \textbf{Hypergraphs.} A hypergraph is a set of nodes $X$ together with a set $E$ of subsets of $X$; if each element of $E$ contains exactly two points, then this recovers the notion of a classical graph. This structure can be encoded via an adjacency function $\omega:X \times E \to \R$, with $\omega(x,e) = 1$ if $x \in e$ and $0$ otherwise. A framework for comparing functions of this form then yields a notion of distance between hypergraphs~\cite{Chowdhury_2024}.
\end{enumerate}
A variant of Gromov-Wasserstein distance which is able to compare structures of this form, dubbed \emph{co-optimal transport (COT)}, was introduced in~\cite{Redko_2020}. The COT framework was originally focused on comparison of data matrices; in this context, the COT comparison of $\omega_X:X \times X' \to \R$ and $\omega_Y:Y \times Y' \to \R$ yields a pair of probabilistic registrations between $X$ and $Y$ (the sample spaces) and between $X'$ and $Y'$ (the feature spaces). The original formulation of COT was generalized to the notion of a measure hypernetwork~\cite{Chowdhury_2024}, whose formalism we now recall.

\begin{mydef}[Measure Hypernetwork and Co-Optimal Transport]
    A \define{measure hypernetwork} is a structure of the form $\mathcal{H}_X = (X,\mu_X,X',\mu_X',\omega_X)$, where $(X,\mu_X)$ and $(X',\mu_X')$ are compact metric measure spaces and $\omega_X:X \times X' \to \R^{\geq 0}$ is a measurable, bounded function. If $\mu_X$ and $\mu_X'$ are probability measures, we refer to $\mathcal{H}_X$ as a \define{probability measure hypernetwork}. 

     For $p \in [1,\infty]$, the \define{$p$-Co-Optimal Transport distance} between probability measure hypernetworks $\mathcal{H}_X$ and $\mathcal{H}_Y$ is 
     \begin{equation}\label{def:coot_distance}
     \COT_p(\mathcal{H}_X,\mathcal{H}_Y) \coloneqq \frac{1}{2} \inf_{\pi \in \Pi(\mu_X,\mu_Y), \pi' \in \Pi(\mu_X',\mu_Y')} \|\omega_X - \omega_Y\|_{\textnormal{L}^p(\pi \otimes \pi')},
     \end{equation}
     where $\omega_X - \omega_Y$ is considered as a function $(X \times Y) \times (X' \times Y') \to \R$. 
\end{mydef}

\begin{remark}
    The \emph{measure hypernetwork} terminology is in reference to the \emph{measure network} terminology of~\cite{Chowdhury_2019}; the idea is that measure hypernetworks serve as natural, general models for weighted hypernetworks. Examples of hypernetworks which arise in practice include feature matrices (the original motivation for developing the framework in~\cite{Redko_2020}) and general cost functions, say, in the context of optimal transport (see~\cite{needham2024stability}).  
\end{remark}

\begin{remark}\label{rem:networks_as_hypernetworks}
        One can consider a probabilistic measure network $\mathcal{N}_X$ as a probabilistic measure hypernetwork $\mathcal{H}_X = (X,\mu_X,X,\mu_X,\omega_X)$. Given two such measure hypernetworks, the COT formula is almost the same as the one of GW, with the difference being that the couplings being optimized over are `de-coupled': we integrate over $\pi \otimes \pi'$ rather than $\pi \otimes \pi$. In this setting, one has $\mathsf{COT}_p(\mathcal{H}_X,\mathcal{H}_Y) \leq \mathsf{GW}_p(\mathcal{N}_X,\mathcal{N}_Y)$, in general, and sufficient conditions for equality are provided in~\cite{Redko_2020}. We consider similar questions for conic versions of these distances below.
\end{remark}

Theoretical and computational properties of $\COT_p$ were studied in \cite{Redko_2020,Chowdhury_2024}. In particular, $\COT_p$ defines a pseudometric on the space of probability measure hypernetworks, considered up to a certain equivalence relation, called \emph{weak isomorphism}. The following section extends this framework to the unbalanced setting and proves an analogous result.

\subsection{Conic Co-Optimal Transport} The distance we study in this section is defined as follows.

\begin{mydef}[Conic Co-Optimal Transport Distance]\label{def:ccot}
For measure hypernetworks $\mathcal{H}_X$ and $\mathcal{H}_Y$, $\delta$ a positive real number, and $\mathsf{d}_{\Co_\delta[\R]}$ a metric on the cone over $\R$, the associated \define{conic Co-Optimal Transport (CCOT) distance} is
\begin{equation*}
\CCOT_\delta(\mathcal{H}_X,\mathcal{H}_Y) \coloneqq \inf\limits_{(\gamma_0,\gamma_1)\in \Gamma(\mu_X,\mu_Y), (\gamma_0',\gamma_1') \in \Gamma(\mu'_X,\mu'_Y)} \mathbf{L}_\delta (\gamma_0,\gamma_1,\gamma_0',\gamma_1')^{1/2},
\end{equation*}
where:
\begin{itemize}[leftmargin=*]
 \item the \define{CCOT loss} $\mathbf{L}_\delta: \Gamma(\mu_X,\mu_Y) \times \Gamma(\mu_X',\mu_Y') \to \R$ is defined by 
\begin{equation}\label{eqn:L_delta}
\begin{split}
&\mathbf{L}_\delta (\gamma_0,\gamma_1,\gamma_0',\gamma_1') \coloneqq \\
&\quad \int_{X' \times Y' \times X \times Y} \mathsf{d}_{\Co_\delta[\R]}\left(p^{\gamma_0,\gamma_0',\gamma,\gamma'}_X(x,y,x',y'),p^{\gamma_1,\gamma_1',\gamma,\gamma'}_Y(x,y,x',y')\right)^2 \\
&\hspace{3in} \cdot \textnormal{d}\gamma(x,y)\textnormal{d}\gamma'(x',y');
\end{split}
\end{equation}
\item the functions $p^{\gamma_0,\gamma_0',\gamma,\gamma'}_X, p^{\gamma_1,\gamma_1',\gamma,\gamma'}_Y: X \times Y \times X' \times Y' \to \Co_\delta[\R]$ are defined by
\[
    p^{\gamma_0,\gamma_0',\gamma,\gamma'}_X(x,y,x',y') = \left[\omega_X(x,x'),\sqrt{\frac{\textnormal{d}\gamma_0}{\textnormal{d}\gamma}(x,y)\frac{\textnormal{d}\gamma_0'}{\textnormal{d}\gamma'}(x',y')}\right]
\]
and
\[
    p^{\gamma_1,\gamma_1',\gamma,\gamma'}_Y(x,y,x',y') = \left[\omega_Y(y,y'),\sqrt{\frac{\textnormal{d}\gamma_1}{\textnormal{d}\gamma}(x,y)\frac{\textnormal{d}\gamma_1'}{\textnormal{d}\gamma'}(x',y')}\right];
\]
    \item $\gamma_0,\gamma_1 \ll \gamma$ and $\gamma_0',\gamma_1' \ll \gamma'$ are any absolutely continuous measures on $X \times Y$ and $X' \times Y'$, respectively.
\end{itemize}
\end{mydef} 

The following lemma gives a useful alternative expression for the CCOT loss. The proof is analogous to that of Lemma \ref{lem:CGW_loss_formula}.

\begin{lemma}\label{lem:CCOOTequiv}
The CCOT loss \eqref{eqn:L_delta} can be equivalently expressed as
\begin{align*}
   & \mathbf{L}_\delta(\gamma_0,\gamma_1,\gamma_0',\gamma_1') \\
   &\qquad =4\delta^2\big(\mu_X(X)\mu_X'(X') + \mu_Y(Y)\mu_Y'(Y')\big)\\
   &\qquad \qquad  -8\delta^2\int_{X\times Y\times X' \times Y'} \Omega\left(\frac{|\omega_X - \omega_Y|}{2\delta}\right)\sqrt{\frac{\textnormal{d}\gamma_0}{\textnormal{d}\gamma}\frac{\textnormal{d}\gamma_0'}{\textnormal{d}\gamma'}\frac{\textnormal{d}\gamma_1}{\textnormal{d}\gamma}\frac{\textnormal{d}\gamma_1'}{\textnormal{d}\gamma'}}  \textnormal{d}\gamma \textnormal{d}\gamma',
\end{align*}
where $\gamma_0,\gamma_1 \ll\gamma$ and $\gamma_0',\gamma_1' \ll \gamma'$.
\end{lemma}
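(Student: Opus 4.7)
The plan is to follow the same path as the proof of Lemma \ref{lem:CGW_loss_formula}, expanding the integrand according to the structure of the cone metric and then collapsing the two resulting pieces via the marginal constraints built into the semicouplings. Concretely, for fixed absolutely continuous reference measures $\gamma \ll \gamma_0,\gamma_1$ on $X\times Y$ and $\gamma' \ll \gamma_0',\gamma_1'$ on $X'\times Y'$, I would begin by applying Definition \ref{def:cone_dist} to the pair $\bigl(p^{\gamma_0,\gamma_0',\gamma,\gamma'}_X(x,y,x',y'),\,p^{\gamma_1,\gamma_1',\gamma,\gamma'}_Y(x,y,x',y')\bigr)$, which expresses the squared cone distance as the difference of a quadratic mass term and a term weighted by $\Omega(|\omega_X-\omega_Y|/(2\delta))$. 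This yields
\begin{align*}
&\mathsf{d}_{\Co_\delta[\R]}\!\bigl(p^{\gamma_0,\gamma_0',\gamma,\gamma'}_X,\,p^{\gamma_1,\gamma_1',\gamma,\gamma'}_Y\bigr)^2 \\
&\quad = 4\delta^2\!\left(\tfrac{\textnormal{d}\gamma_0}{\textnormal{d}\gamma}(x,y)\tfrac{\textnormal{d}\gamma_0'}{\textnormal{d}\gamma'}(x',y') + \tfrac{\textnormal{d}\gamma_1}{\textnormal{d}\gamma}(x,y)\tfrac{\textnormal{d}\gamma_1'}{\textnormal{d}\gamma'}(x',y')\right) \\
&\qquad - 8\delta^2\, \Omega\!\left(\tfrac{|\omega_X-\omega_Y|}{2\delta}\right)\sqrt{\tfrac{\textnormal{d}\gamma_0}{\textnormal{d}\gamma}\tfrac{\textnormal{d}\gamma_1}{\textnormal{d}\gamma}\tfrac{\textnormal{d}\gamma_0'}{\textnormal{d}\gamma'}\tfrac{\textnormal{d}\gamma_1'}{\textnormal{d}\gamma'}}.
\end{align*}

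The second, $\Omega$-weighted line, upon integration against $\textnormal{d}\gamma\,\textnormal{d}\gamma'$, is exactly the second term appearing in the statement, so nothing needs to be done there beyond rewriting. For the first, quadratic, line I would use Fubini's theorem to separate the integrals over $X\times Y$ and $X'\times Y'$, and then observe that $\tfrac{\textnormal{d}\gamma_i}{\textnormal{d}\gamma}\cdot\textnormal{d}\gamma = \textnormal{d}\gamma_i$ (and analogously for $\gamma_i'$ and $\gamma'$), which reduces the first piece to
\[
4\delta^2\bigl(\gamma_0(X\times Y)\,\gamma_0'(X'\times Y') + \gamma_1(X\times Y)\,\gamma_1'(X'\times Y')\bigr).
\]

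From here the marginal conditions in the definition of semicouplings (Definition \ref{def:semi_coupling}) do all the remaining work: since $(\operatorname{Pr}_0)_\#\gamma_0=\mu_X$ and $(\operatorname{Pr}_0)_\#\gamma_0'=\mu_X'$, the total masses satisfy $\gamma_0(X\times Y)=\mu_X(X)$ and $\gamma_0'(X'\times Y')=\mu_X'(X')$, and similarly $\gamma_1(X\times Y)=\mu_Y(Y)$, $\gamma_1'(X'\times Y')=\mu_Y'(Y')$, producing $4\delta^2(\mu_X(X)\mu_X'(X')+\mu_Y(Y)\mu_Y'(Y'))$ and completing the formula. The only point requiring attention—essentially mirroring the corresponding subtlety in Lemma \ref{lem:CGW_loss_formula}—is to confirm that the value is independent of the chosen reference measures $\gamma$ and $\gamma'$, but this follows by the $1$-homogeneity property of the cone cost (cf.\ \cite[Definition 3.3]{chizat2018unbalanced}) already invoked to make Definition \ref{def:ccot} well-posed, so no genuine obstacle arises.
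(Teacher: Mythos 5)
Your proposal is correct and matches the paper's intent exactly: the paper omits the proof, stating only that it is analogous to Lemma \ref{lem:CGW_loss_formula}, and your argument is precisely that analogue — expand the squared cone distance into the quadratic and $\Omega$-weighted pieces, collapse the quadratic piece via the semicoupling marginals to $4\delta^2(\mu_X(X)\mu_X'(X')+\mu_Y(Y)\mu_Y'(Y'))$, and note independence of the reference measures by $1$-homogeneity.
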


As in the case of measure networks, we will consider the proposed distance up to weak isomorphism of hypernetworks. This is a superficial generalization of the notion of weak isomorphism originally proposed in \cite{Chowdhury_2024}, reformulated analogously to Definition \ref{def:wisomorphism}.
\begin{mydef}[Weak Isomorphism of hypernetworks,
\cite{Chowdhury_2024}]\label{def:wisomorphism2}
Measure hypernetworks $\mathcal{H}_X$ and $\mathcal{H}_Y$ are called \define{weakly isomorphic} if there exists a pair of Polish spaces $Z$ and $Z'$ equipped with Borel measures $\mu_{Z}$ and $\mu_{Z}'$, respectively, and measurable maps $f:Z\to X$, $f':Z'\to X'$, $g:Z \to Y$, and $g':Z'\to Y'$ such that
 \begin{itemize}[leftmargin=*]
        \item $f_\#\mu_{Z}=\mu_X$,
        \item $f'_\#\mu_{Z}'=\mu_X'$,
        \item $g_\#\mu_{Z}=\mu_Y$,
        \item $g'_\#\mu_{Z}'=\mu_Y'$,
        \item and $\|(f,f')^\#\omega_X-(g,g')^\#\omega_Y\|_{\mathrm{L}^\infty(\mu_{Z} \otimes \mu_{Z}')} = 0$,
    \end{itemize}
where $(f,f')^\#\omega_X: Z\times Z'\to \R$ is the pullback function given by the map $(z,z') \mapsto \omega_X(f(z),f'(z'))$, and  $(g,g')^\#\omega_Y$ is defined analogously.
\end{mydef}

The next result says that the conic Co-Optimal Transport distance does in fact define a distance up to weak isomorphisms of measure hypernetworks. The proof is the same as that of Theorem \ref{thm:psuedometric}, up to adapting the arguments with more involved notation, so we omit it. The analogous result for co-optimal transport distance is proved in detail in~\cite[Theorem 1]{Chowdhury_2024}.

\begin{theorem}\label{thm:psuedometric2}
    For $\delta\in\R^{>0}$, $\CCOT_{\delta}$ is a pseudometric on the space of measure hypernetworks and, in particular, $\CCOT_{\delta}(\mathcal{H}_X,\mathcal{H}_Y)=0$ if and only if $\mathcal{H}_X$ is weakly isomorphic to $\mathcal{H}_Y$.
\end{theorem}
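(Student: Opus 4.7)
The plan is to parallel the proof of Theorem \ref{thm:psuedometric}, verifying each pseudometric axiom in turn and then the vanishing criterion, with the main adaptations accounting for the two independent semi-couplings $(\gamma_0,\gamma_1)\in\Gamma(\mu_X,\mu_Y)$ and $(\gamma_0',\gamma_1')\in\Gamma(\mu_X',\mu_Y')$ and the product-style reference measure $\gamma\otimes\gamma'$ appearing in the CCOT loss. Non-negativity is immediate, and symmetry follows because swapping $(\gamma_0,\gamma_1)\leftrightarrow(\gamma_1,\gamma_0)$ and $(\gamma_0',\gamma_1')\leftrightarrow(\gamma_1',\gamma_0')$ sends semi-couplings to semi-couplings of the reversed marginals and leaves $\mathbf{L}_\delta$ invariant.

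For the triangle inequality, given three hypernetworks $\mathcal{H}_X,\mathcal{H}_Y,\mathcal{H}_Z$, I would apply the semi-coupling gluing lemma of \cite[Theorem~3.7]{chizat2018unbalanced} \emph{twice}---once to glue near-optimal semi-couplings for the pairs $\mathcal{H}_X\leftrightarrow\mathcal{H}_Y$ and $\mathcal{H}_Y\leftrightarrow\mathcal{H}_Z$ along the unprimed coordinate, and independently along the primed coordinate---and then combine the triangle inequality for $\mathsf{d}_{\Co_\delta[\R]}$ with the Minkowski inequality in $\textrm{L}^2(\gamma\otimes\gamma')$, exactly as in the CGW proof but with each step carried out on the product reference measure.

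For the forward direction of the vanishing criterion, I would use the weak isomorphism data of Definition \ref{def:wisomorphism2} to define $\gamma\coloneqq(f,g)_\#\mu_Z$ and $\gamma'\coloneqq(f',g')_\#\mu_{Z'}$, then take $\gamma_0=\gamma_1=\gamma$ and $\gamma_0'=\gamma_1'=\gamma'$; the marginal conditions follow from the pushforward hypotheses, and after a change of variables the integrand reduces to $\mathsf{d}_{\Co_\delta[\R]}([(f,f')^\#\omega_X,1],[(g,g')^\#\omega_Y,1])^2$, which vanishes $\mu_Z\otimes\mu_{Z'}$-almost everywhere by the essential-sup-zero hypothesis. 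For the converse direction, I would first adapt Lemma \ref{lemma:lsc_functional} to produce an optimal quadruple (weak* compactness of $\Gamma(\mu_X,\mu_Y)\times\Gamma(\mu_X',\mu_Y')$ and weak* lower semi-continuity of $\mathbf{L}_\delta$ follow from the same conditions used in \cite{chizat2018unbalanced}), then invoke Lemma \ref{lem:CCOOTequiv} and combine $\Omega\leq 1$ with the pointwise AM-GM bound
\begin{equation*}
\frac{\textnormal{d}\gamma_0}{\textnormal{d}\gamma}(x,y)\frac{\textnormal{d}\gamma_0'}{\textnormal{d}\gamma'}(x',y')+\frac{\textnormal{d}\gamma_1}{\textnormal{d}\gamma}(x,y)\frac{\textnormal{d}\gamma_1'}{\textnormal{d}\gamma'}(x',y')\geq 2\sqrt{\tfrac{\textnormal{d}\gamma_0}{\textnormal{d}\gamma}\tfrac{\textnormal{d}\gamma_1}{\textnormal{d}\gamma}(x,y)\cdot\tfrac{\textnormal{d}\gamma_0'}{\textnormal{d}\gamma'}\tfrac{\textnormal{d}\gamma_1'}{\textnormal{d}\gamma'}(x',y')}
\end{equation*}
to conclude that both inequalities hold as equalities $\gamma\otimes\gamma'$-a.e.

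The main obstacle I anticipate is extracting the required proportionality of densities from the AM-GM equality case: unlike in the CGW setting, the trick of substituting $(x,y)=(x',y')$ is unavailable because the two factors now live on genuinely different product spaces. Instead, the equality $\tfrac{\textnormal{d}\gamma_0}{\textnormal{d}\gamma}(x,y)\tfrac{\textnormal{d}\gamma_0'}{\textnormal{d}\gamma'}(x',y')=\tfrac{\textnormal{d}\gamma_1}{\textnormal{d}\gamma}(x,y)\tfrac{\textnormal{d}\gamma_1'}{\textnormal{d}\gamma'}(x',y')$ has a separation-of-variables structure that forces the ratio $(\textnormal{d}\gamma_0/\textnormal{d}\gamma)/(\textnormal{d}\gamma_1/\textnormal{d}\gamma)$ to be constant (reciprocal to the analogous ratio on the primed side), where the constant is pinned down by the marginal mass constraints. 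Combined with the $\Omega=1$ condition on the support, which by the argument in the proof of Theorem \ref{thm:psuedometric} forces the pulled-back kernels to agree in $\textrm{L}^\infty$, this produces the data required by Definition \ref{def:wisomorphism2}: take $Z=X\times Y$ with $\mu_Z$ a suitable rescaling of $\gamma_0$, $Z'=X'\times Y'$ with $\mu_{Z'}$ a rescaling of $\gamma_0'$, and the four coordinate projections as $f,g,f',g'$.
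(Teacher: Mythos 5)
Note first that the paper does not actually write this proof out: it asserts that the argument is the same as for Theorem \ref{thm:psuedometric} up to notation and defers to \cite[Theorem 1]{Chowdhury_2024} (which is set among \emph{probability} measure hypernetworks). Most of your adaptation is the correct one: non-negativity, symmetry, the triangle inequality via two applications of the gluing lemma (one per factor) combined with Minkowski in $\textrm{L}^2(\gamma\otimes\gamma')$, the forward direction with $\gamma=(f,g)_\#\mu_Z$ and $\gamma'=(f',g')_\#\mu_{Z'}$, the existence of optimal quadruples, and the AM--GM reduction are all fine, and you correctly identify that the diagonal substitution $(x,y)=(x',y')$ from the CGW proof is unavailable here. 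The separation-of-variables observation is exactly the right replacement.

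The gap is in your last step. Separation of variables gives $\frac{\textnormal{d}\gamma_0}{\textnormal{d}\gamma}=c\,\frac{\textnormal{d}\gamma_1}{\textnormal{d}\gamma}$ and $\frac{\textnormal{d}\gamma_1'}{\textnormal{d}\gamma'}=c\,\frac{\textnormal{d}\gamma_0'}{\textnormal{d}\gamma'}$ for a single constant $c$, and integrating the marginal constraints pins it down as $c=\mu_X(X)/\mu_Y(Y)=\mu_Y'(Y')/\mu_X'(X')$. But then no rescaling $\lambda\gamma_0$ can serve as $\mu_Z$: its marginals are $\lambda\mu_X$ and $\lambda c\,\mu_Y$, and Definition \ref{def:wisomorphism2} requires these to equal $\mu_X$ and $\mu_Y$, forcing $\lambda=1$ and $\lambda c=1$ simultaneously, i.e.\ $c=1$ --- which nothing in the vanishing hypothesis guarantees. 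Concretely, take $X=X'=Y=Y'$ a single point, $\omega_X=\omega_Y=0$, $\mu_X=2\delta_{pt}$, $\mu_X'=\tfrac12\delta_{pt}$, $\mu_Y=\mu_Y'=\delta_{pt}$; the semi-couplings with densities $2,1$ and $\tfrac12,1$ give $\mathbf{L}_\delta=4\delta^2(1+1)-8\delta^2\sqrt{2\cdot\tfrac12\cdot 1\cdot 1}=0$, so $\CCOT_\delta=0$, yet no $\mu_Z$ can push forward to measures of total mass $2$ and $1$ at once. So the obstruction you ran into is not a defect of your argument but of the statement in the unbalanced setting: vanishing of $\CCOT_\delta$ only controls the products $\mu_X(X)\mu_X'(X')$ and $\mu_Y(Y)\mu_Y'(Y')$, and mass can be traded between the sample and feature factors. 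Your proof closes cleanly (with $\mu_Z=\gamma_0=\gamma_1$ and $\mu_{Z'}=\gamma_0'=\gamma_1'$, no rescaling needed) once one restricts to probability measure hypernetworks as in \cite{Chowdhury_2024}, or assumes $\mu_X(X)=\mu_Y(Y)$ and $\mu_X'(X')=\mu_Y'(Y')$, or adopts a scaling-aware variant of Definition \ref{def:wisomorphism2}; you should make explicit which of these you are invoking rather than appealing to a ``suitable rescaling.''
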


In particular, the proof of Theorem \ref{thm:psuedometric2} would make use of the following lemma, which is analogous to Lemma \ref{lemma:lsc_functional} and which is proved via similar arguments. We include the statement of the lemma here, because it will be used in other arguments below.

\begin{lemma}\label{lem:CCOT_realized}
    The infimum defining $\CCOT_\delta$ is always realized. 
\end{lemma}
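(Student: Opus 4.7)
The plan is to mirror the direct-method argument used for Lemma \ref{lemma:lsc_functional}, with the only adjustment being that the feasible set is now a Cartesian product of two semi-coupling spaces. Concretely, I would view $\mathbf{L}_\delta$ as a functional on the product space $\Gamma(\mu_X,\mu_Y) \times \Gamma(\mu_X',\mu_Y')$ equipped with the product of the weak* topologies, and show two ingredients: (i) the feasible set is weak* compact, and (ii) the CCOT loss is weak* lower semi-continuous on it.

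For (i), since $X,Y,X',Y'$ are compact Polish and the marginal measures have finite total mass, each factor $\Gamma(\mu_X,\mu_Y)$ and $\Gamma(\mu_X',\mu_Y')$ is weak* compact by \cite[Proposition 3.4]{chizat2018unbalanced} (the same statement used in Lemma \ref{lemma:lsc_functional}). The product of weak* compact sets is compact in the product topology by Tychonoff's theorem, so the joint feasible set is compact as well.

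For (ii), I would use the alternative expression from Lemma \ref{lem:CCOOTequiv}, which rewrites $\mathbf{L}_\delta(\gamma_0,\gamma_1,\gamma_0',\gamma_1')$ as an affine constant plus a negative term
\[
-8\delta^2 \int_{X\times Y\times X' \times Y'} \Omega\!\left(\frac{|\omega_X - \omega_Y|}{2\delta}\right)\sqrt{\tfrac{\textnormal{d}\gamma_0}{\textnormal{d}\gamma}\tfrac{\textnormal{d}\gamma_0'}{\textnormal{d}\gamma'}\tfrac{\textnormal{d}\gamma_1}{\textnormal{d}\gamma}\tfrac{\textnormal{d}\gamma_1'}{\textnormal{d}\gamma'}}\,\textnormal{d}\gamma\,\textnormal{d}\gamma'.
\]
Lower semi-continuity of $\mathbf{L}_\delta$ is equivalent to upper semi-continuity of this integral in the product weak* topology. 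The integrand is a $1$-homogeneous concave function of the pair of densities $\big((\tfrac{\textnormal{d}\gamma_0}{\textnormal{d}\gamma})(\tfrac{\textnormal{d}\gamma_0'}{\textnormal{d}\gamma'}), (\tfrac{\textnormal{d}\gamma_1}{\textnormal{d}\gamma})(\tfrac{\textnormal{d}\gamma_1'}{\textnormal{d}\gamma'})\big)$ (it is a weighted geometric mean), integrated against the product reference measure $\gamma \otimes \gamma'$ on $X \times Y \times X' \times Y'$. This is precisely the setup handled by \cite[Lemma 2.9]{chizat2018unbalanced}: the functional $(\gamma_0\otimes\gamma_0', \gamma_1\otimes\gamma_1') \mapsto \int \Omega(\cdots)\sqrt{\cdot}$ on the space of pairs of measures on the product space is upper semi-continuous with respect to weak* convergence. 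Since the product tensor map $(\eta,\eta')\mapsto \eta\otimes\eta'$ is continuous from the product weak* topology into the weak* topology on $\Mm(X\times Y \times X'\times Y')$ (because $X,Y,X',Y'$ are compact, so bounded continuous test functions separate), weak* convergence of $(\gamma_{0,n},\gamma_{1,n},\gamma_{0,n}',\gamma_{1,n}')$ passes to weak* convergence of the tensor products, and the lower semi-continuity of $\mathbf{L}_\delta$ follows.

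With compactness and lower semi-continuity in hand, the direct method of the calculus of variations yields a minimizer. The only step that requires any real care is (ii), and within it the key point is that the integrand in Lemma \ref{lem:CCOOTequiv} factorizes through $\gamma\otimes\gamma'$, so that the existing one-variable semi-continuity lemma of \cite{chizat2018unbalanced} applies on the product space. The rest is essentially bookkeeping.
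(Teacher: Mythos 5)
Your proof is correct and follows essentially the same route the paper intends: the paper omits an explicit proof of Lemma \ref{lem:CCOT_realized}, stating only that it is ``analogous to Lemma \ref{lemma:lsc_functional} and \ldots proved via similar arguments,'' i.e.\ weak* compactness of the semi-coupling sets via \cite[Proposition 3.4]{chizat2018unbalanced} plus weak* lower semi-continuity of the loss via \cite[Lemma 2.9]{chizat2018unbalanced}, followed by the direct method. Your additional care about the product topology and the weak* continuity of the tensor map $(\eta,\eta')\mapsto\eta\otimes\eta'$ is exactly the bookkeeping needed to adapt that argument to the two-semi-coupling setting, so there is nothing to add.
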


\subsection{Relationship with UCOT}
We now examine the relationship between the conic Co-Optimal Transport (CCOT) distance and the Unbalanced Co-Optimal transport (UCOT) introduced in \cite{UCOOT}, whose definition we restate in the context of hypernetworks below.

\begin{mydef}[Unbalanced Co-Optimal Transport \cite{UCOOT}]  \label{def:UCOOT} 
The \define{Unbalanced Co-Optimal transport (UCOT) distance} between  hypernetworks $\mathcal{H}_X$ and $\mathcal{H}_Y$ is defined as 
    \begin{align*}
    \UCOT(\mathcal{H}_X,\mathcal{H}_Y)^2 &\coloneqq \inf_{\gamma\in\Mm(X \times Y),\gamma'\in\Mm(X' \times Y')}\int_{X' \times Y' \times X \times Y } |\omega_X -\omega_Y|^2\operatorname{d}\gamma\operatorname{d}\gamma' \\
    &\qquad \qquad +\operatorname{KL}((\operatorname{Pr}_0)_\#\gamma\otimes(\operatorname{Pr}_0)_\#\gamma' | \mu_X\otimes\mu_X') \\
    &\qquad \qquad \qquad + \operatorname{KL}((\operatorname{Pr}_1)_\#\gamma\otimes(\operatorname{Pr}_1)_\#\gamma' | \mu_Y\otimes\mu_Y'),
    \end{align*}
    where $\operatorname{KL}(\cdot \mid \cdot)$ denotes Kullback-Leibler divergence.
\end{mydef}
We can now state a comparison result between UCOT and CCOT. This result mirrors the relationship between the $\CGW$ and the \emph{unbalanced Gromov-Wasserstein distance}, defined similarly to UCOT in \cite{NEURIPS2021}.  The  proof follows a similar argument. 

\begin{proposition}
    Let $\mathcal{H}_X$ and $\mathcal{H}_Y$ be measure hypernetworks and let the function defining $\mathsf{d}_{\Co[\R]}$ be $\Omega(z) = \exp(-z^2/2)$. Then 
    \[
    \UCOT(\mathcal{H}_X,\mathcal{H}_Y)\geq \CCOT_{1/2}(\mathcal{H}_X,\mathcal{H}_Y).
    \]
\end{proposition}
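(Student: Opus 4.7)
The plan is to adapt the argument for the CGW versus UGW comparison in \cite{NEURIPS2021} to the hypernetwork setting. Concretely, given any UCOT-feasible pair $(\gamma,\gamma')$, I will exhibit a CCOT-feasible tuple $(\gamma_0,\gamma_1,\gamma_0',\gamma_1')$ whose $\CGW$-style loss $\mathbf{L}_{1/2}$ is controlled by the UCOT objective value at $(\gamma,\gamma')$, and then take infima on both sides.

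For the construction, I may assume without loss of generality that $(\mathrm{Pr}_0)_\#\gamma\ll\mu_X$, $(\mathrm{Pr}_1)_\#\gamma\ll\mu_Y$, $(\mathrm{Pr}_0)_\#\gamma'\ll\mu_X'$, and $(\mathrm{Pr}_1)_\#\gamma'\ll\mu_Y'$, since otherwise the KL penalties in UCOT are infinite and the inequality is automatic. Let $a:=d(\mathrm{Pr}_0)_\#\gamma/d\mu_X$, $b:=d(\mathrm{Pr}_1)_\#\gamma/d\mu_Y$, and define $a',b'$ analogously for $\gamma'$. I then set $d\gamma_0/d\gamma=1/a$ and $d\gamma_1/d\gamma=1/b$ (with the convention that the ratio is $0$ where the denominator vanishes), and analogously for $\gamma_0',\gamma_1'$. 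A direct pushforward calculation confirms that $(\gamma_0,\gamma_1)\in\Gamma(\mu_X,\mu_Y)$ and $(\gamma_0',\gamma_1')\in\Gamma(\mu_X',\mu_Y')$.

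Using $\gamma,\gamma'$ as reference measures in Lemma \ref{lem:CCOOTequiv} and specializing to $\Omega(z)=\exp(-z^2)$ gives
\begin{equation*}
\mathbf{L}_{1/2}(\gamma_0,\gamma_1,\gamma_0',\gamma_1') = \mu_X(X)\mu_X'(X') + \mu_Y(Y)\mu_Y'(Y') - 2\int \exp\!\bigl(-|\omega_X-\omega_Y|^2\bigr)(aba'b')^{-1/2}\,\mathrm{d}\gamma\,\mathrm{d}\gamma'.
\end{equation*}
On the UCOT side, the product-KL decomposition
\begin{equation*}
\mathrm{KL}(\alpha\otimes\beta\mid\mu\otimes\nu) = \beta(\cdot)\mathrm{KL}(\alpha\mid\mu) + \alpha(\cdot)\mathrm{KL}(\beta\mid\nu) + (\alpha(\cdot)-\mu(\cdot))(\beta(\cdot)-\nu(\cdot))
\end{equation*}
expresses both $\mathrm{KL}_X$ and $\mathrm{KL}_Y$ as integrals of $\log a,\log b,\log a',\log b'$ against $\gamma\otimes\gamma'$ plus simple mass terms, so that the difference between the two objectives can be written as a single integral against $\mathrm{d}\gamma\,\mathrm{d}\gamma'$. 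The core step, following the SVP template, is then a pointwise estimate combining the elementary inequality $\exp(-c)\geq 1-c$ (valid for all $c\in\mathbb{R}$) with the non-negativity of the entropy functional $x\mapsto x\log x-x+1$, packaged in the cone-distance bound $r^2+s^2-2rs\exp(-c)\leq (r-s)^2+2rsc$, where $r,s$ play the roles of the square-root density products appearing in $\mathbf{L}_{1/2}$.

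The main obstacle will be aligning the residual quadratic term $(r-s)^2$ from this cone estimate with the entropy contributions coming from the product-KL expansion, while simultaneously absorbing the cross-term $(m_\gamma-\mu_X(X))(m_{\gamma'}-\mu_X'(X'))$ that appears in the identity above; the choice $\Omega(z)=\exp(-z^2)$ is used precisely so that these terms can be recombined via the exponential-entropy duality. A secondary technical point is handling the degenerate case where $a,b,a',b'$ vanish on sets of positive measure, which can be managed by a standard truncation-and-limit argument. Once the pointwise/integral inequality is established, taking the infimum on both sides delivers $\UCOT(\mathcal{H}_X,\mathcal{H}_Y)\geq\CCOT_{1/2}(\mathcal{H}_X,\mathcal{H}_Y)$.
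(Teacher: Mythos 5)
Your overall strategy matches the paper's: starting from a $\UCOT$-feasible pair $(\gamma,\gamma')$, reweight by marginal density ratios to produce candidate semi-couplings, evaluate $\mathbf{L}_{1/2}$ via Lemma \ref{lem:CCOOTequiv}, and compare. Your densities $1/a,1/b,1/a',1/b'$ are exactly the paper's $f,g,f',g'$ (defined there as $f=\textnormal{d}\mu_X/\textnormal{d}(\operatorname{Pr}_0)_\#\gamma$, etc.), and your product-KL expansion into marginal KL terms plus a mass cross-term is correct. However, the decisive step---showing that the $\UCOT$ objective dominates $\mathbf{L}_{1/2}(\gamma_0,\gamma_1,\gamma_0',\gamma_1')$---is exactly what you defer to ``the main obstacle,'' so the proof is not complete, and the tool you propose for that step cannot close it. Writing $r=\sqrt{ff'}=(aa')^{-1/2}$, $s=\sqrt{gg'}=(bb')^{-1/2}$ and $c=|\omega_X-\omega_Y|^2$, your bound $r^2+s^2-2rse^{-c}\le (r-s)^2+2rsc$ would require dominating $\int 2rs\,c\,\textnormal{d}\gamma\,\textnormal{d}\gamma'$ by the $\UCOT$ transport term $\int c\,\textnormal{d}\gamma\,\textnormal{d}\gamma'$ plus KL terms. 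Since $rs$ is the reciprocal of a geometric mean of marginal densities it is unbounded, and the KL terms carry no dependence on $c$, so no such domination is available: the linearization $e^{-c}\ge 1-c$ discards precisely the structure that makes the Gaussian kernel work here.

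The paper's proof avoids this by first invoking the identity of \cite[Lemma 1]{NEURIPS2021} (adapted to the co-optimal setting), which rewrites the full $\UCOT$ objective as $\int|\omega_X-\omega_Y|^2 ff'gg'\,\textnormal{d}\gamma\,\textnormal{d}\gamma'$ plus KL divergences of the singular parts $(\mu_X\otimes\mu_X')^{\perp}$ and $(\mu_Y\otimes\mu_Y')^{\perp}$; the transport cost is thereby reweighted by the same density product $ff'gg'=(rs)^2$ that appears in the cone cost, and only then is the comparison with $\mathsf{d}_{\Co[\R]}([\omega_X,r],[\omega_Y,s])^2=r^2+s^2-2rse^{-c}$ made, with the exponential kept intact. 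To complete your route you would need an entropy-versus-cone inequality of Liero--Mielke--Savar\'e type (non-negativity of $t\mapsto t\log t - t + 1$ combined with optimizing over $c$, not a first-order Taylor bound), which is what the cited lemma packages. A secondary gap: your ``direct pushforward calculation'' that $(\gamma_0,\gamma_1)\in\Gamma(\mu_X,\mu_Y)$ requires $\mu_X\ll(\operatorname{Pr}_0)_\#\gamma$ (i.e., $a>0$ $\mu_X$-a.e.), whereas finiteness of the KL penalties gives absolute continuity only in the opposite direction; the first marginal of $\gamma_0=(1/a)\gamma$ is a priori only $\mu_X$ restricted to $\{a>0\}$, which is why the singular-part bookkeeping appears in the paper's argument and why a truncation-and-limit step alone will not restore the exact marginal constraint.
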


\begin{proof}
Let $\gamma\in\Mm(X\times Y)$ and $\gamma'\in\Mm(X'\times Y')$ attain the infimum defined by $\UCOT(\mathcal{H}_X,\mathcal{H}_Y)$. We note that the existence of these minimizers is guaranteed by adapting the argument of~\cite[Proposition 2]{NEURIPS2021} to the co-optimal transport setting. We define densities via Lebesgue decompositions as
\begin{align*}
    \mu_X = f (\operatorname{Pr}_0)_\# \gamma + \mu_X^\perp &\qquad \mu_X' = f' (\operatorname{Pr}_0)_\# \gamma' + (\mu_X')^\perp \\
    \mu_Y = g (\operatorname{Pr}_1)_\# \gamma + \mu_Y^\perp &\qquad \mu_Y' = g' (\operatorname{Pr}_1)_\# \gamma' + (\mu_Y')^\perp.
\end{align*}
Using these functions we can define semi-couplings $(\gamma_0,\gamma_1)\in \Gamma(\mu_X,\mu_Y)$ and $(\gamma_0',\gamma_1')\in \Gamma(\mu_X',\mu_Y')$ by, for any $U\subseteq X\times Y$ and $U'\subseteq X'\times Y'$,
\begin{align*}
    \gamma_0(U) &=\int_{U} f(x) \textnormal{d}\gamma(x,y) + (\mu_X^\perp \otimes \overline{\mu}_Y)(U)\\ \gamma_0'(U') &=\int_{U'} f'(x') \textnormal{d}\gamma'(x',y')+ ((\mu_X')^\perp \otimes \overline{\mu}_Y')(U') \\
    \gamma_1(U) &= \int_{U} g(y) \textnormal{d}\gamma(x,y) + (\overline{\mu}_X \otimes \mu_Y^\perp)(U)\\
    \gamma_1'(U')&= \int_{U'} g'(y') \textnormal{d}\gamma'(x',y')+ (\overline{\mu}_X' \otimes (\mu_Y')^\perp)(U'),
\end{align*}
where $\overline{\mu}_X$, $\overline{\mu}_X'$, $\overline{\mu}_Y$, and $\overline{\mu}_Y'$ are arbitrary choices of probability measures on the appropriate spaces. Then 
\[
\mu_X \otimes \mu_X' = ff' (\mathrm{Pr}_0)_\# \gamma \otimes (\mathrm{Pr_0})_\#\gamma' + (\mu_X \otimes \mu_X')^\perp,
\]
so that 
\begin{align*}
&\operatorname{KL}((\mathrm{Pr}_0)_\# \gamma \otimes (\mathrm{Pr_0})_\#\gamma' \mid \mu_X \otimes \mu_X') \\
& \qquad =\int_{X\times X'} \log \frac{1}{ff'} - 1 + f f' \; \mathrm{d}((\mathrm{Pr}_0)_\# \gamma \otimes (\mathrm{Pr_0})_\#\gamma')  + (\mu_X \otimes \mu_X')^\perp (X \times X')\\
& \qquad =\int_{X'\times Y'} \int_{X \times Y} \log \frac{1}{ff'} - 1 + f f' \; \mathrm{d}\gamma \mathrm{d}\gamma' + (\mu_X \otimes \mu_X')^\perp (X \times X'),
\end{align*}
with a similar expression holding for $\operatorname{KL}((\mathrm{Pr}_1)_\# \gamma \otimes (\mathrm{Pr_1})_\#\gamma' \mid \mu_Y \otimes \mu_Y')$. Applying the elementary inequality
\[
z^2+\log\frac1A-1+A+\log\frac1B-1+B
\geq
A+B-2\sqrt{AB}\,e^{-z^2/2}
\]
pointwise to the integrands, with $z = |\omega_X - \omega_Y|$, $A = ff'$, $B = gg'$ (for $A,B > 0$, and using the standard extended-valued convention where necessary), then gives
\begin{align}
    \mathsf{UCOT}(\mathcal{H}_X,\mathcal{H}_Y)^2 &\geq \iint ff' \mathrm{d}\gamma \mathrm{d}\gamma' + \iint gg' \,\mathrm{d}\gamma \mathrm{d}\gamma' \nonumber \\
    &\qquad - 2 \iint \exp\left(-\frac{|\omega_X - \omega_Y|^2}{2}\right) \sqrt{ff'gg'} \,\mathrm{d}\gamma \mathrm{d}\gamma' \nonumber \\
    &\qquad \qquad + (\mu_X \otimes \mu_X')^\perp(X \times X') + (\mu_Y \otimes \mu_Y')^\perp (Y \times Y') \nonumber \\
    &= \mu_X(X)\mu_X'(X') + \mu_Y(Y)\mu_Y'(Y') \label{eqn:UCOT_bound_1} \\
    &\qquad - 2 \iint \exp\left(-\frac{|\omega_X - \omega_Y|^2}{2}\right) \sqrt{ff'gg'} \,\mathrm{d}\gamma \mathrm{d}\gamma' \nonumber \\
    &\geq \mathbf{L}_{1/2}(\gamma_0,\gamma_1,\gamma_0',\gamma_1') \label{eqn:UCOT_bound_2} \\
    &\geq \mathsf{CCOT}_{1/2}(\mathcal{H}_X,\mathcal{H}_Y)^2, \nonumber 
\end{align}
where \eqref{eqn:UCOT_bound_1} uses the fact that 
\[
\iint ff' \mathrm{d}\gamma \mathrm{d}\gamma' + (\mu_X \otimes \mu_X')^\perp(X \times X') = \mu_X(X)\mu_X'(X'),
\]
plus a similar identity for $\mathcal{H}_Y$, and where \eqref{eqn:UCOT_bound_2} uses Lemma \ref{lem:CCOOTequiv} plus the fact that we have an upper bound by only evaluating continuous parts of the measures.
\end{proof}

\subsection{Relationship with CGW Distances}\label{sec:CCOOT=CGW}
Consider measure networks $\mathcal{N}_X=(X,\mu_X,\omega_X)$ and observe that we may construct a hypernetwork from $\mathcal{N}_X$ as follows $\mathcal{H}_X=(X,\mu_X,X,\mu_X,\omega_X)$. In the following section we will investigate the relationship between $\CCOT(\mathcal{H}_X,\mathcal{H}_Y)$ and $\CGW(\mathcal{N}_X,\mathcal{N}_Y)$ (cf.~Remark~\ref{rem:networks_as_hypernetworks}). First, we observe that conic Co-Optimal Transport lower bounds the conic Gromov-Wasserstein distance, in general.

\begin{proposition}\label{prop:CCOT_less_than_CGW}
    Let $\mathcal{N}_X,\mathcal{N}_Y$ be measure networks with associated measure hypernetworks $\mathcal{H}_X,\mathcal{H}_Y$. Then
    \begin{equation*}
        \CCOT_\delta(\mathcal{H}_X,\mathcal{H}_Y)\leq\CGW_\delta(\mathcal{N}_X,\mathcal{N}_Y). 
    \end{equation*}
\end{proposition}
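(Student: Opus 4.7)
The plan is to exhibit a specific admissible element for the CCOT optimization problem, built from an optimal CGW semi-coupling, and then observe that the CCOT loss at this element coincides exactly with the CGW loss. Since we may restrict to a minimizer (Lemma \ref{lemma:lsc_functional}), let $(\gamma_0,\gamma_1)\in\Gamma(\mu_X,\mu_Y)$ attain $\CGW_\delta(\mathcal{N}_X,\mathcal{N}_Y)$, and let $\gamma\ll\gamma_0,\gamma_1$ be a reference measure on $X\times Y$.

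The associated hypernetworks are $\mathcal{H}_X=(X,\mu_X,X,\mu_X,\omega_X)$ and $\mathcal{H}_Y=(Y,\mu_Y,Y,\mu_Y,\omega_Y)$, so $\mu_X'=\mu_X$ and $\mu_Y'=\mu_Y$ in the CCOT setup. Consequently, the same pair $(\gamma_0,\gamma_1)$ also lies in $\Gamma(\mu_X',\mu_Y')$, and the quadruple $(\gamma_0,\gamma_1,\gamma_0',\gamma_1') := (\gamma_0,\gamma_1,\gamma_0,\gamma_1)$ is admissible for the CCOT infimum. As an absolutely continuous reference measure on $X'\times Y'$, take $\gamma'=\gamma$.

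With these choices, compare the integrands in Definitions \ref{CGW_ours} and \ref{def:ccot}. For the cone points, we have
\[
p^{\gamma_0,\gamma_0',\gamma,\gamma'}_X(x,y,x',y') = \left[\omega_X(x,x'),\sqrt{\tfrac{\textnormal{d}\gamma_0}{\textnormal{d}\gamma}(x,y)\tfrac{\textnormal{d}\gamma_0}{\textnormal{d}\gamma}(x',y')}\right] = p^{\gamma_0,\gamma}_X(x,y,x',y'),
\]
and analogously $p^{\gamma_1,\gamma_1',\gamma,\gamma'}_Y = p^{\gamma_1,\gamma}_Y$. Moreover, the integration measure $\textnormal{d}\gamma(x,y)\,\textnormal{d}\gamma'(x',y') = \textnormal{d}\gamma(x,y)\,\textnormal{d}\gamma(x',y')$ and the domain $X\times Y\times X'\times Y' = (X\times Y)^2$ coincide with those of $\mathbf{H}_\delta$. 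Therefore
\[
\mathbf{L}_\delta(\gamma_0,\gamma_1,\gamma_0,\gamma_1) = \mathbf{H}_\delta(\gamma_0,\gamma_1) = \CGW_\delta(\mathcal{N}_X,\mathcal{N}_Y)^2,
\]
and taking the infimum over admissible quadruples on the CCOT side yields $\CCOT_\delta(\mathcal{H}_X,\mathcal{H}_Y)^2 \leq \CGW_\delta(\mathcal{N}_X,\mathcal{N}_Y)^2$, as desired. There is no serious obstacle here; the proof is essentially a verification that plugging in $\gamma'=\gamma$ and $\gamma_i'=\gamma_i$ reduces the decoupled CCOT integrand to the coupled CGW integrand, mirroring the standard observation $\COT_p \leq \GW_p$ on the image of the embedding of networks into hypernetworks (Remark \ref{rem:networks_as_hypernetworks}).
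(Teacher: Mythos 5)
Your proposal is correct and takes essentially the same route as the paper: both rest on the identity $\mathbf{H}_\delta(\gamma_0,\gamma_1) = \mathbf{L}_\delta(\gamma_0,\gamma_1,\gamma_0,\gamma_1)$ for the diagonal quadruple, followed by the observation that the CCOT infimum runs over a larger feasible set. The only cosmetic difference is that you invoke the existence of an optimal semi-coupling (Lemma \ref{lemma:lsc_functional}), whereas the paper simply compares infima directly, which avoids needing a minimizer at all.
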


\begin{proof}
    Let $(\gamma_0,\gamma_1)\in\Gamma(\mu_X,\mu_Y)$ and note that $\mathbf{H}_\delta(\gamma_0,\gamma_1) = \mathbf{L}_\delta(\gamma_0,\gamma_1,\gamma_0,\gamma_1)$. Therefore, \begin{align*}
        \inf\limits_{(\gamma_0,\gamma_1)\in \Gamma(\mu_X,\mu_Y)} \mathbf{H}_\delta(\gamma_0,\gamma_1) &= \inf\limits_{(\gamma_0,\gamma_1)\in \Gamma(\mu_X,\mu_Y)} \mathbf{L}_\delta(\gamma_0,\gamma_1,\gamma_0,\gamma_1)\\
        &\geq\inf\limits_{(\gamma_0,\gamma_1),(\gamma_0',\gamma_1')\in \Gamma(\mu_X,\mu_Y)}
         \mathbf{L}_\delta(\gamma_0,\gamma_1,\gamma_0',\gamma_1').
    \end{align*}
\end{proof}

We now provide sufficient conditions under which this lower bound becomes an equality, i.e., when CCOT and CGW coincide. 

\begin{theorem}\label{thm:CGW_CCOOT_equivalence}
Let $\mathcal{N}_X,\mathcal{N}_Y$ be measure networks and consider the associated hypernetworks, $\mathcal{H}_X,\mathcal{H}_Y$ respectively. Suppose that $\omega_X$, $\omega_Y$, and $\Omega$ have the property that the map $k:(X \times Y) \times (X \times Y) \to \R$ given by 
\[
k \coloneqq \Omega\left(\frac{|\omega_X-\omega_Y|}{2\delta}\right)
\]
defines a positive-definite kernel. Then for $(\gamma_0,\gamma_1)$, $(\gamma_0',\gamma_1')\in\Gamma(\mu_X,\mu_Y)$ such that $\CCOT_\delta(\mathcal{H}_X,\mathcal{H}_Y) = \mathbf{L}_\delta(\gamma_0,\gamma_1,\gamma_0',\gamma_1')$, we have
\begin{equation*}
\CCOT_\delta(\mathcal{H}_X,\mathcal{H}_Y)^2= \mathbf{L}_\delta(\gamma_0,\gamma_1,\gamma_0,\gamma_1) = \mathbf{L}_\delta(\gamma_0',\gamma_1',\gamma_0',\gamma_1').
\end{equation*}
It follows that 
\begin{equation*}
    \CCOT_\delta(\mathcal{H}_X,\mathcal{H}_Y)^2= \mathbf{H}_{\delta}(\gamma_0,\gamma_1)= \mathbf{H}_{\delta}(\gamma_0',\gamma_1') = \CGW(\mathcal{N}_X,\mathcal{N}_Y)^2.
\end{equation*}
\end{theorem}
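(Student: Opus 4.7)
The plan is to recast the cross term appearing in $\mathbf{L}_\delta$ as a positive semi-definite bilinear form on a space of measures, and then run a short Cauchy--Schwarz argument driven by the optimality of $(\gamma_0,\gamma_1,\gamma_0',\gamma_1')$. First I would invoke Lemma \ref{lem:CCOOTequiv}. Since both hypernetworks come from measure networks, the constant term in that formula becomes $C_0 := 4\delta^2\bigl(\mu_X(X)^2 + \mu_Y(Y)^2\bigr)$. For any semicoupling $(\eta_0,\eta_1)\in\Gamma(\mu_X,\mu_Y)$, define the positive measure
\[
\nu_{(\eta_0,\eta_1)} \coloneqq \sqrt{\tfrac{\textnormal{d}\eta_0}{\textnormal{d}\eta}\cdot\tfrac{\textnormal{d}\eta_1}{\textnormal{d}\eta}}\;\eta \;\in\;\Mm(X\times Y),
\]
which is independent of the dominating $\eta$ by the same $1$-homogeneity argument used for $\mathbf{H}_\delta$. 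With the bilinear pairing $B(\nu,\nu')\coloneqq\int_{(X\times Y)^2} k\,\textnormal{d}\nu\,\textnormal{d}\nu'$, Lemma \ref{lem:CCOOTequiv} then gives the clean rewriting
\[
\mathbf{L}_\delta(\eta_0,\eta_1,\eta_0',\eta_1') \;=\; C_0 \;-\; 8\delta^2\, B\bigl(\nu_{(\eta_0,\eta_1)},\,\nu_{(\eta_0',\eta_1')}\bigr).
\]

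Next I would exploit the assumption that $k$ is positive-definite. This makes $B$ a symmetric positive semi-definite bilinear form on the space of (signed) Radon measures on $X\times Y$, so the standard Cauchy--Schwarz inequality $B(\nu,\nu')^2 \leq B(\nu,\nu)\,B(\nu',\nu')$ applies. Writing $a := B(\nu_{(\gamma_0,\gamma_1)},\nu_{(\gamma_0,\gamma_1)})$, $b := B(\nu_{(\gamma_0',\gamma_1')},\nu_{(\gamma_0',\gamma_1')})$, and $c := B(\nu_{(\gamma_0,\gamma_1)},\nu_{(\gamma_0',\gamma_1')})$, all three quantities are nonnegative and satisfy $c \leq \sqrt{ab}$.

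Now I feed in the optimality hypothesis. Because $(\gamma_0,\gamma_1,\gamma_0',\gamma_1')$ minimizes $\mathbf{L}_\delta$ over $\Gamma(\mu_X,\mu_Y)\times\Gamma(\mu_X,\mu_Y)$, we have in particular
\[
\mathbf{L}_\delta(\gamma_0,\gamma_1,\gamma_0',\gamma_1') \;\leq\; \mathbf{L}_\delta(\gamma_0,\gamma_1,\gamma_0,\gamma_1) \quad\text{and}\quad \mathbf{L}_\delta(\gamma_0,\gamma_1,\gamma_0',\gamma_1') \;\leq\; \mathbf{L}_\delta(\gamma_0',\gamma_1',\gamma_0',\gamma_1'),
\]
which by the formula above translate to $c\geq a$ and $c\geq b$. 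Combined with $c\leq \sqrt{ab}$, these inequalities force $a \leq \sqrt{ab}$ and $b\leq\sqrt{ab}$, whence $a=b$, and then $a=b\leq c\leq\sqrt{ab}=a$ collapses to $a=b=c$. Substituting back yields the asserted equalities $\CCOT_\delta(\mathcal{H}_X,\mathcal{H}_Y) = \mathbf{L}_\delta(\gamma_0,\gamma_1,\gamma_0,\gamma_1) = \mathbf{L}_\delta(\gamma_0',\gamma_1',\gamma_0',\gamma_1')$.

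For the final chain of equalities, I use the observation (already noted in the proof of Proposition \ref{prop:CCOT_less_than_CGW}) that $\mathbf{L}_\delta(\eta_0,\eta_1,\eta_0,\eta_1) = \mathbf{H}_\delta(\eta_0,\eta_1)$. Hence $\CCOT_\delta(\mathcal{H}_X,\mathcal{H}_Y) = \mathbf{H}_\delta(\gamma_0,\gamma_1) = \mathbf{H}_\delta(\gamma_0',\gamma_1') \geq \CGW_\delta(\mathcal{N}_X,\mathcal{N}_Y)$, while the reverse inequality is exactly Proposition \ref{prop:CCOT_less_than_CGW}, producing $\CCOT_\delta(\mathcal{H}_X,\mathcal{H}_Y) = \CGW_\delta(\mathcal{N}_X,\mathcal{N}_Y)$ and simultaneously certifying that $(\gamma_0,\gamma_1)$ and $(\gamma_0',\gamma_1')$ are both CGW-optimal semicouplings. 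The main conceptual step is the Cauchy--Schwarz argument; the main technical point to verify carefully is that positive-definiteness of $k$ in the pointwise sense extends to the bilinear form $B$ on Radon measures, which follows in the usual way by approximating with finite atomic measures and using boundedness of $k$ on the compact product $(X\times Y)^2$.
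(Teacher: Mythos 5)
Your proposal is correct and follows essentially the same route as the paper: both reduce, via Lemma \ref{lem:CCOOTequiv}, to viewing the cross term as a positive semi-definite bilinear form $B(\nu,\nu')=\int k\,\textnormal{d}\nu\,\textnormal{d}\nu'$ on measures of the form $\sqrt{\tfrac{\textnormal{d}\eta_0}{\textnormal{d}\eta}\tfrac{\textnormal{d}\eta_1}{\textnormal{d}\eta}}\,\eta$, exploit the two optimality inequalities $\mathbf{L}_\delta(\gamma_0,\gamma_1,\gamma_0',\gamma_1')\leq\mathbf{L}_\delta(\gamma_0,\gamma_1,\gamma_0,\gamma_1)$ and $\mathbf{L}_\delta(\gamma_0,\gamma_1,\gamma_0',\gamma_1')\leq\mathbf{L}_\delta(\gamma_0',\gamma_1',\gamma_0',\gamma_1')$, and finish with $\mathbf{L}_\delta(\eta_0,\eta_1,\eta_0,\eta_1)=\mathbf{H}_\delta(\eta_0,\eta_1)$ together with Proposition \ref{prop:CCOT_less_than_CGW}. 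The only cosmetic difference is that you apply Cauchy--Schwarz to $a,b,c$ directly (which needs only semi-definiteness of $B$), whereas the paper sums the two inequalities to get $B(\nu-\nu',\nu-\nu')\leq 0$ and then uses definiteness to conclude that the two density products coincide.
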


We note that this result is closely related to \cite[Theorem 3]{NEURIPS2021}, which is stated in terms of a bi-convex relaxation of CGW distance, rather than co-optimal transport. One approach to proving the theorem would be to show that the CCOT distance agrees with the bi-convex relaxation of CGW presented in \cite{NEURIPS2021} when the measure hypernetworks $\mathcal{H}_X$ and $\mathcal{H}_Y$ are induced by metric measure spaces (this would be analogous to Proposition \ref{prop:CGW_semi_coupling}), and to then apply \cite[Theorem 3]{NEURIPS2021}, checking that the arguments do not depend on the assumption that the functions $\omega_X$ and $\omega_Y$ are metrics. We instead give a more direct proof below.

\begin{proof}
Assume that $(\gamma_0,\gamma_1),(\gamma_0',\gamma_1') \in\Gamma(\mu_X,\mu_Y)$ such that $\CCOT_\delta(\mathcal{H}_X,\mathcal{H}_Y) = \mathbf{L}_\delta(\gamma_0,\gamma_1,\gamma_0',\gamma_1')$. By optimality, we have
\begin{equation*}    \mathbf{L}_\delta(\gamma_0,\gamma_1,\gamma_0',\gamma_1')\leq \mathbf{L}_\delta(\gamma_0,\gamma_1,\gamma_0,\gamma_1) \quad \text{and} \quad \mathbf{L}_\delta(\gamma_0,\gamma_1,\gamma_0',\gamma_1')\leq \mathbf{L}_\delta(\gamma_0',\gamma_1',\gamma_0',\gamma_1'),
\end{equation*}
so that
\begin{equation}\label{eqn:negative_definite1}   \mathbf{L}_\delta(\gamma_0',\gamma_1',\gamma_0',\gamma_1') + \mathbf{L}_\delta(\gamma_0,\gamma_1,\gamma_0,\gamma_1) - 2\mathbf{L}_\delta(\gamma_0,\gamma_1,\gamma_0',\gamma_1')\geq 0.
\end{equation}
Choosing the reference measure $\gamma$ so that $\gamma_0,\gamma_1,\gamma_0',\gamma_1' \ll\gamma$, it follows from the expression for $\mathbf{L}_\delta$ given in Lemma \ref{lem:CCOOTequiv} and some simplification that \eqref{eqn:negative_definite1} implies
\begin{align*}
    &-8\delta^2 \left(\int_{X\times Y\times X \times Y} \Omega\left(\frac{|\omega_X - \omega_Y|}{2\delta}\right)\sqrt{\frac{\textnormal{d}\gamma_0'}{\textnormal{d}\gamma}\frac{\textnormal{d}\gamma_0'}{\textnormal{d}\gamma}\frac{\textnormal{d}\gamma_1'}{\textnormal{d}\gamma}\frac{\textnormal{d}\gamma_1'}{\textnormal{d}\gamma'}}  \textnormal{d}\gamma \textnormal{d}\gamma\right. \\
    &\qquad + \int_{X\times Y\times X \times Y} \Omega\left(\frac{|\omega_X - \omega_Y|}{2\delta}\right)\sqrt{\frac{\textnormal{d}\gamma_0}{\textnormal{d}\gamma}\frac{\textnormal{d}\gamma_0}{\textnormal{d}\gamma}\frac{\textnormal{d}\gamma_1}{\textnormal{d}\gamma}\frac{\textnormal{d}\gamma_1}{\textnormal{d}\gamma}}  \textnormal{d}\gamma \textnormal{d}\gamma \\
    & \qquad \qquad \left.\qquad\qquad-2\int_{X\times Y\times X \times Y} \Omega\left(\frac{|\omega_X - \omega_Y|}{2\delta}\right)\sqrt{\frac{\textnormal{d}\gamma_0}{\textnormal{d}\gamma}\frac{\textnormal{d}\gamma_0'}{\textnormal{d}\gamma}\frac{\textnormal{d}\gamma_1}{\textnormal{d}\gamma}\frac{\textnormal{d}\gamma_1'}{\textnormal{d}\gamma}}  \textnormal{d}\gamma \textnormal{d}\gamma\right)\geq 0.
\end{align*}
Therefore,
\begin{align*}
    &\int_{X\times Y\times X \times Y} \Omega\left(\frac{|\omega_X - \omega_Y|}{2\delta}\right)\sqrt{\frac{\textnormal{d}\gamma_0'}{\textnormal{d}\gamma}\frac{\textnormal{d}\gamma_0'}{\textnormal{d}\gamma}\frac{\textnormal{d}\gamma_1'}{\textnormal{d}\gamma}\frac{\textnormal{d}\gamma_1'}{\textnormal{d}\gamma'}}  \textnormal{d}\gamma \textnormal{d}\gamma \\
    & \qquad+ \int_{X\times Y\times X \times Y} \Omega\left(\frac{|\omega_X - \omega_Y|}{2\delta}\right)\sqrt{\frac{\textnormal{d}\gamma_0}{\textnormal{d}\gamma}\frac{\textnormal{d}\gamma_0}{\textnormal{d}\gamma}\frac{\textnormal{d}\gamma_1}{\textnormal{d}\gamma}\frac{\textnormal{d}\gamma_1}{\textnormal{d}\gamma}}  \textnormal{d}\gamma \textnormal{d}\gamma \\
    &\qquad\qquad-2\int_{X\times Y\times X \times Y} \Omega\left(\frac{|\omega_X - \omega_Y|}{2\delta}\right)\sqrt{\frac{\textnormal{d}\gamma_0}{\textnormal{d}\gamma}\frac{\textnormal{d}\gamma_0'}{\textnormal{d}\gamma}\frac{\textnormal{d}\gamma_1}{\textnormal{d}\gamma}\frac{\textnormal{d}\gamma_1'}{\textnormal{d}\gamma}}  \textnormal{d}\gamma \textnormal{d}\gamma\leq 0.
\end{align*}
Substituting $k=\Omega\left(\frac{|\omega_X - \omega_Y|}{2\delta}\right)$ and factoring, we obtain

\begin{equation}\label{eqn:negative_definite2}
\begin{aligned}
    \int_{X\times Y} \int_{X\times Y} k \left(\sqrt{\frac{\textnormal{d}\gamma_0}{\textnormal{d}\gamma}\frac{\textnormal{d}\gamma_1}{\textnormal{d}\gamma}} - \sqrt{\frac{\textnormal{d}\gamma_0'}{\textnormal{d}\gamma}\frac{\textnormal{d}\gamma_1'}{\textnormal{d}\gamma}}\right) \textnormal{d}\gamma\left(\sqrt{\frac{\textnormal{d}\gamma_0}{\textnormal{d}\gamma}\frac{\textnormal{d}\gamma_1}{\textnormal{d}\gamma}} - \sqrt{\frac{\textnormal{d}\gamma_0'}{\textnormal{d}\gamma}\frac{\textnormal{d}\gamma_1'}{\textnormal{d}\gamma}}\right) \textnormal{d}\gamma
    \leq 0.
\end{aligned}
\end{equation}
Since $k$ is positive-definite, this implies
\begin{equation}\label{eqn:negative_definite3}
\begin{aligned}
    \int_{X\times Y}  \int_{X\times Y}  k \left(\sqrt{\frac{\textnormal{d}\gamma_0}{\textnormal{d}\gamma}\frac{\textnormal{d}\gamma_1}{\textnormal{d}\gamma}} - \sqrt{\frac{\textnormal{d}\gamma_0'}{\textnormal{d}\gamma}\frac{\textnormal{d}\gamma_1'}{\textnormal{d}\gamma}}\right) \textnormal{d}\gamma
    \left(\sqrt{\frac{\textnormal{d}\gamma_0}{\textnormal{d}\gamma}\frac{\textnormal{d}\gamma_1}{\textnormal{d}\gamma}} - \sqrt{\frac{\textnormal{d}\gamma_0'}{\textnormal{d}\gamma}\frac{\textnormal{d}\gamma_1'}{\textnormal{d}\gamma}}\right) \textnormal{d}\gamma
    = 0,\\
\end{aligned}
\end{equation}
and, in turn, that
\begin{equation*}
    \sqrt{\frac{\textnormal{d}\gamma_0}{\textnormal{d}\gamma}\frac{\textnormal{d}\gamma_1}{\textnormal{d}\gamma}} =\sqrt{\frac{\textnormal{d}\gamma_0'}{\textnormal{d}\gamma}\frac{\textnormal{d}\gamma_1'}{\textnormal{d}\gamma}}.
\end{equation*}
Therefore,
\begin{equation*}    \mathbf{L}_\delta(\gamma_0,\gamma_1,\gamma_0',\gamma_1') = \mathbf{L}_\delta(\gamma_0,\gamma_1,\gamma_0,\gamma_1) = \mathbf{L}_\delta(\gamma_0',\gamma_1',\gamma_0',\gamma_1').
\end{equation*}

The work above, plus the fact that the infimum in the definition of $\CCOT_\delta$ is always realized (Lemma \ref{lem:CCOT_realized}), proves the second statement of the theorem. That is, suppose $\CCOT(\mathcal{H}_X,\mathcal{H}_Y)$ is realized by $(\gamma_0,\gamma_1)$ and $(\gamma_0',\gamma_1')$. Then 
\begin{align*}
\CCOT_\delta(\mathcal{H}_X,\mathcal{H}_Y) &= \mathbf{L}_\delta(\gamma_0,\gamma_1,\gamma_0',\gamma_1') \\
&= \mathbf{L}_\delta(\gamma_0,\gamma_1,\gamma_0,\gamma_1) \\
&= \mathbf{H}_\delta(\gamma_0,\gamma_1) \geq \CGW(\mathcal{N}_X,\mathcal{N}_Y).
\end{align*}
The reverse inequality always holds, by Proposition \ref{prop:CCOT_less_than_CGW}, so this completes the proof.
\end{proof}

For examples of kernels satisfying the hypotheses of the theorem, see~\cite{NEURIPS2021}; in particular, the discussion surrounding Theorem 3 and Proposition 11 therein.

\subsection{Algorithm for CCOT}\label{section:algorithm}
In this section, we examine the conic Co-Optimal Transport (CCOT) problem between discrete hypernetworks and develop a computational method for finding the optimal pair of semi-couplings. Our method closely follows that of \cite{bauer2022SRNF} (in the context of the WFR distance), employing a cyclic block coordinate ascent algorithm to compute optimal semi-couplings, with a closed form for the optimal solution on each block while keeping the other blocks fixed. To consider this algorithm we first require the notion of a discrete semi-coupling from \cite[Definition 2.4]{bauer2022SRNF}.

\begin{mydef}[Discrete Semi-Couplings \cite{bauer2022SRNF}]\label{def:discrete_semicoupling}
Let $(X,\mu_X), (Y,\mu_Y)$ be two measure spaces with finitely supported measures expressed as $\mu_X=\sum_{i=1}^m a_i \delta_{x_i}$ and $\mu_Y=\sum_{j=1}^n b_j\delta_{y_j}$. A \define{discrete semi-coupling} of $\mu_X$ and $\mu_Y$ is a pair of $m\times n$ matrices $(A,B)$ satisfying the properties:
\begin{itemize}[leftmargin=*]
\item for all $i,j$, $A_{ij}\geq 0$ and $B_{ij}\geq0$;\label{property_a}
\item for $1\leq i\leq m$, $a_i\geq\sum_{j=1}^{n}A_{ij}$;\label{property_b}
\item for $1\leq j \leq n$, $b_j\geq \sum_{i=1}^{m}B_{ij}$.\label{property_c}
\end{itemize}
We denote the set of all discrete semi-couplings of $\mu_X$ and $\mu_Y$ by $\Gamma_\mathrm{d}(\mu_X,\mu_Y)$.
\end{mydef}

\begin{remark}\label{rem:proper_subset}
    The space of matrix pairs $\Gamma_\mathrm{d}(\mu_X,\mu_Y)$ naturally corresponds  to a subset of the space of measure pairs $\Gamma(\mu_X,\mu_Y)$, justifying our notation. In general, this subset is proper: even though $\mu_X$ and $\mu_Y$ are finitely supported, measure couplings $(\gamma_0,\gamma_1)$ can be non-finitely supported. This definition for discrete semi-couplings differs slightly from that of \cite{bauer2022SRNF} by allowing for inequality constraints and dropping the 0th rows and columns of the discrete semi-couplings. In the definition of \cite{bauer2022SRNF}, the 0-th rows and columns serve as dummy supports for the creation and destruction of mass. By contrast, the inequality relaxation in Definition \ref{def:discrete_semicoupling} allows for the creation and destruction of mass with a slightly simpler formulation.
\end{remark}

In light of Remark \ref{rem:proper_subset}, it is not apparent that the CCOT distance can be written as an optimization over the space of discrete semi-couplings, even between two measure hypernetworks whose measures are finitely supported. Our first result of this section will be to show that in fact, it is sufficient to optimize over these spaces.
\begin{proposition}\label{prop:discrete_CCOT}
For measure hypernetworks $\mathcal{H}_X$ and $\mathcal{H}_Y$ with finitely supported measures
\begin{equation*}
\mu_X=\sum_{i=1}^n a_i \delta_{x_i},\,
\mu_X'=\sum_{j=1}^{n'}a_j' \delta_{x_j'},\,
\mu_Y=\sum_{k=1}^{m} b_k \delta_{y_k},\,
\mu_Y'=\sum_{l=1}^{m'} b_l' \delta_{y_l'},
\end{equation*}
the CCOT distance between them is given by    
\begin{align*}
    &\CCOT_\delta(\mathcal{H}_X,\mathcal{H}_Y)^2 \\
    & \qquad = \min_{A,B,A',B'}  4\delta^2\Bigg( \mu_X(X)\mu_X'(X') +\mu_Y(Y)\mu_Y'(Y') \\
    & \hspace{2in} -2\sum_{i=1}^{n}\sum_{j=1}^{n'}\sum_{k=1}^{m}\sum_{l=1}^{m'}   \Omega_{ijkl}^\delta\sqrt{A_{ik}B_{ik}A'_{jl}B'_{jl}}\Bigg),
\end{align*}
where the minimum is over $(A,B)\in \Gamma_\mathrm{d}(\mu_X,\mu_Y), (A',B')\in \Gamma_\mathrm{d}(\mu_X',\mu_Y')$ and  the distortion cost tensor is defined by $\Omega^\delta_{ijkl} = \Omega\left(\frac{|\omega(x_i,x_j')-\omega'(y_k,y_l')|}{2\delta}\right)$. 
\end{proposition}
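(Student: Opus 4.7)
The plan is to start from Lemma~\ref{lem:CCOOTequiv}, which expresses the CCOT loss $\mathbf{L}_\delta(\gamma_0,\gamma_1,\gamma_0',\gamma_1')$ as a constant term depending only on the fixed total marginal masses, minus $8\delta^2$ times the $\Omega$-weighted integral $I := \int \Omega\!\left(\tfrac{|\omega_X - \omega_Y|}{2\delta}\right)\sqrt{\tfrac{\mathrm{d}\gamma_0}{\mathrm{d}\gamma}\tfrac{\mathrm{d}\gamma_1}{\mathrm{d}\gamma}\tfrac{\mathrm{d}\gamma_0'}{\mathrm{d}\gamma'}\tfrac{\mathrm{d}\gamma_1'}{\mathrm{d}\gamma'}}\,\mathrm{d}\gamma\,\mathrm{d}\gamma'$. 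Minimizing $\mathbf{L}_\delta$ thus reduces to maximizing $I$ over semi-couplings, and the constant produces the $4\delta^2(\cdot)$ prefactor of the proposition.

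The key step is a support analysis. The constraint $(\operatorname{Pr}_0)_\#\gamma_0 = \mu_X$ forces $\gamma_0$ to be concentrated on $\{x_i\}_{i=1}^n \times Y$, while $(\operatorname{Pr}_1)_\#\gamma_1 = \mu_Y$ forces $\gamma_1$ to be concentrated on $X \times \{y_j\}_{j=1}^m$. The factor $\sqrt{\tfrac{\mathrm{d}\gamma_0}{\mathrm{d}\gamma}\tfrac{\mathrm{d}\gamma_1}{\mathrm{d}\gamma}}$ therefore vanishes off the finite atomic set $\{(x_i, y_j)\}$. Exploiting the $1$-homogeneity of the integrand (Definition~\ref{def:ccot}), I would choose $\gamma = \gamma_0 + \gamma_1$ as reference measure and compute directly that the contribution at each atom $(x_i, y_j)$ is $\sqrt{A_{ij}B_{ij}}$, where $A_{ij} := \gamma_0(\{(x_i, y_j)\})$ and $B_{ij} := \gamma_1(\{(x_i, y_j)\})$. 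Parallel analysis on $X' \times Y'$ yields atomic weights $A'_{kl}, B'_{kl}$ on $\{(x_k', y_l')\}$, so combining the two reductions, $I$ collapses to the finite sum $\sum_{i,j,k,l}\Omega^\delta_{ijkl}\sqrt{A_{ij}B_{ij}A'_{kl}B'_{kl}}$.

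It then remains to match the two optimization problems. The marginal conditions imply $\sum_j A_{ij} \leq \gamma_0(\{x_i\} \times Y) = a_i$ and $\sum_i B_{ij} \leq b_j$, so the extracted matrices lie in $\Gamma_\mathrm{d}(\mu_X, \mu_Y)$, and analogously $(A',B') \in \Gamma_\mathrm{d}(\mu_X', \mu_Y')$; this yields $\min \leq \CCOT_\delta^2$. For the reverse direction, I would lift any discrete $(A,B,A',B')$ to a semi-coupling by setting $\gamma_0 = \sum_{ij} A_{ij}\delta_{(x_i, y_j)}$ plus a slack measure placing the residual mass $a_i - \sum_j A_{ij}$ at points of $Y$ outside $\{y_j\}$; such slack contributes nothing to $I$ since $\gamma_1$ has no mass at those points. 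The main obstacle is the degenerate case where $Y$ consists only of $\{y_j\}$ (so no slack points exist); here I would appeal to the monotonicity of $I$ in each $A_{ij}$ to argue that the discrete minimum is attained with tight row- and column-sum constraints, in which case the direct assignment $\gamma_0 = \sum_{ij} A_{ij}\delta_{(x_i,y_j)}$ is already a valid semi-coupling with matching cost.
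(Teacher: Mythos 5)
Your proposal is correct and follows essentially the same route as the paper: reduce via Lemma~\ref{lem:CCOOTequiv} to maximizing the $\Omega$-weighted integral, observe that the semi-coupling marginal constraints force the product of densities to vanish off the finite grid of atoms, and identify the atomic masses with the entries of discrete semi-couplings. Your write-up is in fact more complete than the paper's, which only records the forward extraction (measure semi-coupling $\to$ discrete matrices) and leaves implicit both the choice of reference measure in that computation and the reverse lifting direction, including the slack-mass subtlety that you handle correctly via the residual/monotonicity argument.
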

\begin{proof}
Let $(\gamma_0,\gamma_1)\in\Gamma(\mu_X,\mu_Y)$ and $(\gamma_0',\gamma_1')\in\Gamma(\mu_X',\mu_Y')$ and let $\gamma_0,\gamma_1 \ll \gamma$ and $\gamma'_0,\gamma'_1 \ll \gamma'$.
Construct $(A,B)$ and $(A',B')$ as follows: 
\[
A_{ik} = \frac{\mathrm{d}\gamma_0}{\mathrm{d}\gamma}(x_i,y_k), \, B_{ik} = \frac{\mathrm{d}\gamma_1}{\mathrm{d}\gamma}(x_i,y_k), \, A'_{jl} = \frac{\mathrm{d}\gamma_0'}{\mathrm{d}\gamma'}(x_j',y_l'), \, B'_{jl} = \frac{\mathrm{d}\gamma_1'}{\mathrm{d}\gamma'}(x_j',y_l').
\]
Then it is easy to verify that the quantity $\mathbf{L}_\delta(\gamma_0,\gamma_1,\gamma_0',\gamma_1')/(4\delta^2)$ is given by 
\[
\mu_X(X)\mu_X'(X') + \mu_Y(Y)\mu_{Y}'(Y')-2\sum_{i=1}^{n}\sum_{j=1}^{n'}\sum_{k=1}^{m}\sum_{l=1}^{m'}   \Omega_{ijkl}^\delta\sqrt{A_{ik}B_{ik}A'_{jl}B'_{jl}}.
\]
The result follows by Lemma \ref{lem:CCOOTequiv}.
\end{proof}

The following computationally convenient fact is immediate.

\begin{corollary} In the case of finitely supported measures, computing $\CCOT_\delta$ is equivalent to finding the maximizer of the function
\begin{equation}\label{eqn:CCOT_functional}
\begin{split}
    F: \Gamma_\mathrm{d}(\mu_X,\mu_Y) \times \Gamma_\mathrm{d}(\mu_X',\mu_Y')&\to \R\\
    (A,B,A',B')&\mapsto \sum_{i=1}^{n}\sum_{j=1}^{n'}\sum_{k=1}^{m}\sum_{l=1}^{m'}   \Omega^\delta_{ijkl}\sqrt{A_{ik}B_{ik}A'_{jl}B'_{jl}}.
\end{split}
\end{equation}
\end{corollary}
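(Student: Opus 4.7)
The plan is to observe that this corollary is an immediate consequence of Proposition \ref{prop:discrete_CCOT}, and thus the ``proof'' is essentially a rearrangement of that result. The expression inside the minimum in Proposition \ref{prop:discrete_CCOT} has three terms: two constant terms, namely $\mu_X(X)\mu_Y(Y)$ and $\mu_X'(X')\mu_Y'(Y')$, which depend only on the fixed total masses of the measures defining the input hypernetworks $\mathcal{H}_X$ and $\mathcal{H}_Y$, and a variable term $-2\sum_{i,j,k,l}\Omega^\delta_{ijkl}\sqrt{A_{ik}B_{ik}A'_{jl}B'_{jl}}$ which is the only term depending on the choice of discrete semi-couplings.

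Because $4\delta^2 > 0$ and the two mass-product terms are constants with respect to the optimization variables $(A,B,A',B')$, the minimization problem
\[
\min_{(A,B) \in \Gamma_\mathrm{d}(\mu_X,\mu_Y),\, (A',B') \in \Gamma_\mathrm{d}(\mu_X',\mu_Y')} \left(\text{const} - 2 F(A,B,A',B')\right)
\]
is equivalent to the maximization problem $\max F(A,B,A',B')$ over the same feasible set. This immediately yields the claim. I would also point out that this equivalence is what makes the subsequent algorithmic development tractable: since $F$ is the relevant objective, one can design a block coordinate ascent scheme on the four variables $A$, $B$, $A'$, $B'$ rather than working directly with the CCOT loss.

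There is no substantial obstacle here; the only thing to be careful about is the (presumably typographical) domain specification of $F$ in the corollary statement, which should be interpreted as $\Gamma_\mathrm{d}(\mu_X,\mu_Y) \times \Gamma_\mathrm{d}(\mu_X',\mu_Y')$ consistent with Proposition \ref{prop:discrete_CCOT} and Definition \ref{def:discrete_semicoupling}. With that reading, the proof is a single sentence pointing to Proposition \ref{prop:discrete_CCOT} and noting the sign flip from ``min'' to ``max''.
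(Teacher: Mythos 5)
Your proposal is correct and matches the paper's treatment: the paper offers no separate proof, declaring the corollary ``immediate'' from Proposition \ref{prop:discrete_CCOT}, and your one-line argument (constants plus a positive factor of $-2$ turn the minimization into maximization of $F$) is exactly the intended reasoning. Your observation that the domain in the corollary statement should read $\Gamma_\mathrm{d}(\mu_X,\mu_Y) \times \Gamma_\mathrm{d}(\mu_X',\mu_Y')$, consistent with Proposition \ref{prop:discrete_CCOT}, is a legitimate catch of a typographical slip in the paper.
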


We propose to approximate the CCOT distance via block coordinate ascent, that is, by iteratively maximizing functions of the form 
\begin{equation}\label{eqn:BCA_functions}
F(-,B,A',B'), \, F(A,-,A',B'), \, F(A,B,-,B'), \, F(A,B,A',-)
\end{equation}
(i.e., holding all but one argument constant), where $F$ is as in \eqref{eqn:CCOT_functional}. The rest of this subsection is devoted to technical results which show that this is well posed and numerically feasible. These results all assume finitely-supported hypernetworks with the same notation as Proposition \ref{prop:discrete_CCOT}.

\begin{lemma}\label{def:tuple_constraints}
Consider the subset $\overline{\Gamma}_\mathrm{d}$ of $\Gamma_\mathrm{d}(\mu_X,\mu_X') \times \Gamma_\mathrm{d}(\mu_Y,\mu_Y')$ consisting of pairs of discrete semi-couplings $(A,B)$ and $(A',B')$ satisfying the conditions: 
\begin{enumerate}
  \item $A_{ik}=B_{ik}=0$ whenever
\[
\begin{aligned}
\sum_{j=1}^{n'}\sum_{l=1}^{m'} \Omega^\delta_{ijkl} = 0
\end{aligned}
\]

\item $A'_{jl}=B'_{jl}=0$ whenever
\[
\begin{aligned}
\sum_{i=1}^{n}\sum_{k=1}^{m} \Omega^\delta_{ijkl} = 0
\end{aligned}
\]
    \item for $i=1,\ldots, n$, \[\sum_{k=1}^{m}A_{ik}=\begin{cases}
        0 & \text{ if } \sum_{i=1}^{n}\sum_{j=1}^{n'}\sum_{l=1}^{m'}   \Omega^\delta_{ijkl} = 0\\
        a_i & \text{ otherwise.}\end{cases}
        \]
    \item for $k=1,\ldots, m$, \[\sum_{i=1}^{n}B_{ik}=\begin{cases}
        0 & \text{ if } \sum_{j=1}^{n'}\sum_{k=1}^{m}\sum_{l=1}^{m'}   \Omega^\delta_{ijkl} = 0\\
        b_k & \text{ otherwise.}\end{cases}
        \]
    \item for $j=1,\ldots, n'$, \[\sum_{l=1}^{m'}A'_{jl}=\begin{cases}
        0 & \text{ if } \sum_{i=1}^{n}\sum_{j=1}^{n'}\sum_{k=1}^{m}   \Omega^\delta_{ijkl} = 0\\
        a'_j & \text{ otherwise.}\end{cases}
        \]
    \item for $l=1,\ldots, m'$, \[\sum_{j=1}^{m}B'_{jl}=\begin{cases}
        0 & \text{ if } \sum_{i=1}^{n}\sum_{k=1}^{m}\sum_{l=1}^{m'}   \Omega^\delta_{ijkl} = 0\\
        b'_l & \text{ otherwise.}\end{cases}
        \]
\end{enumerate}
The function $F$ \eqref{eqn:CCOT_functional} obtains its maximum on $\overline{\Gamma}_\mathrm{d}$.
\end{lemma}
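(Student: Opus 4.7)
The plan is to combine a compactness argument with an explicit modification procedure. First, note that $F$ is continuous (a finite sum of square roots of products of nonnegative polynomial expressions in the matrix entries), and that the ambient set $\Gamma_\mathrm{d}(\mu_X,\mu_X') \times \Gamma_\mathrm{d}(\mu_Y,\mu_Y')$ is compact (a bounded, closed subset of Euclidean space cut out by finitely many linear inequalities). Hence the maximum $F^*$ of $F$ on this larger set exists, and it suffices to produce, from any maximizer $(A,B,A',B')$, a new maximizer lying in $\overline{\Gamma}_\mathrm{d}$.

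The construction proceeds in two passes. In Phase 1, to enforce conditions (1)–(2), I observe that whenever $(i,k)$ satisfies $\sum_{j,l}\Omega^{\delta}_{ijkl}=0$, every summand of $F$ that contains $A_{ik}$ or $B_{ik}$ has coefficient zero; hence replacing $A_{ik}$ and $B_{ik}$ by $0$ leaves $F$ unchanged and preserves the semi-coupling inequalities (since entries only decrease). The analogous move is applied to $(j,l)$-entries of $A'$ and $B'$.

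In Phase 2, to enforce conditions (3)–(6), the key point is that $F$ is monotonically nondecreasing in each entry $A_{ik}, B_{ik}, A'_{jl}, B'_{jl}$, since each summand is a nonnegative coefficient times a nondecreasing function of that entry. Therefore, for each $i$ whose relevant $\Omega^{\delta}$-sum is positive, I can select an index $k^{*}$ with $\sum_{j,l}\Omega^{\delta}_{ik^{*}jl}>0$ (such $k^{*}$ exists precisely because the total sum is positive) and add the slack $a_i-\sum_k A_{ik}$ to $A_{ik^{*}}$; this saturates the marginal without decreasing $F$. For those $i$ whose relevant sum vanishes, Phase 1 has already forced $A_{ik}=0$ for all $k$, so $\sum_k A_{ik}=0$ automatically matches condition (3). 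The same recipe applied row-by-row, column-by-column handles conditions (4)–(6) for $B$, $A'$, and $B'$.

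The main step that needs care is verifying that Phases 1 and 2 are compatible: the index $k^{*}$ used in the redistribution must not have been zeroed out in Phase 1. This is immediate, since the defining condition $\sum_{j,l}\Omega^{\delta}_{ik^{*}jl}>0$ is exactly the negation of the criterion that would force $A_{ik^{*}}=0$ under condition (1). With this compatibility in hand, the construction yields a point of $\overline{\Gamma}_\mathrm{d}$ whose $F$-value equals $F^{*}$, completing the proof. Beyond this bookkeeping, no analytical difficulty remains; the lemma is essentially a structural observation that $F$'s monotonicity and the vanishing of irrelevant coefficients allow a free choice of maximizer within the more constrained set.
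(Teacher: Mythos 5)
Your proposal is correct and follows essentially the same two-phase strategy as the paper's proof: first zero out all entries whose corresponding $\Omega^\delta$-coefficients vanish (which cannot change $F$), then use the monotonicity of $F$ in each entry to increase entries until the marginal constraints are saturated. The only difference is cosmetic --- the paper rescales each row or column proportionally by the factor $a_i/\sum_r \tilde{A}_{ir}$, whereas you add the entire slack to a single entry $A_{ik^*}$ with a nonvanishing coefficient --- and your explicit compactness argument for the existence of the maximum, together with the compatibility check between the two phases, is a welcome addition that the paper leaves implicit.
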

\begin{proof}
Let $(A,B)\in \Gamma_\mathrm{d}(\mu_X,\mu_X')$ and $(A',B') \in \Gamma_\mathrm{d}(\mu_Y,\mu_Y')$ be arbitrary discrete semi-couplings. We perform a two step construction of $\big((\hat A,\hat B),(\hat A',\hat B')\big)\in \overline{\Gamma}_\mathrm{d}$ such that $F(\hat A,\hat B,\hat A',\hat B')>F(A,B,A',B')$. For the first step, we construct $\big((\Tilde{A},\Tilde{B}),(\Tilde{A}',\Tilde{B}')\big)\in \Gamma_\mathrm{d}(\mu_X,\mu_Y) \times \Gamma_\mathrm{d}(\mu_X',\mu_Y')$ as follows:
\begin{itemize}[leftmargin=*]
\item For $i = 1, \dots, n$ and $k = 1, \dots, m$, define:
\begin{align*}
\Tilde{A}_{ik} &= 
\begin{cases}
0 & \text{if } \sum_{j=1}^{n'} \sum_{l=1}^{m'} \Omega^\delta_{ijkl} = 0 \\
A_{ik} & \text{otherwise}
\end{cases}
\qquad \text{and} \nonumber \\
\Tilde{B}_{ik} &=
\begin{cases}
0 & \text{if } \sum_{j=1}^{n'} \sum_{l=1}^{m'} \Omega^\delta_{ijkl} = 0 \\
B_{ik} & \text{otherwise}
\end{cases}.
\end{align*}
\item For $j = 1, \dots, n'$ and $l = 1, \dots, m'$, define:
\begin{align*}
\Tilde{A}'_{jl} &= 
\begin{cases}
0 & \text{if } \sum_{i=1}^{n} \sum_{k=1}^{m} \Omega^\delta_{ijkl} = 0 \\
A'_{jl} & \text{otherwise}
\end{cases}
\qquad \text{and} \nonumber \\
\Tilde{B}'_{jl} &= 
\begin{cases}
0 & \text{if } \sum_{i=1}^{n} \sum_{k=1}^{m} \Omega^\delta_{ijkl} = 0 \\
B'_{jl} & \text{otherwise}
\end{cases}.
\end{align*}
\end{itemize}
For the second step, we construct $(\hat A,\hat B), (\hat A',\hat B')$ by:
\begin{itemize}[leftmargin=*]
    \item For $i = 1, \dots, n$ and $k = 1, \dots, m$, define:
    \begin{align*}
    \hat{A}_{ik} &=
    \begin{cases}
    \dfrac{a_i \Tilde{A}_{ik}}{\sum_{r=1}^{m} \Tilde{A}_{ir}} & \text{if } \sum_{r=1}^{m} \Tilde{A}_{ir} > 0 \\
    \Tilde{A}_{ik} & \text{otherwise}
    \end{cases}
    \qquad \text{and} \nonumber \\
    \hat{B}_{ik} &=
    \begin{cases}
    \dfrac{b_k \Tilde{B}_{ik}}{\sum_{r=1}^{n} \Tilde{B}_{rk}} & \text{if } \sum_{r=1}^{n} \Tilde{B}_{rk} > 0 \\
    \Tilde{B}_{ik} & \text{otherwise}
    \end{cases}.
    \end{align*}

    \item For $j = 1, \dots, n'$ and $l = 1, \dots, m'$, define:
    \begin{align*}
    \hat{A}'_{jl} &=
    \begin{cases}
    \dfrac{a'_j \Tilde{A}'_{jl}}{\sum_{r=1}^{m'} \Tilde{A}'_{jr}} & \text{if } \sum_{r=1}^{m'} \Tilde{A}'_{jr} > 0 \\
    \Tilde{A}'_{jl} & \text{otherwise}
    \end{cases}
    \qquad \text{and} \nonumber \\
    \hat{B}'_{jl} &=
    \begin{cases}
    \dfrac{b'_l \Tilde{B}'_{jl}}{\sum_{r=1}^{n'} \Tilde{B}'_{rl}} & \text{if } \sum_{r=1}^{n'} \Tilde{B}'_{rl} > 0 \\
    \Tilde{B}'_{jl} & \text{otherwise}
    \end{cases}.
    \end{align*}
\end{itemize}

By construction, $(\hat A,\hat B,\hat A',\hat B')\in \overline{\Gamma}_\mathrm{d}$ and $F(\hat A,\hat B,\hat A',\hat B')\geq F(A,B,A',B')$. Thus $F$ obtains its maximum on the subset $\overline{\Gamma}_\mathrm{d}$.
\end{proof}
Our computational approach for determining the CCOT distance employs a straightforward cyclic block coordinate ascent method. However, the key to our method is that when three of the blocks are fixed, the optimal solution for the remaining block can be expressed in a closed-form. In the following lemma, we demonstrate how to compute this optimal solution for each block while keeping the others fixed.

\begin{lemma}\label{lem:maximizers}
     The block coordinate ascent functions \eqref{eqn:BCA_functions} have unique maximizers on the relative  interior of $\overline{\Gamma}_\mathrm{d}$, which are  described explicitly as follows. For $\big((A,B),(A',B')\big)$ in the interior $\overline{\Gamma}_\mathrm{d}$, the $n\times m$ matrix, $P$, is defined by \begin{equation*}
        P_{ik} = \sum_{j=1}^{n'}\sum_{l=1}^{m'}\Omega^\delta_{ijkl}\sqrt{A'_{jl}B'_{jl}}
    \end{equation*}
    and the $n'\times m'$ matrix, $Q$, is defined by
    \begin{equation*}
        Q_{jl} = \sum_{i=1}^{n}\sum_{k=1}^{m}\Omega^\delta_{ijkl}\sqrt{A_{ik}B_{ik}}.
    \end{equation*}
 The functions \eqref{eqn:BCA_functions} satisfy:
\begin{align*}
\textbf{1.} \qquad &  \arg\max F(-, B, A', B')= E \\
& \text{where } E_{ik} =
\begin{cases}
a_i \dfrac{B_{ik} \left(P_{ik}\right)^2}{\sum_{r=1}^{m} B_{ir} \left(P_{ir}\right)^2} & \text{if } \sum_{r=1}^{m} B_{ir} \left(P_{ir}\right)^2 > 0 \\
0 & \text{otherwise}
\end{cases} \\[1em]
\textbf{2.} \qquad &  \arg\max F(A, -, A', B')= E \\
& \text{where } E_{ik} =
\begin{cases}
b_k \dfrac{A_{ik} \left(P_{ik}\right)^2}{\sum_{r=1}^{n} A_{rk} \left(P_{rk}\right)^2} & \text{if } \sum_{r=1}^{n} A_{rk} \left(P_{rk}\right)^2 > 0 \\
0 & \text{otherwise}
\end{cases} \\[1em]
\textbf{3.} \qquad &  \arg\max F(A, B, -, B') =E \\
& \text{where } E_{jl} =
\begin{cases}
a_j' \dfrac{B'_{jl} \left(Q_{jl}\right)^2}{\sum_{r=1}^{m'} B'_{jr} \left(Q_{jr}\right)^2} & \text{if } \sum_{r=1}^{m'} B'_{jr} \left(Q_{jr}\right)^2 > 0 \\
0 & \text{otherwise}
\end{cases} \\[1em]
\textbf{4.} \qquad &  \arg\max F(A, B, A', -) =E \\
& \text{where } E_{jl} =
\begin{cases}
b_l' \dfrac{A'_{jl} \left(Q_{jl}\right)^2}{\sum_{r=1}^{n'} A'_{rl} \left(Q_{rl}\right)^2} & \text{if } \sum_{r=1}^{n'} A'_{rl} \left(Q_{rl}\right)^2 > 0 \\
0 & \text{otherwise}
\end{cases}
\end{align*}
Here, maximization of $F(-,B,C,D)$ is over the set 
\begin{equation}\label{eqn:BCA_domain}
\{E \in \R^{n \times m} \mid \big((E,B),(A',B')\big) \in \overline{\Gamma}_\mathrm{d}\},
\end{equation}
with the domains of the other functions defined similarly.
\end{lemma}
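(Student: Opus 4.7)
The plan is to show that each of the four block coordinate ascent steps decouples, after plugging in the definitions of $P$ or $Q$, into a family of independent row- or column-wise scalar maximizations, each solved by a direct application of the Cauchy--Schwarz inequality. I describe case 1 in detail; cases 2--4 are entirely analogous.

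With $B$, $A'$, $B'$ fixed, substituting the definition of $P_{ik}$ into \eqref{eqn:CCOT_functional} and regrouping factors yields
\begin{equation*}
F(A, B, A', B') = \sum_{i=1}^n \sum_{k=1}^m \sqrt{B_{ik}} \cdot P_{ik} \cdot \sqrt{A_{ik}}.
\end{equation*}
The constraints on $A$ coming from $\overline{\Gamma}_{\mathrm{d}}$ (nonnegativity, forced zeros from condition 1 of Lemma \ref{def:tuple_constraints}, and the row-sum condition $\sum_k A_{ik} = a_i$ from condition 3) all decouple across the row index $i$. So the problem reduces to $n$ independent scalar optimizations: for each $i$, maximize $\sum_k c_{ik} \sqrt{A_{ik}}$, where $c_{ik} := \sqrt{B_{ik}} \, P_{ik}$, subject to $A_{ik} \geq 0$ and $\sum_k A_{ik} = a_i$.

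Applying Cauchy--Schwarz to the vectors $(\sqrt{A_{ik}})_k$ and $(c_{ik})_k$ gives
\begin{equation*}
\sum_k c_{ik} \sqrt{A_{ik}} \;\leq\; \Bigl(\sum_k A_{ik}\Bigr)^{1/2} \Bigl(\sum_r c_{ir}^2\Bigr)^{1/2} \;=\; \sqrt{a_i} \cdot \Bigl(\sum_r c_{ir}^2\Bigr)^{1/2},
\end{equation*}
with equality precisely when $\sqrt{A_{ik}}$ is proportional to $c_{ik}$. Imposing the marginal constraint determines the proportionality constant uniquely and produces
\begin{equation*}
A_{ik} = \frac{a_i \, c_{ik}^2}{\sum_r c_{ir}^2} = \frac{a_i \, B_{ik} \, P_{ik}^2}{\sum_r B_{ir} \, P_{ir}^2},
\end{equation*}
exactly matching case 1. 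Uniqueness on the interior follows from the strict concavity of $t \mapsto \sqrt{t}$ on $(0,\infty)$; in the degenerate situation $\sum_r B_{ir} P_{ir}^2 = 0$, row $i$ contributes nothing to $F$ no matter how its mass is allocated, and the stated fallback $E_{ik} = 0$ is consistent with this.

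Cases 2--4 follow the same template. For case 2 (varying $B$), the relevant constraints decouple across \emph{columns} via $\sum_i B_{ik} = b_k$ (condition 4 of Lemma \ref{def:tuple_constraints}), and Cauchy--Schwarz is applied column-wise with $c_{ik} = \sqrt{A_{ik}} \, P_{ik}$. Cases 3 and 4 are identical in structure, with the primed marginals $a_j'$ and $b_l'$ in place of $a_i$ and $b_k$ and with $Q_{jl}$ replacing $P_{ik}$. The only substantive step throughout is Cauchy--Schwarz; the rest is a bookkeeping exercise of matching each block being varied to its correct marginal condition in Lemma \ref{def:tuple_constraints}. I do not anticipate any significant obstacle beyond this.
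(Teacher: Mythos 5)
Your argument follows essentially the same route as the paper's: both reduce case 1 to a row-by-row application of Cauchy--Schwarz with the identical factorization $\sqrt{A_{ik}B_{ik}}\,P_{ik} = \sqrt{A_{ik}}\cdot\bigl(\sqrt{B_{ik}}\,P_{ik}\bigr)$, obtain the same bound $\sqrt{a_i\sum_r B_{ir}P_{ir}^2}$, and pin down the optimizer via the equality condition together with the row-sum constraint. The existence part and the uniqueness part for rows with $\sum_r B_{ir}P_{ir}^2 > 0$ are complete and match the paper.

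There is, however, a gap in your treatment of the degenerate rows where $\sum_r B_{ir}P_{ir}^2 = 0$. You write that such a row ``contributes nothing to $F$ no matter how its mass is allocated'' and that $E_{ik}=0$ is merely ``consistent'' with this; but read literally, that would mean any allocation of the mass $a_i$ across row $i$ achieves the same value of $F$, contradicting the uniqueness asserted in the lemma. The resolution is that on such rows the feasible set itself collapses: the paper argues that, for a point in the \emph{interior} of $\overline{\Gamma}_\mathrm{d}$, $\sum_r B_{ir}P_{ir}^2 = 0$ forces $B_{ik}=0$ for all $k$, hence $\sum_{j,l}\Omega^\delta_{ijkl}=0$ for all $k$, and then conditions 1 and 3 of Lemma \ref{def:tuple_constraints} force $A_{ik}=0$ for all $k$ (the row-sum constraint is $0$ rather than $a_i$ in this regime). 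So the unique feasible choice on a degenerate row is the zero row, which is what $E$ prescribes. Your simplification of condition 3 to ``$\sum_k A_{ik}=a_i$'' silently discards exactly the branch of the constraint needed here. Relatedly, the appeal to strict concavity of $t\mapsto\sqrt{t}$ is not quite the right uniqueness mechanism (the objective is not strictly concave in directions where $c_{ik}=0$); the Cauchy--Schwarz equality condition, which you do state, is the argument that actually does the work. Everything else, including the symmetric handling of cases 2--4, is fine.
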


\begin{proof} We give the proof for \textbf{1.}, with the remaining claims following by symmetric arguments. With $E$ as defined in \textbf{1.}, we wish to show that $F(A,B,A',B') \leq F(E,B,A',B')$ for all $A$ in the set \eqref{eqn:BCA_domain}. We will prove the following

\smallskip
\noindent {\bf Claim:} For all $i =1,\ldots,n$, 
\[
\sum_{k=1}^m \sqrt{A_{ik} B_{ik}} P_{ik} \leq \sum_{k=1}^m \sqrt{E_{ik}B_{ik}} P_{ik}.
\]

\smallskip
\noindent  Assuming the claim, the proof that $E$ is a maximizer follows easily, as 
\[
F(A,B,A',B') = \sum_{i=1}^n \sum_{k=1}^m \sqrt{A_{ik} B_{ik}} P_{ik} \leq \sum_{i=1}^n  \sum_{k=1}^m \sqrt{E_{ik}B_{ik}}  P_{ik}  = F(E,B,A',B').
\]

To prove the claim, first suppose that $\sum_{k=1}^m B_{ik}P_{ik}^2 = 0$. Then, for all $k=1,\ldots,m$, either $B_{ik} = 0$ or $P_{ik}=0$, in which case
\[
\sum_{k=1}^m \sqrt{A_{ik} B_{ik}} P_{ik} = 0 =  \sum_{k=1}^m \sqrt{E_{ik}B_{ik}} P_{ik}.
\]

It remains to prove the claim for the case that $\sum_{k=1}^m B_{ik}P_{ik}^2 > 0$. By Cauchy-Schwarz, and the definition of $\overline{\Gamma}_\mathrm{d}$, we
have,  
\begin{align*}
    \bsum{k=1}{m}\sqrt{A_{ik}B_{ik}}P_{ik}&\leq \sqrt{\left(\bsum{k=1}{m}A_{ik}\right)\left(\bsum{k=1}{m}B_{ik}P_{ik}^2\right) }  = \sqrt{a_i\bsum{k=1}{m}B_{ik}P_{ik}^2 } \\
    &= \frac{\sqrt{a_i}\bsum{k=1}{m}B_{ik}P_{ik}^2}{\sqrt{\left(\bsum{k=1}{m}B_{ik}P_{ik}^2\right) }}  =\bsum{k=1}{m}\sqrt{\frac{a_iB_{ik}P_{ik}^2}{\bsum{k=1}{m} B_{ik}P_{ik}^2}B_{ik}}P_{ik}=\bsum{k=1}{m}\sqrt{E_{ik}B_{ik}}P_{ik}.
\end{align*}
This completes the proof of the claim, hence that $E$ is a maximizer.

It remains to prove uniqueness. The established inequality $F(A,B,A',B') \leq F(E,B,A',B')$ is strict unless, for each $1\leq i \leq n$ such that $\sum_{r=1}^{m} B_{ir}P_{ir}^2>0$, 
the vector $\{A_{ik}\}_k$ is a scalar multiple of $\{B_{ik}P_{ik}^2\}_k$. Since we know $\sum_{k=1}^{m}A_{ik}=a_i$, equality holds if and only if 
  \begin{equation*}
      \{A_{ik}\}=\left\{\frac{a_i B_{ik}P_{ik}^2}{\sum_{r=1}^{m}B_{ir}P_{ir}^2}\right\}=\{E_{ik}\}.
  \end{equation*}  
This proves that a maximizer must agree with $E$ on indices $i$ with $\sum_{k=1}^m B_{ik}P_{ik}^2 > 0$. We now consider those indices $i$ where $\sum_{k=1}^m B_{ik}P_{ik}^2 = 0$. By Definition of $\overline{\Gamma}_\mathrm{d}$, since $\big((A,B),(A',B')\big)$ is in the interior of $\overline{\Gamma}_\mathrm{d}$, $B_{ik}=0$ if and only if $P_{ik}=0$. Thus, for all $k$, $B_{ik}=0$. Once again using that $\big((A,B),(A',B')\big)$ is interior, this implies that, for all $k$, $\sum_{j=1}^{n'}\sum_{l=1}^{m'}   \Omega^\delta_{ijkl} = 0$. Furthermore, this implies that $A_{ik}=0$, for all $k$; in particular, $\sum_{k=1}^mA_{ik}=0$. However, since $(A,B,A',B')$ is in the interior of $\overline{\Gamma}_\mathrm{d}$, this is true if and only if $\sum_{i=1}^{n}\sum_{j=1}^{n'}\sum_{l=1}^{m'}   \Omega^\delta_{ijkl} = 0$. Thus, by definition of $\overline{\Gamma}_\mathrm{d}$ we must have $\sum_{k=1}^mE_{ik}=0$, so that $E_{ik} = 0$ for all $k$, hence $A_{ik} = E_{ik}$, and we have completed the proof of uniqueness.
\end{proof}

\begin{corollary}\label{corollary:stable_point}
    When initialized on the interior of $\overline{\Gamma}_\mathrm{d}$, the block coordinate ascent algorithm described above converges to a stable point of $F$. 
\end{corollary}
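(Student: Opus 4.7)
The plan is to combine the standard monotone convergence argument for block coordinate ascent with the explicit closed-form update formulas from Lemma \ref{lem:maximizers}. First, I would observe that $\overline{\Gamma}_\mathrm{d}$ is a bounded, closed subset of Euclidean space (it is cut out by linear equalities and inequalities, together with non-negativity), hence compact. Since $F$ is continuous on this compact set, it attains a finite maximum; in particular, the sequence of values $F(A^{(k)}, B^{(k)}, A'^{(k)}, B'^{(k)})$ produced by cyclically applying the four updates from Lemma \ref{lem:maximizers} is non-decreasing and bounded above, so it converges to some $F^* \in \mathbb{R}$.

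Next, I would use compactness to extract a subsequence of iterates converging to some limit $(A^*, B^*, A'^*, B'^*) \in \overline{\Gamma}_\mathrm{d}$. The key step is then to argue that this limit is a stable point, i.e., that each of the four block updates in Lemma \ref{lem:maximizers} fixes $(A^*, B^*, A'^*, B'^*)$. For this I would rely on two ingredients: (i) the explicit rational formulas from Lemma \ref{lem:maximizers} for each block update depend continuously on the other three blocks, as long as we remain in the interior of $\overline{\Gamma}_\mathrm{d}$, where the denominators are strictly positive (by the conditions defining $\overline{\Gamma}_\mathrm{d}$, the only way a denominator $\sum_r B_{ir} P_{ir}^2$ can vanish is if the corresponding row of $A$ is forced to be zero, and the piecewise formula handles exactly that case); and (ii) the monotonicity of $F$ along the subsequence together with $F^{(k)} \to F^*$ forces consecutive iterates within a cycle to have the same limiting value of $F$. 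Combined with the uniqueness of the maximizer in each block, this shows that applying any of the four updates to $(A^*, B^*, A'^*, B'^*)$ returns the same point, making it a stable point of the algorithm.

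The main obstacle in a fully rigorous write-up is the interior-vs-boundary issue: Lemma \ref{lem:maximizers} establishes uniqueness of the argmax only on the interior of $\overline{\Gamma}_\mathrm{d}$, and the piecewise nature of the update formulas means continuity of the block-argmax maps can break down at boundary points where zero-denominator cases activate. I would handle this by observing that the construction in the proof of Lemma \ref{def:tuple_constraints} already shows that the effective constraints ensure the zero-denominator regimes correspond precisely to coordinates where the update is forced (to zero) rather than free; hence the updates are still continuous across these transitions when restricted to $\overline{\Gamma}_\mathrm{d}$ (or, at worst, upper hemicontinuous in a way sufficient to preserve the fixed-point property in the limit). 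Under the hypothesis that the algorithm is initialized in the interior, the monotonicity of $F$ together with the explicit forms of the updates keeps iterates in a region where this analysis applies, yielding that the limit is a stable point of $F$.
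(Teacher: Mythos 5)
Your argument is correct in outline, but it takes a different route from the paper: the paper's proof is a one-line appeal to a black-box result, namely \cite[Theorem 4.1]{Tseng_2001} (cf.\ \cite{Luenberger_1984}), whose hypotheses are supplied by the per-block uniqueness established in Lemma \ref{lem:maximizers}. What you have written is essentially a self-contained re-derivation of that cited theorem in this special case: monotonicity and boundedness of the $F$-values, compactness of $\overline{\Gamma}_\mathrm{d}$ to extract a convergent subsequence, and then the standard "sandwich" step in which per-block uniqueness forces the limit point to be fixed by each of the four updates (this uniqueness is exactly what rules out Powell-type cycling, so you are right to lean on it). The trade-off is that by unpacking the black box you inherit its technical debt: the boundary issue you flag is real, since Lemma \ref{lem:maximizers} only guarantees uniqueness on the interior of $\overline{\Gamma}_\mathrm{d}$ and a cluster point of the iterates may lie on the boundary, where the piecewise update formulas switch branches and the argmax maps need not be continuous. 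Your proposed resolution---that the zero-denominator regimes correspond to coordinates whose values are forced by the constraints defining $\overline{\Gamma}_\mathrm{d}$, so the updates remain (upper hemi)continuous there---is plausible but is asserted rather than proved, and this is precisely the step that a fully rigorous version would have to nail down. The paper's citation-based proof buys brevity at the cost of leaving the same subtlety implicit inside Tseng's hypotheses; your version makes the subtlety visible, which is valuable, but does not yet fully discharge it.
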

\begin{proof}
When $F$ is restricted to $\overline{\Gamma}_\mathrm{d}$ it attains a unique maximum in each coordinate block. Therefore, by  \cite[Theorem 4.1]{Tseng_2001} (c.f. \cite{Luenberger_1984}) the cyclic block coordinate ascent algorithm will converge to a stable point of $F$.
\end{proof}
\begin{remark}\label{rem:equality}
    Recall from Theorem \ref{thm:CGW_CCOOT_equivalence} that $\CCOT(\mathcal{H}_X,\mathcal{H}_Y) = \CGW(\mathcal{N}_X,\mathcal{N}_Y)$ for discrete networks $\mathcal{N}_X,\mathcal{N}_Y$ such that $\Omega\left(\frac{|\omega_X-\omega_Y|}{2\delta}\right)$ defines a positive definite kernel. Therefore, we can also use our algorithm to approximate the $\CGW$ distance between discrete measure networks satisfying this property.
\end{remark}

\section{Numerical Experiments}\label{section:Numerical_Experiments}
 We now present the results of numerical experiments to support the theory established for the $\CGW$ and $\CCOT$ metrics.

\subsection{Single-Cell Multi-Omics Alignment}

\begin{figure}
\centering
\resizebox{1.\textwidth}{!}{\includegraphics{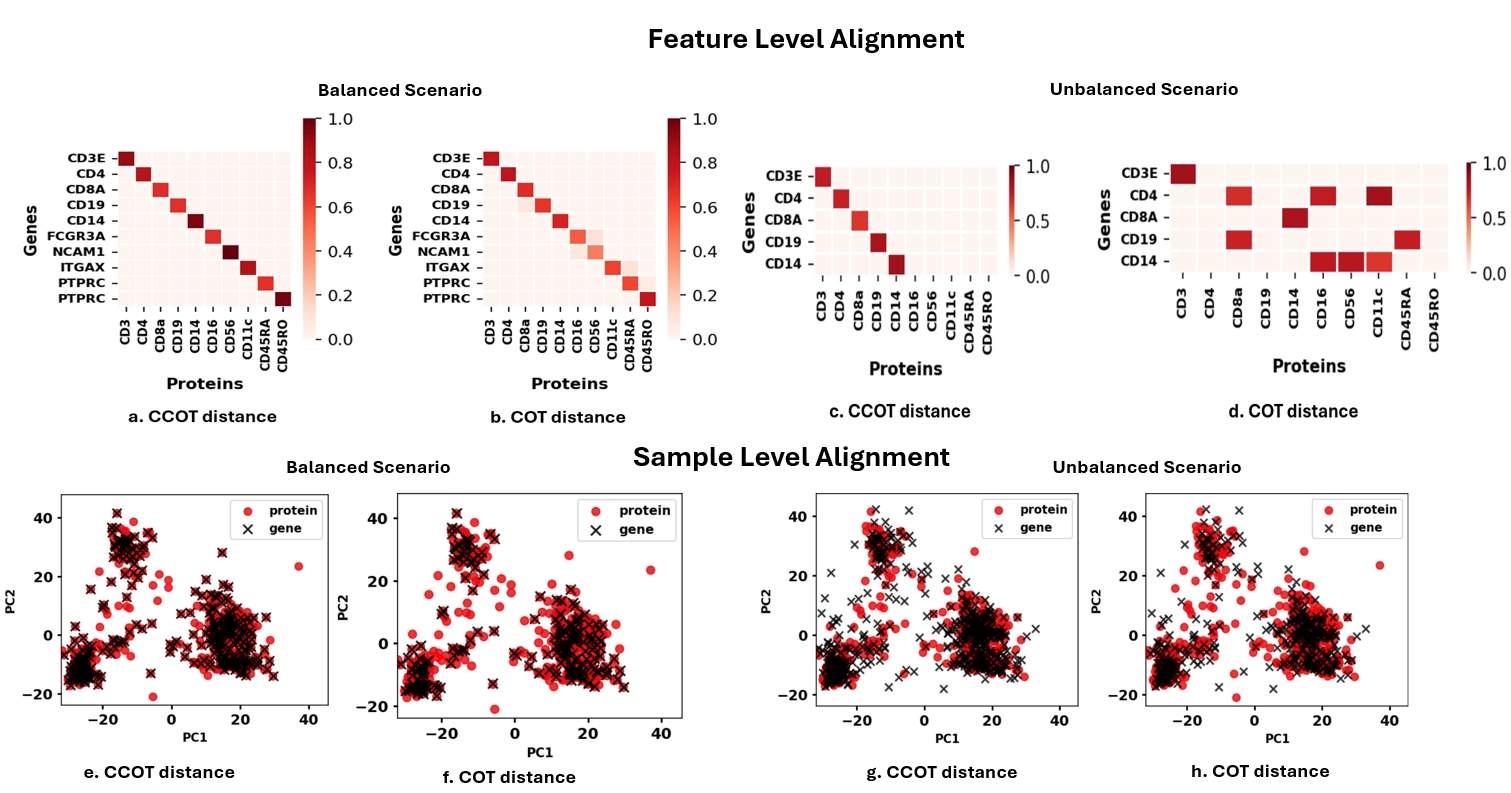}}
\caption{Alignment of a single-cell multi-omics dataset using CCOT (with kernel $\Omega(z)=\overline{\cos}(z)$) and COT distances. Top row (a–d): Feature-level alignment between genes and proteins. Each heatmap represents the transport plan, where rows correspond to genes and columns to proteins. High values indicate strong correspondence between a protein and its encoding gene. (a,b) Balanced setting, where the number of genes and proteins is equal, leading to a near one-to-one alignment structure for a fixed number of cells at 1000. (c,d) Unbalanced setting, where the number of genes and proteins differs while the number of cells is fixed at 1000. In this case, alignment is more challenging due to feature mismatch.
Comparing methods, CCOT (a,c) yields a sharper, near-diagonal structure, indicating more accurate gene–protein matching, whereas COT (b,d) produces a more diffuse transport plan, particularly in the unbalanced setting.
\\Bottom row (e–h): Cell-level alignment across modalities. Each point represents a cell (sample), with red dots corresponding to the protein modality and black crosses to the gene modality. The plots show the distance embeddings after applying standard PCA to understand cell alignment across different modalities (gene and protein). (e,f) Balanced setting, where both CCOT and COT distances achieve reasonable alignment, though CCOT shows tighter overlap between both gene and protein modalities (matching features).(g,h) Unbalanced setting, where the protein modality is downsampled by 25\%, introducing additional mismatch. In this case, CCOT (g) maintains better alignment between the two modalities, while COT (h) exhibits greater dispersion and reduced overlap. Overall, the figure illustrates that CCOT improves both feature-level correspondence (genes–proteins) and sample-level alignment (cells), particularly in challenging unbalanced scenarios.}
\label{fig:single_cell_analysis}
\end{figure}

\begin{figure}
    \centering
    \begin{minipage}{0.4\textwidth}
        \centering
        \includegraphics[width=\linewidth]{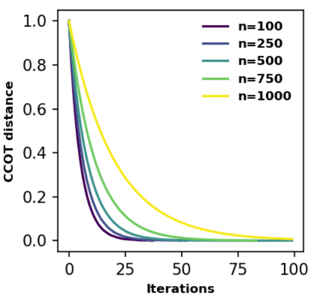}
        \subcaption{}
        \label{fig:convergence_unbalanced}
    \end{minipage}\hfill
    \begin{minipage}{0.4\textwidth}
        \centering
        \includegraphics[width=\linewidth]{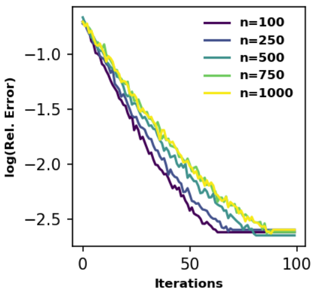}
        \subcaption{}
        \label{fig:logerror_unbalanced}
    \end{minipage}
    \caption{Convergence diagnostics for the unbalanced scenario with different features (5 genes and 10 proteins), across varying numbers of samples (cells). (a) Evolution of the CCOT distance over iterations, demonstrating the solver’s progressive stabilization. (b) Corresponding trends in the relative log error, where a decreasing value signals convergence toward a stable solution.}
    \label{fig:convergence_diagnostics}
\end{figure}

We demonstrate the use of the CCOT metric (as defined in \ref{def:ccot}) and COT metric (as defined in \eqref{def:coot_distance}) to align single-cell multi-omics datasets in scenarios where the cellular populations are only partially shared across modalities. This implies that some cell types are present in both modalities, while others appear in only one modality. As a result, the two datasets do not contain a perfect one-to-one correspondence between cells, making the alignment problem more challenging. Such incomplete overlap often arises due to technical limitations (e.g., assay throughput or capture efficiency) and biological constraints (e.g., cell-type exclusivity in sampling), which prevent comprehensive multi-omics profiling of individual cells. We utilize a benchmark CITE-seq dataset \cite{Stoeckius}, which is closely related to the dataset used in \cite{UCOOT}. While the experimental setups are similar, slight differences exist due to the random sampling of the biological entities such as cells, genes, and proteins. Nonetheless, this allows for an approximate comparison of our qualitative results using CCOT against the UCOT (as defined in \ref{def:UCOOT}) results in \cite{UCOOT}. 

This dataset contains profiles of gene expression and surface protein abundance simultaneously in 1,000 human peripheral blood mononuclear cells (PBMCs) each profiled for 17,014 genes and 10 surface proteins. This dataset is chosen because it contains known biological correspondences between features (i.e., genes and their encoded proteins, such as the CD4 gene and CD4 protein), allowing us to rigorously evaluate CCOT’s ability to jointly align both cells and features. Although gene expression and protein abundance are distinct biological measurements with imperfect correlation, these known correspondences serve as ground truth for assessing feature alignment. 

Here, we denote the gene expression modality by the measure hypernetwork \(\mathcal{H}_X = (X, \mu_X, X', \mu_X', \omega_X)\), where \(X\) corresponds to the set of cells profiled for gene expression (samples) and \(\mu_X\) is the empirical measure over these cells. The set of features \(X'\) represents the genes expressed (features), with \(\mu_X'\) their associated empirical measure. The kernel \(\omega_X\) encodes the relationships between cells and genes. Similarly, the protein modality is represented by the hypernetwork \(\mathcal{H}_Y = (Y, \mu_Y, Y', \mu_Y', \omega_Y)\), where \(Y\) denotes the set of cells profiled for surface proteins (samples) with corresponding measure \(\mu_Y\). The protein set \(Y'\) and measure \(\mu_Y'\) are defined analogously, with \(\omega_Y\) capturing similarity between proteins.

Pre-processing utilizes the Muon package to handle each modality appropriately: the RNA data undergoes quality filtering, normalization, log-transformation, and selection of highly variable genes, while protein data is normalized using centered log-ratio (CLR) transformation to adjust for technical variability. Dimensionality reduction is performed independently on each modality to obtain low-dimensional embeddings suitable for alignment.

To evaluate alignment quality, we use the Fraction of Samples Closer to the True Match (FOSCTTM) metric for cells \cite{Cao,Liu,Demetci}, where a lower score indicates better cell-level alignment, and the proportion of correctly matched gene-protein pairs for feature-level alignment. To test robustness, we simulate three experimental conditions: a balanced scenario with equal numbers of cells (fixed at 1,000) and matched features (10 genes paired with their corresponding 10 proteins); an unbalanced scenario (features) with equal numbers of cells (1,000), but unequal numbers of features (5 genes versus 10 proteins), simulating missing or unmatched features; and unbalanced scenario (cells) where the number of cells differs across modalities to mimic partial cellular overlap.

\paragraph{Balanced Scenario (Features)}
We select equal numbers of cells (1000 fixed) and features (10 genes and their corresponding 10 proteins). Both COT and CCOT accurately align features (Figures \ref{fig:single_cell_analysis} (a)-(b)), but CCOT achieves better cell alignment with a lower FOSCTTM score (0.0059 vs. 0.0234). CCOT’s improved performance potentially stems from its robustness to noise, resulting in a clearer diagonal (darker red) alignment pattern relative to COT distance (as demonstrated theoretically in Theorems \ref{thm:CGW_CCOOT_equivalence} and \ref{thm:robustness}).

\paragraph{Unbalanced Scenario (Features)}
We select an equal number of cells (1000) and align ten proteins against five genes, creating an unbalanced setting. COT fails to recover correct feature correspondences, dispersing alignment away from the diagonal (Figure  \ref{fig:single_cell_analysis} (d)). In contrast, CCOT’s relaxation of the mass conservation constraint allows down-weighting of unmatched protein cells, improving alignment quality (Figure \ref{fig:single_cell_analysis} (c)).

\paragraph{Balanced and Unbalanced Scenario (Samples)}
We also consider the case when there are equal and unequal numbers of cells across both modalities. For the balanced case, we align 1000 cells for both the modalities. For the unbalanced case, we downsample the protein modality by 25\% and perform alignment with the full set of cells in the gene modality. We calculate the FOSCTTM score for all cells with known true matches in the dataset and report the average values. CCOT maintains a low FOSCTTM score (0.0073 versus 0.0059 in the balanced scenario), indicating robust performance, whereas COT had a more pronounced decline (0.1104 compared to 0.0234 in the balanced scenario). To further illustrate the alignment quality, we visualize the aligned cells using Principal Component Analysis (PCA) plots (Figure \ref{fig:single_cell_analysis} (e)-(h)), which reveal how well cells from both modalities cluster together after alignment.

Finally, in Figure \ref{fig:convergence_diagnostics}, we aim to align the unequal numbers of features (5 genes and 10 proteins) by varying the number of cells fixed for both modalities, we notice a decreasing trend in convergence diagnostics, including both the CCOT distance and the relative log errors in iterations. This behavior indicates that the solver progressively stabilizes its solution as the number of iterations increases. The observed reduction in these metrics reflects increasingly consistent and diminishing updates to the solver variables, providing evidence that the algorithm is approaching a stable point (see Corollary \ref{corollary:stable_point}). Consequently, these results support the reliability and robustness of the solver and the metric in achieving a meaningful and stable alignment between the datasets.

\subsection{Equality of Semi-Couplings for CCOT for Measure Networks}\label{subsubsection:COOT_equivalence}

\begin{figure}
\centering
\includegraphics[width=12.5cm]{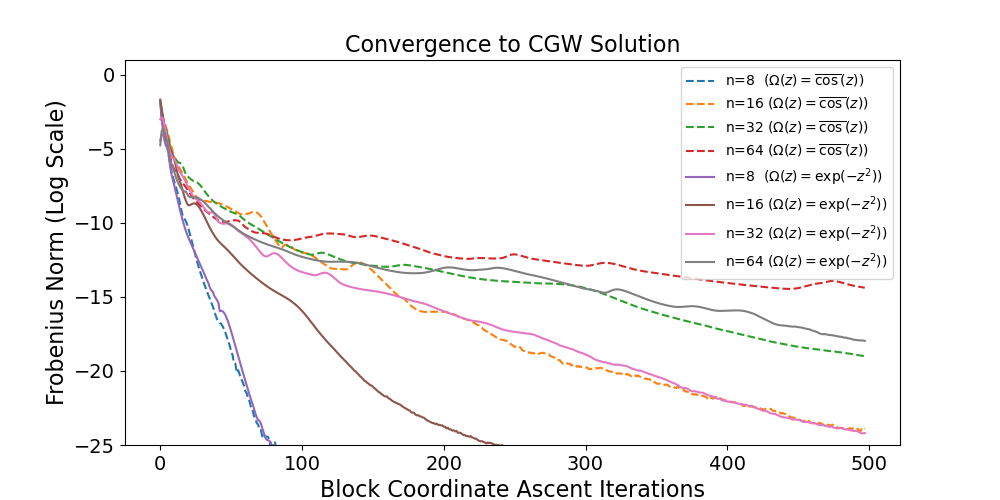}
\caption{Figure of the logarithm of average Frobenius distance between the semi-couplings at each iteration of the proposed algorithm for 20 random pairs of point clouds with $n$ points.}\label{fig:frob_plot}
\end{figure}
In Remark~\ref{rem:equality}, we discuss how our algorithm can be applied to compute the conic Gromov-Wasserstein distance between measure networks under certain conditions. Our method converges to a valid solution when the optimal pair of semi-couplings $(\gamma_0, \gamma_1), (\gamma_0', \gamma_1') \in \Gamma(\mu_X, \mu_Y)$ are equal---that is, $(\gamma_0, \gamma_1) = (\gamma_0', \gamma_1')$. In the discrete setting, for discrete semi-coupling pairs $(A, B), (A', B') \in \Gamma_\mathrm{d}(\mu_X, \mu_Y)$, this condition translates to the Frobenius norm identity:
\[\|A - A'\|_F^2 + \|B - B'\|_F^2 = 0.\]
Figure~\ref{fig:frob_plot} shows the evolution of the Frobenius norm of the difference between successive semi-couplings during iterations of the block coordinate ascent algorithm and the Frobenius norm between them converges toward zero. The results are averaged over networks constructed from point clouds in $\mathbb{R}^3$, using the squared Euclidean distance.
We present results using $\Omega(z) = \exp(-z^2)$ and $\Omega(z) = \overline{\cos}(z)$. Although no theoretical guarantee ensures equality of the optimal semi-couplings, the Frobenius norm between them still converges toward zero over iterations of the block coordinate ascent algorithm, offering empirical evidence of convergence of the algorithm to conic Gromov-Wasserstein distance.
\subsection{Digit Classification Comparison}
\begin{figure}
    \centering
    \includegraphics[width=1.\linewidth]{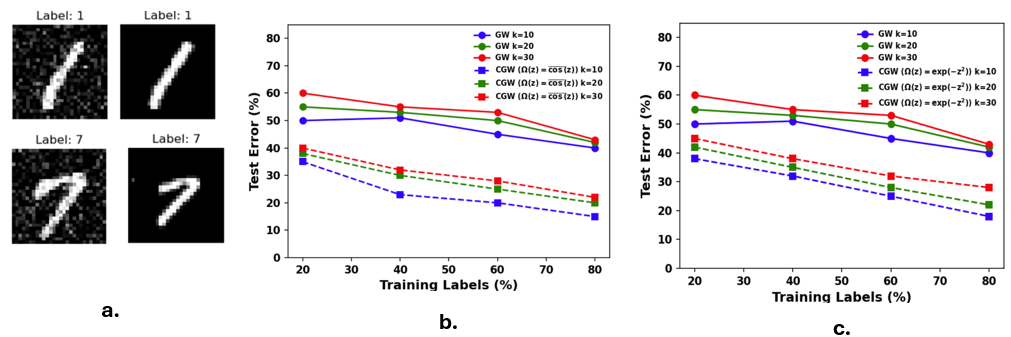}
    \caption{(a). 2D representation of the MNIST dataset. (b)-(c) Comparison of test error on the MNIST digit dataset by varying the number of training labels for a $k$-NN classifier with different values of $k$. Experiments were conducted using two kernel choices: $\Omega(z)=\overline{\cos}(z)$ and  $\Omega(z)=\exp(-z^2)$.}
    \label{fig:MNIST_digits}
\end{figure}

We address a binary classification task using the MNIST dataset \cite{LeCun_1998}, which contains 70,000 grayscale images of handwritten digits from $0$ to $9$, each sized $28 \times 28$ pixels. For our experimental setup, we focus exclusively on the digits 1 and 7, selecting $N=1000$ samples from each class to form a dataset of size $2N = 2000$. To enhance variability within the data, additive Gaussian noise is introduced to the pixel intensities of half the images (see Figure~\ref{fig:MNIST_digits}(a)).

Each image is modelled as a measure network $ \mathcal{N} = (V, \mu, \omega) $, where the set of nodes $V = \{ v_i \}_{i=1}^{784} \subset \mathbb{R}^2$ corresponds to fixed pixel coordinates, with $ v_i $ denoting the spatial location of pixel $ i $. The discrete probability measure $\mu$ on $V$ is defined by normalizing pixel intensities:
\[
\mu = \sum_{i=1}^{784} p_i \, \delta_{v_i}, \quad \text{where} \quad p_i = \frac{I(v_i)}{\sum_{j=1}^{784} I(v_j)},
\]
and $ I(v_i) $ denotes the grayscale intensity at pixel $ v_i $.

The adjacency kernel $\omega = (\omega_{ij}) \in \{0,1\}^{784 \times 784}$ represents the 8-neighborhood connectivity structure between pixels denoted by,
\[
\omega_{ij} = \begin{cases}
1, & \text{if } v_j \text{ is one of the 8 nearest neighbors of } v_i \text{ based on Euclidean distance}, \\
0, & \text{otherwise}.
\end{cases}
\]
Here, $\mathcal{N}$ denotes a generic measure network representing any image in the dataset.

More precisely, measure networks $\mathcal{N}_X$ and $\mathcal{N}_Y^{(m)}$ for $m=1,\dots,2000$ are all constructed according to the general definition of $\mathcal{N}$ above. In particular, $\mathcal{N}_X = (X, \mu_X, \omega_X)$ corresponds to the first image which is digit 1 as a reference, and each $\mathcal{N}_Y^{(m)} = (Y, \mu^{(m)}_Y, \omega^{(m)}_Y)$ is constructed analogously from the $m$-th image in the dataset.

We compute the $\CGW(\mathcal{N}_X,\mathcal{N}_Y^{(m)})$ as defined in \eqref{eqn:def_cgw}, yielding the vector of distances:
\[
d_{\CGW} := \bigl(\CGW(\mathcal{N}_X,\mathcal{N}_Y^{(m)})\bigr)_{m=1}^{2000} \in \mathbb{R}^{2000}.
\]

For comparison, the standard $\GW_2(\mathcal{N}_X,\mathcal{N}_Y^{(m)})$ is computed similarly:
\[
d_{\GW_2} := \bigl(\GW_2(\mathcal{N}_X,\mathcal{N}_Y^{(m)}) \bigr)_{m=1}^{2000} \in \mathbb{R}^{2000}.
\]

Both $d_{\CGW}$ and $d_{\GW_2}$ serve as feature vector (inputs) for  subsequent classification using a $k$-nearest neighbors (k-NN) classifier, with $ k \in \{10, 20, 30\} $ and varying training label rates from 20\% to 80\%. For each training label rate, 100 independent trials are conducted by randomly subsampling the training data, and the average test error is reported. Results are summarized in Figure~\ref{fig:MNIST_digits}(b)-(c) for different choice of the kernel $\Omega(z)$. We note that changing the reference image to digit 7 and following the same process outlined to compute the feature vector yields similar classification accuracy, with a difference well within the standard error margin of approximately $\pm1.5\%$. This indicates that the classification performance is robust to the choice of reference measure network.

Three salient observations emerge from Figure~\ref{fig:MNIST_digits}(c). Firstly, the $\CGW$ metric is more robust to outliers than the $\GW_2$ metric (see Theorem \ref{thm:robustness}), since it takes into account the variation in pixel intensity and the spatial coordinates, whilst the $\GW_2$ only handles the latter. We observe that as the number of training labels increases, the test error decreases. Secondly, the feature vector corresponding to the $\CGW$ metric serves as a better latent space for applying state-of-the-art machine learning algorithms for better classification performance. Thirdly, from a graph perspective \cite{MCAO_2025,MCAO_2026}, this can be interpreted as the error decreasing with the reduction in the graph's length scale (choice of $k$). This suggests that with increased training data, the model more effectively captures local geometric relationships within the data (cf. \cite[Theorem 2.1, Theorem 2.4]{calder2023rates}).

\subsection{Comparison of Algorithms for CGW}
\begin{figure}
    \centering
    \includegraphics[width=\linewidth]{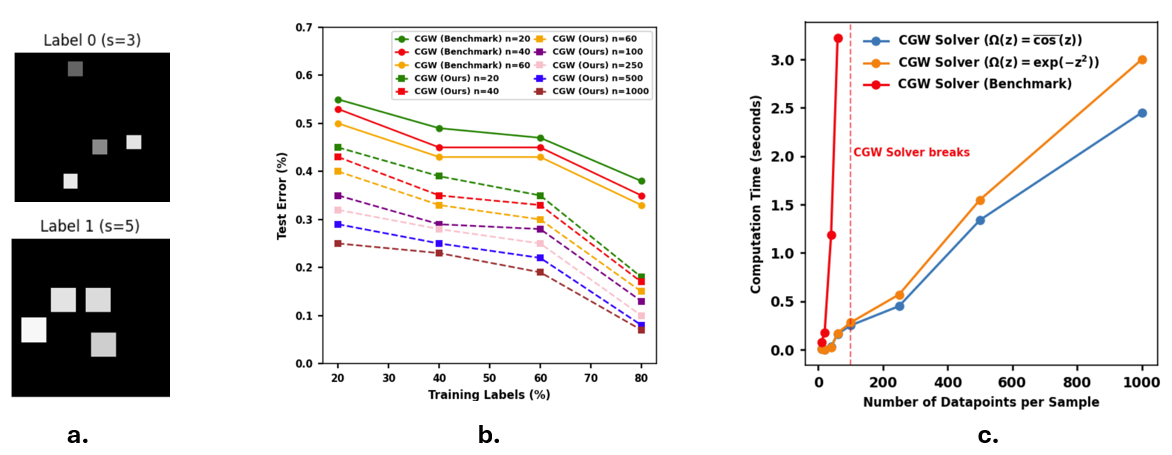}
    \caption{(a) 2D representations of images composed of square regions with side lengths $s = 3$ and $s = 5$, where each square is assigned an intensity value independently sampled from a uniform distribution over the interval [0,1]. (b) Error plots showing test error versus the number of training labels for different values of data points $n$ per image, using a $k$-NN classifier with $k=15$ (fixed). The choice of the kernel was $\Omega(z)=\overline{\cos}(z)$. (c) Computation time for the $\CGW$ distance using our proposed method with both kernels, namely, $\Omega(z)=\overline{\cos}(z)$ and $\Omega(z)=\exp(-z^2)$. We compare the performance relative to the benchmark algorithm from \cite{NEURIPS2021}, highlighting the scalability and computational efficiency of our approach.}
    \label{fig:squares_dataset}
\end{figure}

In this experiment, we consider the $g$-squares dataset \cite{Jan_2022}, where each sample is a grayscale image of fixed size \(32 \times 32\) pixels containing $g \in \mathbb{N}$ distinct non-overlapping squares. In this study, we fix $g=4$ and assume that all squares within each image share the same side length $s=3$ or $s=5$. The brightness value $b_h$ for each square $h$, with \(h = 1, \ldots, 4\), is independently drawn from a uniform distribution over the interval \([0,1]\), i.e., \(b_h \sim \mathrm{Uniform}(0,1)\).

For each pixel coordinate $v \in \{1,\ldots,32\}^2$, the pixel intensity $I(v)$ is defined as the aggregate brightness contributed by all squares containing the pixel $v$:
\[
I(v) = \sum_{h=1}^4 b_h \, \mathbf{1}_{\{v \in \mathrm{square}_h\}},
\]
where $\mathbf{1}_{\{v \in \mathrm{square}_h\}}$ denotes the indicator function that evaluates to 1, if the pixel $v$ lies within the $h$-th square; and 0 otherwise. 

For each generated image, a subset of pixels $V = \{v_i\}_{i=1}^n$ is sampled uniformly at random, where $n=\{20,40,60,100,250,500,1000\}$ denote the number of data points (pixels). Corresponding to the sampled pixels, we define a measure network $\mathcal{N} = (V, \mu, \omega)$, where the node set $V$ consists of the sampled pixel coordinates $v_i \in \mathbb{R}^2$. The discrete probability measure $\mu$ supported on $V$ is given by normalizing the pixel intensities at the sampled points:
\[
\mu = \sum_{i=1}^n p_i \, \delta_{v_i}, \quad \text{where} \quad p_i = \frac{I(v_i)}{\sum_{j=1}^n I(v_j)}.
\]
The adjacency matrix $\omega \in \{0,1\}^{n \times n}$ encodes spatial connectivity among the sampled pixels, with entries defined as
\[
\omega_{ij} = \begin{cases}
1, & \text{if } v_j \text{ is one of the 4 nearest neighbors of } v_i \text{ based on Euclidean distance}, \\
0, & \text{otherwise}.
\end{cases}
\]
Here, $\mathcal{N}$ denotes a generic measure network representing any image in the dataset.

A dataset comprising $M=100$ images is constructed as described above, with 4 non-overlapping squares. Half of the images contain 4 squares with side length $s=3$ and the other half contain 4 squares with side length $s=5$. The binary classification task is to discriminate images containing squares of side length 3 from those containing squares of side length 5.

For a fixed $n$, let $\mathcal{N}_X = (X, \mu_X, \omega_X)$ denote the measure network associated with the first image ($s=3$), and let $\mathcal{N}_Y^{(m)} = (Y, \mu_Y^{(m)}, \omega_Y^{(m)})$ for $m=1,\ldots,100$ denote the measure networks corresponding to the remaining images in the dataset. Both $\mathcal{N}_X$ and $\mathcal{N}_Y^{(m)}$ are constructed in accordance with the general definition of the measure network $\mathcal{N}$ described above. We compute the $\CGW(\mathcal{N}_X, \mathcal{N}_Y^{(m)})$, as defined in~\eqref{eqn:def_cgw}, resulting in the distance vector
\[
d_{\CGW} := \bigl(\CGW(\mathcal{N}_X, \mathcal{N}_Y^{(m)})\bigr)_{m=1}^{100} \in \mathbb{R}^{100}.
\]

These distances serve as feature vector (inputs) for a $k$-nearest neighbors classifier with fixed $k=15$ to perform binary classification between the two classes of images. Our results demonstrate that the proposed algorithm significantly improves computational efficiency, enabling the use of larger data points $n$ per image. In contrast, the benchmark $\CGW$ solver introduced in \cite{NEURIPS2021} encounters computational limitations and fails to process datasets exceeding $n=100$ points per image, as illustrated in Figure~\ref{fig:squares_dataset}(c). 

Furthermore, Figure~\ref{fig:squares_dataset}(b) shows that as the number of training labels increases for images with $n=1000$ data points, the relative classification error of the $k$-NN classifier decreases, as indicated by the brown dotted lines. We note that changing the reference measure network $\mathcal{N}_X$ (image with squares of side length $s=5$) and following the same process outlined to compute the feature vector yields similar classification accuracy, with a difference well within the standard error margin of approximately $\pm2.5\%$. This indicates that the classification performance is robust to the choice of reference measure network. Conversely, classification results for the benchmark solver are unavailable beyond $n=100$ due to its computational infeasibility, underscoring the scalability advantage of our proposed method in this classification task.

\section*{Acknowledgments}
MCAO acknowledges receipt of funding from the Health Data Research UK-The Alan Turing Institute Wellcome studentship (Grant Ref: 218529/Z/19/Z) and the Cambridge Trust scholarship from the Cambridge Commonwealth, European and International Trust (CCEIT). TN acknowledges support from NSF grants DMS--2107808, DMS--2324962 and CIF--2526630.

\bibliographystyle{plain}
\bibliography{references}

@incollection{leonard2014some,
  title={Some properties of path measures},
  author={L{\'e}onard, Christian},
  booktitle={S{\'e}minaire de Probabilit{\'e}s XLVI},
  pages={207--230},
  year={2014},
  publisher={Springer}
}

@article{needham2024stability,
  title={Stability of Hypergraph Invariants and Transformations},
  author={Needham, Tom and Semrad, Ethan},
  journal={arXiv preprint arXiv:2412.02020},
  year={2024}
}

@article{vayer2020co,
  title={Co-optimal transport},
  author={Vayer, Titouan  and Redko, Ievgen and Flamary, R{\'e}mi and Courty, Nicolas},
  journal={Advances in neural information processing systems},
  volume={33},
  pages={17559--17570},
  year={2020}
}

@article{friesecke2021barycenters,
  title={Barycenters for the {H}ellinger--{K}antorovich Distance Over $\mathbb{R}^d$},
  author={Friesecke, Gero and Matthes, Daniel and Schmitzer, Bernhard},
  journal={SIAM Journal on Mathematical Analysis},
  volume={53},
  number={1},
  pages={62--110},
  year={2021},
  publisher={SIAM}
}

@inproceedings{bailinear,
  title={Linear Partial Gromov-Wasserstein Embedding},
  author={Bai, Yikun and Kothapalli, Abihith and Du, Hengrong and Martin, Rocio Diaz and Kolouri, Soheil},
  booktitle={The Thirteenth International Conference on Learning Representations},
  year={2025}
}

@inproceedings{baipartial,
  title={Partial Gromov-Wasserstein Metric},
  author={Bai, Yikun and Martin, Rocio Diaz and Kothapalli, Abihith and Du, Hengrong and Liu, Xinran and Kolouri, Soheil},
  booktitle={The Thirteenth International Conference on Learning Representations},
  year={2025}
}

@phdthesis{guittet2002extended,
  title={Extended {K}antorovich norms: a tool for optimization},
  author={Guittet, Kevin},
  year={2002},
  school={INRIA}
}

@article{figalli2010optimal,
  title={The optimal partial transport problem},
  author={Figalli, Alessio},
  journal={Archive for rational mechanics and analysis},
  volume={195},
  number={2},
  pages={533--560},
  year={2010},
  publisher={Springer}
}

@article{caffarelli2010free,
  title={Free boundaries in optimal transport and {M}onge-{A}mpere obstacle problems},
  author={Caffarelli, Luis A and McCann, Robert J},
  journal={Annals of mathematics},
  pages={673--730},
  year={2010},
  publisher={JSTOR}
}

@article{chizat2018interpolating,
  title={An interpolating distance between optimal transport and Fisher--Rao metrics},
  author={Chizat, Lenaic and Peyr{\'e}, Gabriel and Schmitzer, Bernhard and Vialard, Fran{\c{c}}ois-Xavier},
  journal={Foundations of Computational Mathematics},
  volume={18},
  number={1},
  pages={1--44},
  year={2018},
  publisher={Springer}
}

@book{santambrogio2015optimal,
  title={Optimal transport for applied mathematicians},
  author={Santambrogio, Filippo},
  year={2015},
  publisher={Springer}
}

@article{peyre2019computational,
  title={Computational optimal transport: With applications to data science},
  author={Peyr{\'e}, Gabriel and Cuturi, Marco and others},
  journal={Foundations and Trends{\textregistered} in Machine Learning},
  volume={11},
  number={5-6},
  pages={355--607},
  year={2019},
  publisher={Now Publishers, Inc.}
}

@article{chhoa2024metric,
  title={Metric properties of partial and robust {G}romov-{W}asserstein distances},
  author={Chhoa, Jannatul and Ivanitskiy, Michael and Jiang, Fushuai and Li, Shiying and McBride, Daniel and Needham, Tom and O'Hare, Kaiying},
  journal={arXiv preprint arXiv:2411.02198},
  year={2024}
}

@article{F_Memoli_2011,
  title={{G}romov--{W}asserstein distances and the metric approach to object matching},
  author={M{\'e}moli, Facundo},
  journal={Foundations of computational mathematics},
  volume={11},
  pages={417--487},
  year={2011},
  publisher={Springer}
}

@book{Burago_2001,
  title={A course in metric geometry},
  author={Burago, Dmitri and Burago, Yuri and Ivanov, Sergei and others},
  volume={33},
  year={2001},
  publisher={American Mathematical Society Providence}
}

@article{Robust_GW,
  title={Outlier-robust {G}romov-{W}asserstein for graph data},
  author={Kong, Lemin and Li, Jiajin and Tang, Jianheng and So, Anthony Man-Cho},
  journal={Advances in Neural Information Processing Systems},
  volume={36},
  pages={24781--24803},
  year={2023}
}

@inproceedings{Fatras_2021,
  title={Unbalanced minibatch optimal transport; applications to domain adaptation},
  author={Fatras, Kilian and S{\'e}journ{\'e}, Thibault and Flamary, R{\'e}mi and Courty, Nicolas},
  booktitle={International Conference on Machine Learning},
  pages={3186--3197},
  year={2021},
  organization={PMLR}
}

@article{NEURIPS2021,
  title={The unbalanced {G}romov-{W}asserstein distance: Conic formulation and relaxation},
  author={S{\'e}journ{\'e}, Thibault and Vialard, Fran{\c{c}}ois-Xavier and Peyr{\'e}, Gabriel},
  journal={Advances in Neural Information Processing Systems},
  volume={34},
  pages={8766--8779},
  year={2021}
}

@article{chizat2018unbalanced,
  title={Unbalanced optimal transport: Dynamic and {K}antorovich formulations},
  author={Chizat, Lenaic and Peyr{\'e}, Gabriel and Schmitzer, Bernhard and Vialard, Fran{\c{c}}ois-Xavier},
  journal={Journal of Functional Analysis},
  volume={274},
  number={11},
  pages={3090--3123},
  year={2018},
  publisher={Elsevier}
}

@article{De_Giorgi,
  title={Su un tipo di convergenza variazionale},
  author={De Giorgi, Ennio and Franzoni, Tullio},
  journal={Atti della Accademia Nazionale dei Lincei. Classe di Scienze Fisiche, Matematiche e Naturali. Rendiconti},
  volume={58},
  number={6},
  pages={842--850},
  year={1975},
  publisher={Accademia Nazionale dei Lincei}
}

@article{Redko_2020,
  title={Co-optimal transport},
  author={Redko, Ievgen and Vayer, Titouan and Flamary, R{\'e}mi and Courty, Nicolas},
  journal={Advances in Neural Information Processing Systems},
  volume={33},
  number={17559-17570},
  pages={2},
  year={2020}
}

@article{bauer2022SRNF,
  title={The square root normal field distance and unbalanced optimal transport},
  author={Bauer, Martin and Hartman, Emmanuel and Klassen, Eric},
  journal={Applied Mathematics \& Optimization},
  volume={85},
  number={3},
  pages={35},
  year={2022},
  publisher={Springer}
}

@article{Chowdhury_2019,
  title={The {G}romov--{W}asserstein distance between networks and stable network invariants},
  author={Chowdhury, Samir and M{\'e}moli, Facundo},
  journal={Information and Inference: A Journal of the IMA},
  volume={8},
  number={4},
  pages={757--787},
  year={2019},
  publisher={Oxford University Press}
}

@book{Villani_2003,
  title={Topics in optimal transportation},
  author={Villani, C{\'e}dric},
  volume={58},
  year={2021},
  publisher={American Mathematical Soc.}
}

@article{Chowdhury_2024,
  title={Hypergraph co-optimal transport: Metric and categorical properties},
  author={Chowdhury, Samir and Needham, Tom and Semrad, Ethan and Wang, Bei and Zhou, Youjia},
  journal={Journal of Applied and Computational Topology},
  volume={8},
  number={5},
  pages={1171--1230},
  year={2024},
  publisher={Springer}
}

@inproceedings{Memoli_2007,
  booktitle = {Eurographics Symposium on Point-Based Graphics},
  editor    = {M. Botsch and R. Pajarola and B. Chen and M. Zwicker},
  title     = {{On the use of {G}romov-{H}ausdorff Distances for Shape Comparison}},
  author    = {M\'{e}moli, Facundo},
  year      = {2007},
  publisher = {The Eurographics Association},
  issn      = {1811-7813},
  isbn      = {978-3-905673-51-7},
  doi       = {10.2312/SPBG/SPBG07/081-090}
}

@article{Liero_2015,
  title={Optimal entropy-transport problems and a new {H}ellinger--{K}antorovich distance between positive measures},
  author={Liero, Matthias and Mielke, Alexander and Savar{\'e}, Giuseppe},
  journal={Inventiones mathematicae},
  volume={211},
  number={3},
  pages={969--1117},
  year={2018},
  publisher={Springer}
}

@article{MCAO_2025,
  title={Laplace learning in Wasserstein space},
  author={Antony Oliver, Mary Chriselda and Roberts, Michael and Sch{\"o}nlieb, Carola-Bibiane and Thorpe, Matthew},
  journal={arXiv preprint arXiv:2511.13229},
  year={2025}
}

@unpublished{MCAO_2026,
  author       = {Antony Oliver, Mary Chriselda and Esposito, Antonio and Thorpe, Matthew},
  title        = {Consistency of Gradient Flows for {L}aplace Learning},
  note         = {In preparation},
  year         = {2026}
}

@article{calder2023rates,
  title={Rates of convergence for {L}aplacian semi-supervised learning with low labeling rates},
  author={Calder, Jeff and Slep{\v{c}}ev, Dejan and Thorpe, Matthew},
  journal={Research in the Mathematical Sciences},
  volume={10},
  number={1},
  pages={10},
  year={2023},
  publisher={Springer}
}

@article{oliver2025uncertainty,
  title={Uncertainty Quantification in Cost-effectiveness Analysis for Stochastic-based Infectious Disease Models: Insights from Surveillance on Lymphatic Filariasis},
  author={Antony Oliver, Mary Chriselda  and Graham, Matthew and Manolopoulou, Ioanna and Medley, Graham F and Pellis, Lorenzo and Pouwels, Koen B and Thorpe, Matthew and Hollingsworth, T Deirdre},
  journal={Journal of Theoretical Biology},
  pages={112197},
  year={2025},
  publisher={Elsevier}
}

@article{raghvendra2024new,
  title={A New Robust Partial $ p $-{W}asserstein-Based Metric for Comparing Distributions},
  author={Raghvendra, Sharath and Shirzadian, Pouyan and Zhang, Kaiyi},
  journal={arXiv preprint arXiv:2405.03664},
  year={2024}
}

@article{Stoeckius,
  title={Simultaneous epitope and transcriptome measurement in single cells},
  author={Stoeckius, Marlon and Hafemeister, Christoph and Stephenson, William and Houck-Loomis, Brian and Chattopadhyay, Pratip K and Swerdlow, Harold and Satija, Rahul and Smibert, Peter},
  journal={Nature methods},
  volume={14},
  number={9},
  pages={865--868},
  year={2017},
  publisher={Nature Publishing Group US New York}
}

@inproceedings{Liu,
  title={Jointly embedding multiple single-cell omics measurements},
  author={Liu, Jie and Huang, Yuanhao and Singh, Ritambhara and Vert, Jean-Philippe and Noble, William Stafford},
  booktitle={Algorithms in bioinformatics:... International Workshop, WABI..., proceedings. WABI (Workshop)},
  volume={143},
  pages={10},
  year={2019}
}

@article{Demetci,
  title={SCOT: single-cell multi-omics alignment with optimal transport},
  author={Demetci, Pinar and Santorella, Rebecca and Sandstede, Bj{\"o}rn and Noble, William Stafford and Singh, Ritambhara},
  journal={Journal of computational biology},
  volume={29},
  number={1},
  pages={3--18},
  year={2022},
  publisher={Mary Ann Liebert, Inc., publishers 140 Huguenot Street, 3rd Floor New~…}
}

@article{Cao,
  title={Manifold alignment for heterogeneous single-cell multi-omics data integration using Pamona},
  author={Cao, Kai and Hong, Yiguang and Wan, Lin},
  journal={Bioinformatics},
  volume={38},
  number={1},
  pages={211--219},
  year={2022},
  publisher={Oxford University Press}
}

@article{Laetitia_2020,
  title={Partial optimal tranport with applications on positive-unlabeled learning},
  author={Chapel, Laetitia and Alaya, Mokhtar Z and Gasso, Gilles},
  journal={Advances in Neural Information Processing Systems},
  volume={33},
  pages={2903--2913},
  year={2020}
}

@article{Lashos_2019,
  title={Geometric properties of cones with applications on the {H}ellinger--{K}antorovich space, and a new distance on the space of probability measures},
  author={Laschos, Vaios and Mielke, Alexander},
  journal={Journal of Functional Analysis},
  volume={276},
  number={11},
  pages={3529--3576},
  year={2019},
  publisher={Elsevier}
}

@article{Jan_2022,
  title={Your diffusion model secretly knows the dimension of the data manifold},
  author={Stanczuk, Jan and Batzolis, Georgios and Deveney, Teo and Sch{\"o}nlieb, Carola-Bibiane},
  journal={arXiv preprint arXiv:2212.12611},
  year={2022}
}

@article{Lecun_1998,
  title={Gradient-based learning applied to document recognition},
  author={LeCun, Yann and Bottou, L{\'e}on and Bengio, Yoshua and Haffner, Patrick},
  journal={Proceedings of the IEEE},
  volume={86},
  number={11},
  pages={2278--2324},
  year={1998},
  publisher={Ieee}
}

@book{Luenberger_1984,
  title={Linear and nonlinear programming},
  author={Luenberger, David G and Ye, Yinyu and others},
  volume={2},
  year={1984},
  publisher={Springer}
}

@article{Tseng_2001,
  title={Convergence of a block coordinate descent method for nondifferentiable minimization},
  author={Tseng, Paul},
  journal={Journal of optimization theory and applications},
  volume={109},
  pages={475--494},
  year={2001},
  publisher={Springer}
}

@book{Braides_2002,
  title={$\Gamma$-convergence for Beginners},
  author={Braides, Andrea},
  volume={22},
  year={2002},
  publisher={Clarendon Press}
}

@book{Dal_Maso,
  title={An Introduction to $\Gamma$-Convergence},
  author={Dal Maso, Gianni},
  year={1993},
  publisher={Springer.}
}

@inproceedings{UCOOT,
author = {Tran, Quang Huy and Janati, Hicham and Courty, Nicolas and Flamary, R\'{e}mi and Redko, Ievgen and Demetci, Pinar and Singh, Ritambhara},
title = {Unbalanced CO-optimal transport},
year = {2023},
isbn = {978-1-57735-880-0},
publisher = {AAAI Press},
url = {https://doi.org/10.1609/aaai.v37i8.26193},
doi = {10.1609/aaai.v37i8.26193},
abstract = {Optimal transport (OT) compares probability distributions by computing a meaningful alignment between their samples. CO-optimal transport (COOT) takes this comparison further by inferring an alignment between features as well. While this approach leads to better alignments and generalizes both OT and {G}romov-{W}asserstein distances, we provide a theoretical result showing that it is sensitive to outliers that are omnipresent in real-world data. This prompts us to propose unbalanced COOT for which we provably show its robustness to noise in the compared datasets. To the best of our knowledge, this is the first such result for OT methods in incomparable spaces. With this result in hand, we provide empirical evidence of this robustness for the challenging tasks of heterogeneous domain adaptation with and without varying proportions of classes and simultaneous alignment of samples and features across single-cell measurements.},
booktitle = {Proceedings of the Thirty-Seventh AAAI Conference on Artificial Intelligence and Thirty-Fifth Conference on Innovative Applications of Artificial Intelligence and Thirteenth Symposium on Educational Advances in Artificial Intelligence},
articleno = {1124},
numpages = {11},
series = {AAAI'23/IAAI'23/EAAI'23}
}

\end{document}